\definecolor{cite_color}{HTML}{114083}
\definecolor{link_color}{RGB}{0,102,102}
\definecolor{link_color}{RGB}{153, 0,0}  
\definecolor{url_color}{RGB}{153, 102,  0}
\definecolor{emp_color}{RGB}{0,0,255}
\def \exp{\mathrm{exp}}
\providecommand{\customgenericname}{}
\newcommand{\newcustomtheorem}[2]{%
  \newenvironment{#1}[1]
  {%
   \renewcommand\customgenericname{#2}%
   \renewcommand\theinnercustomgeneric{##1}%
   \innercustomgeneric
  }
  {\endinnercustomgeneric}
}
\DeclareMathOperator*{\argmax}{argmax}
\DeclareMathOperator*{\argmin}{argmin}
\newtheorem{theorem}{Theorem}
\newtheorem{proposition}{Proposition}
\newtheorem{definition}{Definition}
\newcommand{\tikzmark}[1]{\tikz[overlay,remember picture] \node (#1) {};}
\newcommand*{\AddNote}[5]{%
    \begin{tikzpicture}[overlay, remember picture]
        \draw [decoration={brace,amplitude=0.5em},decorate,ultra thick,ForestGreen]
            ($(#4)!(#2.north)!($(#4)-(0,1)$)$) --  
            ($(#4)!(#3.south)!($(#4)-(0,1)$)$)
                node [align=center, text width={#1}, pos=0.5, anchor=west] {#5};
    \end{tikzpicture}
}%
\crefname{section}{Section}{Sections}
\crefname{theorem}{Theorem}{Theorems}
\crefname{lemma}{Lemma}{Lemmas}
\crefname{equation}{Equation}{Equations}
\crefname{proposition}{Proposition}{Propositions}
\crefname{claim}{Claim}{Claims}
\crefname{appendix}{Appendix}{Appendices}
\crefname{algorithm}{Algorithm}{Algorithms}
\crefname{figure}{Figure}{Figures}
\crefname{table}{Table}{Tables}
\crefname{remark}{Remark}{Remarks}
\crefname{definition}{Def.}{Definitions}
\crefname{corollary}{Corollary}{Corollaries}
\newcommand{\ours}{\texttt{EquiVSet}\xspace}
\newcommand{\oursdiff}{\texttt{DiffMF}\xspace}
\newcommand{\oursind}{$\texttt{EquiVSet}_\text{ind}$\xspace}
\newcommand{\ourscopula}{$\texttt{EquiVSet}_\text{copula}$\xspace}
\newcommand{\ebm}{$p_\theta (S)$}
\title{Learning Neural Set Functions Under the \\Optimal Subset Oracle}
\author{%
  Zijing Ou$^{1,2}$, \ Tingyang Xu$^{1}$, \ Qinliang Su$^{3}$, \ Yingzhen Li$^{2}$, \ Peilin Zhao$^{1}$, \ Yatao Bian$^{1}$\thanks{Correspondence to: Yatao Bian.} \\
  $^1$Tencent AI Lab, China\\
  $^2$Imperial College London, United Kingdom\\
  $^3$Sun Yat-sen University, China \\
  \texttt{z.ou22@imperial.ac.uk \{tingyangxu,masonzhao\}@tencent.com } \\
  \texttt{suqliang@mail.sysu.edu.cn yingzhen.li@imperial.ac.uk yatao.bian@gmail.com} \\
\setcounter{footnote}{0}
}
\begin{document}

\maketitle

\begin{abstract}
Learning neural set functions becomes increasingly important in many applications like product recommendation and  compound selection in AI-aided drug discovery. 
The majority of existing works study methodologies of set function learning under the function value oracle, which, however, requires  expensive supervision signals. 
This renders it impractical for applications with only weak supervisions under the Optimal Subset (OS) oracle, the study of which is surprisingly overlooked. 
In this work, we present a principled yet practical maximum likelihood learning framework, termed as \ours,\footnote{Code is available at: \url{https://github.com/SubsetSelection/EquiVSet}.}
that simultaneously meets the following desiderata of learning neural set functions under the OS oracle: i) permutation invariance of the set mass function being modeled; ii) permission of varying ground set; iii) minimum prior; and iv) scalability. The main components of our framework involve: an energy-based treatment of the set mass function, DeepSet-style architectures to handle permutation invariance, mean-field variational inference, and its amortized variants. 
Thanks to the elegant combination of these advanced architectures, empirical studies on three real-world applications (including  Amazon product recommendation, set anomaly detection and compound selection for virtual screening) demonstrate that \ours  outperforms the baselines by a large margin. 
\end{abstract}

\section{Introduction}

Many real-world applications involve prediction of set-value outputs, such as recommender systems which output a set of products to customers, anomaly detection that predicts the outliers from the majority of data \citep{zhang2020set}, and compound selection  for virtual screening in drug discovery aims at extracting the most effective compounds from a given compound database \citep{gimeno2019light}. All of these applications implicitly learn a set function \citep{rezatofighi2017deepsetnet,zaheer2017deep} that measures the utility of a given set input, such that the most desirable set output has the highest (or lowest {\it w.l.o.g}) utility value.

More formally, consider a recommender system: given a set of product candidates $V$, it is expected to recommend a subset of products $S^* \subseteq V$ to the user, which would satisfy the user  most, {\it i.e.}, offering the maximum utility to the user. 
We assume the underlying process of determining $S^*$ can be modelled by a  utility function $F_\theta (S;V)$ parameterized by $\theta$,  and the following criteria: 
\begin{align} \label{prob_set_func_learning}
    S^* = \argmax_{S \in 2^{V}} F_\theta (S; V).
\end{align}
There are mainly two settings for learning  the utility function. The first one, namely function value (FV) oracle, targets at learning $F_\theta (S;V)$ to fit the utility explicitly, under the supervision of data in the form of $\{(S_i, f_i)\}$ for a fixed ground set $V$, where $f_i$ is the true utility function value of the subset $S_i$. However, training in this way is prohibitively expensive, since one needs to construct large amounts of supervision signals for a specific ground set $V$ \citep{DBLP:journals/siamcomp/BalcanH18}. 
Here we consider an alternative setting, which learns $F_\theta (S;V)$ in an implicit way. More formally, with the data in form of $\{(V_i, S_i^*)\}_{i=1}^N$, where $S_i^*$ is the optimal subset (OS) corresponding to $V_i$, our goal is to estimate $\theta$ such that for all possible $(V_i, S_i^*)$, it satisfies equation \eqref{prob_set_func_learning}. The OS oracle is arguably more practical than the FV oracle, 
which alleviates the need for explicitly labeling utility values for a large amount of subsets.\footnote{
Notably, learning set functions under the OS oracle is distinct to  that under the FV oracle; the two settings are not comparable in general. 
To illustrate this, one can easily obtain the FV oracle of maximum cut set functions, but fail to specify the OS oracle since it is NP-complete to solve the maximum cut problem \citep[Appendix A2.2]{garey1979computers}. Moreover, even though the OS oracle naturally shows up in the product recommendation scenario, one cannot identify its FV oracle since the true utility values are hard to obtain.}

Though being critical for practical success, related study on set utility function learning under the OS supervision oracle  is surprisingly lacked. The most relevant work is the probabilistic greedy model (PGM), which solves the optimization problem of  \eqref{prob_set_func_learning} with a greedy maximization algorithm \citep{tschiatschek2018differentiable}. Specifically, PGM interprets the maximization algorithm as to construct differentiable distributions over sequences of items in an auto-regressive manner. However, such construction of distributions is problematic for defining distributions on sets due to the dependency on the sampling order. Therefore, they alleviate this issue by enumerating all possible permutations of the sampling sequence (detailed discussion is given in \cref{appendix_pgm}). Such enumerations scale poorly due to the combinatorial cost $\mathcal{O}(|V|!)$, which hinders PGM's applicability to real-world applications.

To learn set functions under the OS oracle, we advocate the maximum likelihood paradigm \citep{stigler1986history}. Specifically, this learning  problem can be viewed from a probabilistic perspective
\begin{flalign} \label{OS_learning_objective}  
    &\argmax_\theta\  \mathbb{E}_{\mathbb{P}(V, S)} [\log p_\theta (S | V)] \\\notag   
    &\operatorname{s.t.} \  p_\theta (S | V) \propto  F_\theta (S ; V), \forall  S \in 2^V, 
\end{flalign}
where the constraint admits the learned set function to obey the objective defined in \eqref{prob_set_func_learning}. 
Given limited data $\{(V_i, S_i^*)\}_{i=1}^N$ sampled from the underlying data distribution ${\mathbb{P}(V, S)}$, one would maximize the empirical log likelihood:  $\sum_{i=1}^N[\log p_\theta (S_i^* | V_i)]$. 
The most important step is to construct a proper set distribution $p_\theta (S | V)$ whose probability mass monotonically grows with the utility function $F_\theta (S;V)$ and satisfy the following additive requirements: (i) {\emph{permutation invariance}}: the probability mass should not change under any permutation of the elements in $S$; (ii) {\emph{varying ground set}}: the function should be able to process input sets of variable size; iii) {\emph{minimum prior}}: we should make no assumptions of the set probability, {\it i.e.,} with maximum entropy, which is equivalent to the uninformative prior \citep{jeffreys1946invariant}; and iv) {\emph{scalibility}}: the learning algorithm should be scalable to large-scale datasets and run in polynomial time.

In this paper, we propose {\textbf{Equi}variant \textbf{V}ariational inference for \textbf{Set} function learning} (\ours), a new method for learning set functions under the OS oracle, which satisfies all the requirements. Specifically, we use an energy-based model (EBM) to construct the set mass function. EBMs are maximum entropy distributions, which satisfies the \emph{minimum prior} requirement.  Moreover, by modeling the energy function with DeepSet-style architectures \citep{zaheer2017deep,lee2019set}, the two requirements, {\it i.e.,} \emph{permutation invariance} and \emph{varying ground set} are naturally satisfied. Unfortunately, the flexibility of EBMs exacerbates the difficulties of learning and inference, since the inputs of set are discrete and lie in an exponentially-large space. To remedy this issue, we develop an approximate maximum likelihood approach which estimates the marginals via the mean-field variational inference, resulting in an efficient training manner under the supervision of OS oracles. 
In order to ensure \emph{scalability}, an amortized inference network with permutation equivariance is proposed, which allows the model to be trained on large-scale datasets. 

Although it may be seen as combining existing components in approximate inference, the proposed framework addresses a surprisingly overlooked problem in the set function learning communities using an intuitive yet effective method.
Our main contributions are summarized below:
\begin{itemize}
    \item We formulate set functions learning problems under the OS supervision oracle using the maximum likelihood principle;
    \item We present an elegant framework based on EBMs which satisfies the four desirable requirements and is efficient both at training and inference stages;
    \item Real-world experiments demonstrate effectiveness of the proposed OS learning framework.
\end{itemize}

\section{Energy-Based Modeling for Set Function Learning}
The first step to solve problem \eqref{OS_learning_objective} is to construct a proper set mass function $p_\theta (S|V)$ monotonically growing with the utility function $F_\theta (S;V)$. There exits countless ways to construct such a probability mass function, such as the sequential modeling in PGM \citep[Section 4]{tschiatschek2018differentiable}.
Here we resort to the energy-based treatment:
\begin{align} \label{energy_prob}
     p_\theta (S|V) = \frac{\mathrm{exp}( F_\theta (S; V))}{Z}, \; Z := \sum\nolimits_{S' \subseteq V}  \mathrm{exp}( F_\theta (S'; V)),
\end{align}
where the utility function $F_\theta (S; V)$ stands for the negative energy, with higher utility representing lower energy. The energy-based treatment is attractive, partially due to its maximum entropy ({\it i.e.}, minimum prior) property. That is, it assumes nothing about what is unknown, which is known as the ``noninformative prior'' principle in Bayesian modeling \citep{jeffreys1946invariant}. This basic principle is, however, violated by the set mass function defined in PGM. We refer detailed motivation of the energy-based modeling to \cref{proof_maximum_entropy}.

In addition to the \emph{minimum prior}, the energy-based treatment also enables the set mass function $p_\theta (S|V)$ to meet the other two requirements, {\it i.e.} \emph{permutation invariance} and \emph{varying ground set}, by deliberately designing a suitable set function $F_\theta (S;V)$. 
However, modeling such a proper function is nontrivial, since classical feed-forward neural networks (e.g., the ones designed for submodular set functions \citep{DBLP:journals/corr/BilmesB17}) violate both two criteria, which restricts their applicability to the problems involving a set of objects. Fortunately,  \citet{zaheer2017deep} sidestep this issue by introducing a novel architecture, namely DeepSet. They theoretically prove the following \namecref{permutation_invariant}.
\begin{proposition} \label{permutation_invariant}
All permutation invariant set functions can be decomposed in the form $f(S) = \rho \left( \sum_{s \in S} \kappa(s) \right)$, for suitable transformations $\kappa$ and $\rho$.
\end{proposition}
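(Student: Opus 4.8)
The plan is to establish the two directions of this characterization separately, with the only substantive work being the "only if" direction (every permutation-invariant function admits such a decomposition). The easy direction first: for \emph{any} fixed choices of $\kappa$ and $\rho$, the map $S \mapsto \rho\!\left(\sum_{s \in S}\kappa(s)\right)$ is permutation invariant, since the inner sum is taken over the \emph{set} $S$ and addition is commutative, so the value is manifestly independent of any ordering imposed on the elements. This requires no more than one line.

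For the nontrivial direction I would treat the canonical case where the elements come from a countable universe, which one can encode as $\mathbb{N}$ (or a finite set $\{1,\dots,M\}$; the argument is cleanest there and is exactly what is needed for the discrete set-function setting of this paper). The key idea is to choose $\kappa$ to be a \emph{Sudoku-style} encoding that makes the multiset-sum injective: take $\kappa(s) = 4^{-s}$ (or any base that beats the maximal multiplicity), so that $\Phi(S) := \sum_{s\in S}\kappa(s)$ has a unique base-$4$ expansion from which the indicator vector of $S$ can be read off digit by digit. Concretely, since each element appears at most once in a set, the digits of $\Phi(S)$ are all in $\{0,1\}$ and encode membership. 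Hence $\Phi$ is injective on $2^{V}$, so it has a left inverse $\Phi^{-1}$ defined on its image; we then set $\rho := f \circ \Phi^{-1}$ on $\mathrm{range}(\Phi)$ and extend $\rho$ arbitrarily (say by zero) off that set. By construction $\rho\!\left(\sum_{s\in S}\kappa(s)\right) = f(\Phi^{-1}(\Phi(S))) = f(S)$, which is the claimed decomposition.

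I would then remark on scope. The clean statement above is for sets drawn from a countable ground set and for real-valued $f$; this is the regime relevant to \eqref{OS_learning_objective}, where $V$ is finite and we build $F_\theta(S;V)$ through such a sum-decomposition. The case of uncountable element spaces, or of functions on \emph{all} finite subsets of $\mathbb{R}$ simultaneously, is more delicate and needs either a bound on the cardinality of the sets or continuity/measurability hypotheses to produce a single working $\kappa$; I would point to \citet{zaheer2017deep} for that refinement rather than reprove it, since our architectures only instantiate the finite case.

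The main obstacle is purely the injective-encoding step: one must be careful that the base chosen for $\kappa$ strictly exceeds the largest possible per-element multiplicity (which is $1$ for sets, but would be larger for multisets), so that no carrying occurs and the digit-extraction genuinely recovers $S$; and one must be explicit that $\rho$ need only be defined consistently on $\mathrm{range}(\Phi)$, with no regularity required there. Everything else — commutativity of the sum for invariance, composition for the reconstruction — is routine.
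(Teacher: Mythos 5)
Your argument is correct and is essentially the standard one: the paper itself does not reprove this proposition but quotes it from \citet{zaheer2017deep}, whose proof of the countable case proceeds exactly as you do — an injective digit-style encoding $\kappa$ making $\Phi(S)=\sum_{s\in S}\kappa(s)$ invertible on its range, followed by $\rho = f\circ\Phi^{-1}$, with commutativity of addition giving the converse direction. Your caveat about uncountable element spaces (where extra hypotheses or bounded set sizes are needed) is also consistent with the scope of the original result, so there is no gap relative to what the paper relies on.
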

By combining the energy-based model in \eqref{energy_prob} with DeepSet-style architectures, we could construct a valid set mass function to meet two important criteria: \emph{permutation invaraince} and \emph{varying ground set}. However, the flexibility of EBMs exacerbates the difficulties of learning and inference, since the partition function $Z$ is typically intractable and the input of sets is undesirably discrete. 

\section{Approximate Maximum Likelihood Learning with OS  Supervision Oracle}
In this section, we explore an effective framework for learning set functions under the supervision of optimal subset oracles. We start with discussing the principles for learning parameter $\theta$, followed by discussing the detailed inference method for discrete EBMs.

\subsection{Training Discrete EBMs Under the Guidance of Variational Approximation}
For discrete data, {\it e.g.,} set, learning the parameter $\theta$ in \eqref{energy_prob} via maximum likelihood is notoriously difficult. Although one could apply techniques, such as ratio matching \citep{lyu2012interpretation}, noise contrastive estimation \citep{tschiatschek2016learning}, and contrastive divergence \citep{carreira2005contrastive}, they generally suffer from instability on high dimensional data, especially when facing very large ground set in real-world applications. 
Instead of directly maximizing the log likelihood, we consider an alternative optimization objective that is computationally preferable. Specifically, we first fit a variational approximation to the EBM by solving
\begin{align} \label{variational_approximate}
    \boldsymbol{\psi}^* =  \argmin_{\boldsymbol{\psi}} D(q(S;\boldsymbol{\psi}) || p_\theta (S)),
\end{align}
where $D(\cdot || \cdot)$ is a discrepancy measure between two distributions, \ebm\footnote{Here we omit the condition $V$ for brevity. In some specific context, it would be helpful to regard subset $S$ as a binary vector, {\it i.e.}, $S := \{0,1\}^{|V|}$ with the $i$-th element equal to $1$ meaning $i \in S$ and $0$ meaning $i \not\in S$.} is the EBM defined in \eqref{energy_prob}, and $q(S;\boldsymbol{\psi})$ denotes the mean-field variational distribution with the parameter $\boldsymbol{\psi} \in [0,1]^{|V|}$ standing for the odds that each item $s \in V$ shall be selected in the optimal subset $S^*$. Note that the optimal parameter $\boldsymbol{\psi}^*$ of \eqref{variational_approximate} can be viewed as a function of $\theta$. In this regard, we can optimize the parameter $\theta$ by minimizing the following cross entropy loss,\footnote{This objective would suffer from label-imbalanced problem when the size of OS is too small. In practice, we can apply negative sampling to overcome this problem: we randomly select a negative set $N_i \subseteq V_i \backslash S^*_i$ with the size of $|S^*|$, and train the model with an alternative objective $\sum\limits_{i} \!-\!\! \sum\limits_{j \in S^*_i} \! \log \psi_j^* \!-\!\! \sum\limits_{j \in N_i } \! \log (1 - \psi_j^*)$.} which is well-known to be implementing the maximum likelihood estimation \citep{goodfellow2016deep} {\it w.r.t.} the surrogate distribution $q(S;\boldsymbol{\psi}^*)$,
\begin{align} \label{cross_entropy}
    \mathcal{L}(\theta;\boldsymbol{\psi}^*) = \mathbb{E}_{\mathbb{P}(V,S)}[-\log q(S;\boldsymbol{\psi}^*)] \approx \frac{1}{N} \sum\limits_{i=1}^N  \left( - \sum\limits_{j \in S_i^*} \log \psi_j^* - \sum\limits_{j \in V_i \backslash S_i^*} \log (1 - \psi_j^*) \right).
\end{align}
This is also known as the marginal-based loss \citep{domke2013learning}, which trains probabilistic models by evaluating them using the marginals approximated by an inference algorithm.
Despite not exactly bounding the log-likelihood of \eqref{energy_prob}, this objective, as pointed out by \citet{domke2013learning}, benefits from taking the approximation errors of inference algorithm into account while learning.
However, minimizing \eqref{cross_entropy} requires the variational parameter $\boldsymbol{\psi}^*$ being differentiable {\it w.r.t.} $\theta$. Inspired by the differentiable variational approximation to the Markov Random Fields  \citep{krahenbuhl2013parameter,zheng2015conditional,dai2016discriminative}, below, we extend this method to the deep energy-based formulation, which admits an end-to-end training paradigm with the back-propagation algorithm.

\subsection{Differentiable Mean Field Variational Inference}

\begin{figure*}[!t]
\vspace{-5mm}
\centering
\begin{minipage}[t]{0.35\linewidth}
\centering
\begin{algorithm}[H]
\caption{$\operatorname{MFVI}(\boldsymbol{\psi},V,K)$} \small
\label{alg:MFI} 
\begin{algorithmic}[1] 
	\FOR{$k \!\leftarrow\! 1,\dots,K$}
	\FOR{$i \!\leftarrow\! 1,\dots,|V|$ in parallel}
    \STATE sample $m$ subsets \\ $\!\!\!\!\! S_n \!\!\sim\!\! q(S;(\boldsymbol{\psi}^{(k\!-\!1)} | \psi_i^{(k\!-\!1)} \!\leftarrow\! 0))$ 
    \STATE update variational parameter \\ $\!\!\!\!\!\!\!\!\!\!\!\!\boldsymbol{\psi}_i^{(k)} \!\!\! \leftarrow \!\! \sigma (\! \frac{1}{m} \!\!\! \sum\limits_{n=1}^m \! [F_\theta (S_n \!\!+\! i) \!\!-\!\! F_\theta (S_n) ])$ 
    \vspace{-2.9mm}
    \ENDFOR 
    \ENDFOR 
\end{algorithmic}
\end{algorithm}
\end{minipage}
\begin{minipage}[t]{0.31\linewidth}
\centering
\begin{algorithm}[H]
\caption{$\operatorname{DiffMF}(V,S^*)$} \small
\label{alg:diffMF} 
\begin{algorithmic}[1] 
	\STATE initialize variational parameter \\ $\boldsymbol{\psi}^{(0)} \leftarrow 0.5 * \mathbf{1}$
	\STATE compute the marginals \\ $\boldsymbol{\psi}^* \leftarrow \operatorname{MFVI}(\boldsymbol{\psi}^{(0)},V,K)$ 
	\STATE update parameter $\theta$ using \eqref{cross_entropy} \\ $\theta \leftarrow \theta - \eta \nabla_\theta \mathcal{L}(\theta; \boldsymbol{\psi}^*)$ 
	\vspace{8.5mm} 
\end{algorithmic}
\end{algorithm}
\end{minipage}
\begin{minipage}[t]{0.32\linewidth}
\centering
\begin{algorithm}[H]
\caption{$\operatorname{EquiVSet}(V,S^*)$} \small
\label{alg:EAVI} 
\begin{algorithmic}[1] 
	\STATE update parameter $\phi$ using \eqref{elbo} \\ $\phi \leftarrow \phi + \eta \nabla_\phi \textsc{ELBO}(\phi)$
	\STATE initialize variational parameter \\ $\boldsymbol{\psi}^{(0)} \leftarrow \operatorname{EquiNet}(V;\phi)$
	\STATE one step fixed point iteration \\ $\boldsymbol{\psi}^* \!\leftarrow\! \operatorname{MFVI}(\boldsymbol{\psi}^{(0)},V,K=1)$ 
	\STATE update parameter $\theta$ using \eqref{cross_entropy} \\ $\theta \leftarrow \theta - \eta \nabla_\theta \mathcal{L}(\theta;\boldsymbol{\psi}^*)$ \vspace{1.5mm}
\end{algorithmic}
\end{algorithm}
\end{minipage} 
\caption{The main components and algorithms in our framework. Note that \oursdiff and \ours are for one training sample only. Detailed and self-contained  descriptions of each component of these algorithms are presented in \cref{appendix_pseudo_code_4_EquivSet_copula}.}
\vspace{-4mm}
\end{figure*}

To solve the optimization problem \eqref{variational_approximate}, we need to specify the variational distribution $q(S;\boldsymbol{\psi})$ and the divergence measure $D(\cdot || \cdot)$, such that the optimum marginal $\boldsymbol{\psi}^*$ is differentiable {\it w.r.t.} the model parameter $\theta$. A natural choice is to restrain $q(S;\boldsymbol{\psi})$ to be fully factorizable, which leads to a mean-field approximation of \ebm. The simplest form of $q(S;\boldsymbol{\psi})$ would be a $|V|$ independent Bernoulli distribution, {\it i.e.}, $q(S;\boldsymbol{\psi}) = \prod_{i \in S}\psi_i \prod_{i \not\in S}(1-\psi_i), \boldsymbol{\psi}\in[0,1]^{|V|}$. Further restricting the discrepancy measure $D(q||p)$ to be the Kullback-Leibler divergence, we recover the well-known mean-field variational inference method.
It turns out that minimizing the KL divergence amounts to maximizing the evidence lower bound (\textsc{ELBO})
\begin{align} \label{elbo}
    \min_{\boldsymbol{\psi}} \mathbb{KL}(q(S,\boldsymbol{\psi}) || p_\theta (S))
    \quad \Leftrightarrow \quad \max_{\boldsymbol{\psi}} f_{\mathrm{mt}}^{F_\theta} (\boldsymbol{\psi}) + \mathbb{H}(q(S;\boldsymbol{\psi})) =: \textsc{ELBO},
\end{align}
where $f_{\mathrm{mt}}^{F_\theta}(\boldsymbol{\psi})$ is the multilinear extension of $F_\theta (S)$ \citep{calinescu2007maximizing}, which is defined as
\begin{align} \label{multilinear_extension}
    f_{\mathrm{mt}}^{F_\theta} (\boldsymbol{\psi}) :=\! \sum\limits_{S \subseteq V} \! F_\theta (S) \prod_{i \in S} \! \psi_i \! \prod_{i \not\in S} \! (1 - \psi_i), \boldsymbol{\psi} \in [0,1]^{|V|}.
\end{align}
To maximize the ELBO in \eqref{elbo}, one can apply the fixed point iteration algorithm. Specifically, for coordinate $\psi_i$, the partial derivative of the multilinear extension is $\nabla_{\psi_i} f_{\mathrm{mt}}^{F_\theta} (\boldsymbol{\psi})$, and for the entropy term, it is $\nabla_{\psi_i} \mathbb{H}(q) = \log \frac{1-\psi_i}{\psi_i}$. Thus, the stationary condition of maximizing ELBO is $\psi_i = \sigma(\nabla_{\psi_i} f_{\mathrm{mt}}^{F_\theta} (\boldsymbol{\psi})), i = 1, \dots, |V|$, where $\sigma$ is the sigmoid function, which means $\psi_i$ should be updated as $\psi_i \leftarrow \sigma(\nabla_{\psi_i} f_{\mathrm{mt}}^{F_\theta} (\boldsymbol{\psi}))$. This analysis leads to the traditional mean field iteration, which updates each coordinate one by one (detailed derivation in \cref{proof_elbo}). In this paper, we suggest to update $\boldsymbol{\psi}$ in a batch manner, which is more efficient in practice. More specifically, we summarize the mean field approximation as the following fixed-point iterative update steps
\begin{align}
    & \boldsymbol{\psi}^{(0)} \leftarrow \operatorname{Initialize \ in \ } [0,1]^{|V|}, \\
   &  \boldsymbol{\psi}^{(k)} \leftarrow (1 + \mathrm{exp}(- \nabla_{\boldsymbol\psi^{(k-1)}} f_{\mathrm{mt}}^{F_\theta} (\boldsymbol{\psi}^{(k-1)})))^{-1},  \label{fpi-formula} \\
   &  \boldsymbol{\psi}^* \leftarrow \boldsymbol{\psi}^{(K)}.
\end{align}
We denote the above iterative steps as a function termed as $\operatorname{MFVI} (\boldsymbol{\psi},V,K)$, which takes initial vairational parameter $\boldsymbol{\psi}$, ground set $V$, and number of iteration steps $K$ as input, and outputs the parameter $\boldsymbol{\psi}^*$ after $K$ steps. Note that, $\operatorname{MFVI} (\boldsymbol{\psi},V,K)$ is differentiable {\it w.r.t.} the parameter $\theta$, since each fixed-point iterative update step is differentiable. Thereby, one could learn $\theta$ by minimizing the cross entropy loss in \eqref{cross_entropy}. However, the computation complexity raises from the derivative of multilinear extension $f_{\mathrm{mt}}^{F_\theta} (\boldsymbol{\psi})$ defined in \eqref{multilinear_extension}, which sums up all the possible subsets in the space of size $2^{|V|}$. Fortunately, the gradient $\nabla_{\boldsymbol\psi} f_{\mathrm{mt}}^{F_\theta}$ can be estimated efficiently via Monte Carlo approximation methods, since the following equation holds.
\begin{align} \label{mc_gradient}
    \nabla_{\psi_i} f_{\mathrm{mt}}^{F_\theta} (\boldsymbol{\psi}) = \mathbb{E}_{q(S;(\boldsymbol{\psi} | \psi_i \leftarrow 0))} \left[ F_\theta (S + i) - F_\theta (S) \right],
\end{align}
in which we use $S+i$ to denote the set union $S \cup \{i\}$.
Detailed derivation is provided in \cref{proof_gradient_mt}. According to  \eqref{mc_gradient}, we can estimate the partial derivative $\nabla_{\psi_i} f_{\mathrm{mt}}^{F_\theta}$ via Monte Carlo approximation:
i) sample $m$ subsets $S_n, n=1,\dots,m$ from the surrogate distribution $q(S;(\boldsymbol{\psi} | \psi_i \leftarrow 0))$; ii) approximate the expectation by the average $\frac{1}{m} \sum_{k=1}^n [F_\theta (S_n + i) - F_\theta (S_n)]$.
After training, the OS for a given ground set can be sampled via rounding $\boldsymbol{\psi}^*$, which is the optimal variational parameter after $K$-steps mean-field iteration, {\it i.e.,} $\boldsymbol{\psi}^* = \operatorname{MFVI}(\boldsymbol{\psi}, V, K)$, and stands for the probability of each element in the ground set should be sampled.\footnote{Here we simply apply the topN rounding, but it is worthwhile to explore other rounding methods as a future work.}
We term this method as \textbf{Diff}erentiable \textbf{M}ean \textbf{F}ield (\oursdiff) and summarize the training and inference process in Algorithm \ref{alg:diffMF} and \ref{alg:MFI}, respectively.

\section{Amortizing Inference with Equivariant Neural Networks}
Although \oursdiff can learn set function $F_\theta$ in an effective way, it undesirably has two notorious issues: i) the computation is in general prohibitively expensive, since \oursdiff involves a typically expensive sampling loop per data point; ii) some information regarding interactions between elements is discarded, since \oursdiff assumes a fully fatorizable variational distribution. In this section, we first propose to amortize the inference process with an additional recognition neural network, and then extend it to considering correlation for more accurate approximations.

\subsection{Equivariant Amortized Variational Inference}
To enable training the proposed model on a large-scale dataset,  we propose to amortize the approximate inference process with an additional recognition neural network which outputs parameter $\boldsymbol{\psi}$ for the variational distribution $q_\phi(S;\boldsymbol{\psi})$,\footnote{With a slight abuse of notations, we use the same symbol here as in \eqref{elbo}.} where $\phi$ denotes the parameter of neural networks. A proper recognition network involving set objects shall satisfy the property of \emph{permutation equivariance}.
\begin{definition}
A function $f: \mathcal{X}^d \rightarrow \mathcal{Y}^d$ is called permutation equivalent when upon permutation of the input instances permutes the output labels, {\it i.e.,} for any permutation $\pi$: $f( \pi ([x_{1}, \dots, x_{d}]) ) = \pi ( f( [x_1,\dots, x_d] ) )$.
\end{definition}
\cite{zaheer2017deep} propose to formulate the \emph{permutation equivariant} architecture as :
\begin{align} \label{deepse_equivariance}
    f_i (S) = \rho \left(\lambda \kappa(s_i) + \gamma \sum\nolimits_{s \in S} \kappa(s) \right),
\end{align}
where $s_i$ denotes the $i^{\mathrm{th}}$ element in the set $S$, $\lambda, \gamma$ are learnable scalar variables, and $\rho, \kappa$ are any proper transformations. Note that the output value of $f_i: 2^V \rightarrow [0,1]$ is relative to the $i^{\mathrm{th}}$ coordinate, but not the order of the elements in $S$. Thus the equivariant recognition network, denoted as $\boldsymbol{\psi} = \operatorname{EquiNet}(V;\phi): 2^V \rightarrow [0,1]^{|V|}$, can be defined as $\operatorname{EquiNet}_i := f_i$, which takes the ground set $V$ as input and outputs the distribution parameter $\boldsymbol{\psi}$ for $q_\phi(S;\boldsymbol{\psi})$. 

\begin{figure}[t]
\centering
\includegraphics[width=\linewidth]{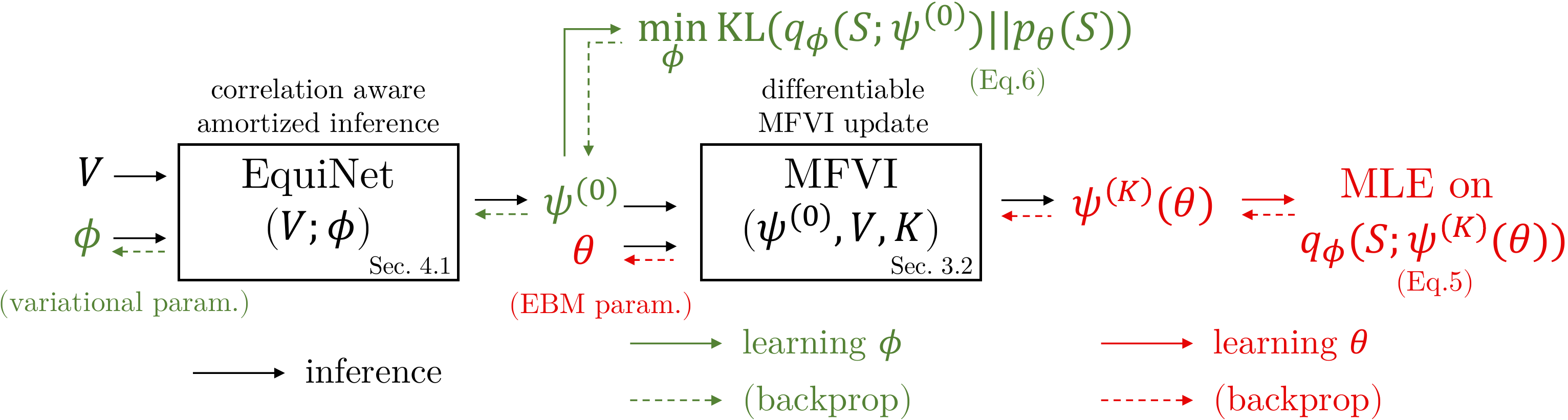}
\caption{Overview of the training and inference processes of \ours.}
\label{fig:equivset}
\vspace{-4mm}
\end{figure}

\subsection{Correlation-aware Inference with Gaussian Copula}
Due to the mean-field assumption, the proposed variational distribution cannot model the interactions among elements in the input set.
We address this issue by introducing Gaussian copula \citep{nelsen2007introduction,tran2015copula,suh2016gaussian,wang2020relaxed}, which is a cumulative distribution function (CDF) of random variables $(u_1 , \dots, u_{|V|})$ over the unit cube $[0,1]^{|V|}$, with $u_i \sim \operatorname{Uniform}(0,1)$. More formally, given a covariance matrix $\boldsymbol{\Sigma}$, the Gaussian copula $C_{\boldsymbol{\Sigma}}$ with parameter $\boldsymbol{\Sigma}$ is defined as
\begin{align}
    C_{\boldsymbol{\Sigma}}(u_1, \cdots, u_{|V|}) = \Phi_{\boldsymbol{\Sigma}} \left(\Phi^{-1}(u_1), \cdots , \Phi^{-1}(u_{|V|})\right), \nonumber
\end{align}
where $\Phi_{\boldsymbol{\Sigma}}$ stands for the joint CDF of a Gaussian distribution with zero mean and covariance matrix $\boldsymbol{\Sigma}$, and $\Phi^{-1}$ is the inverse CDF of standard Gaussian. With the location parameter $\boldsymbol{\psi}$ output by $\operatorname{EquiNet}(V;\phi)$, we can induce correlation into the Bernoulli distribution via the following way: i) sample an auxiliary noise $\boldsymbol{g} \sim \mathcal{N}(\boldsymbol{0}, \boldsymbol{\Sigma})$; ii) apply element-wise Gaussian CDF $\boldsymbol{u} = \boldsymbol{\Phi}_{\operatorname{diag}(\boldsymbol{\Sigma})}(\boldsymbol{g})$; iii) obtain binary sample via $\boldsymbol{s} = \mathbb{I}(\boldsymbol{\psi} \leq \boldsymbol{u})$,\footnote{Here $\boldsymbol{s}$ is a binary vector $\{0,1\}^{|V|}$ with the $i$-th element equal to $1$ meaning $i \in S$ and $0$ meaning $i \not\in S$.} where $\boldsymbol{\psi} \leq \boldsymbol{u}$ means $\forall i, \psi_i \leq u_i$, $\mathbb{I}(\cdot)$ is the indicator function, and $\operatorname{diag}(\boldsymbol{\Sigma})$ returns the diagonal matrix of $\boldsymbol{\Sigma}$. In practice, the covariance matrix $\boldsymbol{\Sigma}$ could be generated by another neural network with the input ground set. We refer the discussion on it to \cref{appendix_lower_rank_per}, and demonstrate how to efficiently construct and sample from a non-diagonal Gaussian distribution, while retaining a \emph{permutation equivariant} sampling process. 

To learn the parameters of the variational distribution, one can maximize the ELBO objective in \eqref{elbo}. However, the \textsc{ELBO} has no differentiable closed-form expression {\it w.r.t.} $\phi$.\footnote{For correlation-aware inference, the variational parameter $\phi$ consists of two parts: i) $\phi$ of the $\operatorname{EquiNet}(V;\phi)$ and ii) $\boldsymbol{\Sigma}$ of the Gaussian copula.} To remedy this issue, we relax the binary variable $\boldsymbol{s}$ to a continuous one by applying the Gumbel-Softmax trick \citep{jang2016categorical,wang2020relaxed}, resulting in an end-to-end training process with backpropagation.

\subsection{Details of Training and Inference}
Our model consists of two components: the EBM \ebm and the variational distribution $q_\phi (S;\boldsymbol{\psi})$. 
As shown in \cref{fig:equivset}, these two components are trained in a cooperative learning fashion \citep{xie2018cooperative}.
Specifically, we train the variational distribution $q_\phi$ with fixed $\theta$ firstly by maximizing the \textsc{ELBO} in \eqref{elbo}. To train the energy model $p_\theta$, we first initialize the variational parameter $\boldsymbol{\psi}^{(0)}$ with the output of equivariant recognition network $\operatorname{EquiNet}(V;\phi)$. This enables us to get a more accurate variational approximate, since $q_\phi$ has modeled the correlation among the elements in the set. 
Notice that $\boldsymbol{\psi}^{(0)}$ does not depend on $\theta$ directly.
To learn $\theta$, we take one further  step of mean-field iteration $\operatorname{MFVI}(\boldsymbol{\psi}^{(0)}, V,1)$, which flows the gradient through $\theta$ and  enables  to optimize $\theta$ using the cross entropy loss in \eqref{cross_entropy} ({\it i.e.,} if we skip step 3 in \cref{alg:EAVI}, and feed $\boldsymbol{\psi}^{(0)}$ to step 4, the gradient would not flow through $\theta$).
However, if we take multiple steps, it inclines to converge to the local optima that is the same as the original mean-field iteration. As a result, the benefit of correlation-aware inference provided by the Gaussian copula would be diminished. Detailed analysis is provided in \cref{sensitive-analysis}. 
The training procedure is summarized in \cref{alg:EAVI} (the complete version is given in \cref{appendix_pseudo_code_4_EquivSet_copula}).

For \emph{inference} in the test time, given a ground set $V$, we initialize the variational parameter via $\boldsymbol{\psi}^{(0)} = \operatorname{EquiNet}(V; \phi)$, then run one step mean-field iteration $\boldsymbol{\psi}^* \leftarrow \operatorname{MFVI}(\boldsymbol{\psi}^{(0)}, V,1)$. Finally, the corresponding OS  is obtained by applying the topN rounding method.
We term our method as \textbf{Equi}variant \textbf{V}ariational Inference for \textbf{Set} Function Learning (\ours), and respectively use \oursind and \ourscopula to represent two variants with independent and copula variational posterior, respectively.

\section{Related Work}

\textbf{Set function learning.} 
There is a growing literature on learning set functions with deep neural networks. \citet{zaheer2017deep} designed the DeepSet architecture to create permutation invariant and equivariant function for set prediction. \citet{lee2019set} enhanced model ability of DeepSet by employing transformer layer to introduce correlation among instances of set, and \citet{horn2020set} extended this framework for time series. It is noteworthy that they all learn set functions under  the function value oralce and can be employed as the backbone of  the utility function $F_\theta (S;V)$ in our model. 
\cite{DBLP:conf/nips/DolhanskyB16,DBLP:journals/corr/BilmesB17, DBLP:journals/eswa/GhadimiB20}  have also designed deep architectures for submodular set functions, however, these designs can not handle the varying ground set requirement.
There are papers studying the learnability of specific  set functions (e.g., submodular functions and subadditive functions) in a distributional learning setting \citep{DBLP:journals/jmlr/BalcanCIW12,DBLP:conf/soda/BadanidiyuruDFKNR12, DBLP:journals/siamcomp/BalcanH18} under the function value oracle, they mainly provide 
sample complexity with inapproximability results  under the probably mostly approximately correct (PMAC)  learning model.  
Other methods relevant to our setting are TSPN \citep{kosiorek2020conditional} and DESP \citep{zhang2020set}. However they both focused on generating set objects under a given condition. While we aim at predicting under   the optimal subset oracle.

\textbf{Energy-based modeling.} 
Energy based learning \citep{lecun2006tutorial} is a classical framework to model the underlying distribution over data. 
Since it makes no assumption of data, energy-based models are extremely flexible and have been applied to wide ranges of domains, such as data generation \citep{nijkamp2019learning}, out-of-distribution detection \citep{liu2020energy}, game-theoretic valuation algorithms \citep{bian2022energy} and biological structure prediction \citep{shi2021learning}. Learning EBMs can be done by applying some principled methods, like contrastive divergence \citep{hinton2002training}, score matching \citep{hyvarinen2005estimation}, and ratio matching \citep{lyu2012interpretation}. For inference, gradient-based MCMC methods \citep{welling2011bayesian,grathwohl2021oops} are  widely exploited. Meanwhile,  \cite{pmlr-v97-bian19a,pmlr-v119-sahin20a} propose provable mean-filed inference algorithms for a class of  EBMs with supermodular energies (also called probabilistic log-submodular models). 
In this paper, we train  EBMs under the supervision of OS oracle by running mean-field inference. 

\textbf{Amortized and Copula variational inference.}
Instead of approximating separate variables for each data point, amortized variational inference (VI) \citep{kingma2013auto} assumes that the variational parameters can be predicted by a parameterized function of the data \citep{zhang2018advances}. The idea of amortized VI has been widely applied in deep probabilistic models \citep{hoffman2013stochastic,garnelo2018neural}. Although this procedure would introduce an amortization gap \citep{cremer2018inference}, which refers to the suboptimality of variational parameters, amortized VI enables significant speedups and combines probabilistic modeling with the representational power of deep learning. Copula is the other method to improve the representational power for VI. \citet{tran2015copula} used copula to augment the mean-field VI for better posterior approximation. \citet{suh2016gaussian} adopted Gaussian copula in VI to model the dependency structure of observed data. 
Moreover, \cite{wang2020relaxed} leveraged Gaussian copula to introduce correlation among discrete latent variables, addressing a problem that is closely related to our setting.

\section{Empirical Studies} \label{sec-experiments}

We evaluate the proposed methods on various tasks: product recommendation, set anomaly detection, compound selection, and synthetic experiments. All experiments are repeated five times with different random seeds and their means and standard deviations are reported. The model architectures and training details are deferred to \cref{appendix_exp_details}.
Additional experiments of varying ground set are given in \cref{exp-varying-ground-set}. Comparisons with Set Transformer \citep{lee2019set} are in \cref{appendix_comp_set_trans}.
Ablation studies on hyper-parameter choices (e.g.~MFVI iteration steps, number of MC samples, rank of perturbation, temperature of Gumbel-Softmax) are provided in \cref{sensitive-analysis}. 

\textbf{Evaluations.} We evaluate the methods using the  mean Jaccard coefficient (MJC). Specifically, for each sample $(V, S^*)$, denoting the corresponding model predict as $S^\prime$, the Jaccard coefficient is defined as $ \operatorname{JC}(S, S') = \frac{|S^\prime \cap S|}{|S^\prime \cup S|}$.
Then the MJC metric can be computed by averaging over all samples  in the test set: $\operatorname{MJC} = \frac{1}{|\mathcal{D}_t|} \sum_{(V, S^*) \in \mathcal{D}_t} \operatorname{JC}(S^*, S')$.

\textbf{Baselines.} We compare our solution variants, {\it i.e.,} \oursdiff, \oursind, and \ourscopula to the following three baselines:

    - Random: The expected performance of random guess. This baseline provides an estimate of how difficult the task is.
    Specifically, given a data point $(V, S^*)$, it can be computed as $\mathbb{E}(JC(V, S^*)) = \sum_{k=0}^{|S^*|} \frac{ \binom{|S^*|}{k} \binom{|V| - |S^*|}{|S^*| - k} }{\binom{|V|}{|S^*|}} \frac{k}{2|S^*| - k}$.

    - PGM (\citet{tschiatschek2018differentiable}, see \cref{appendix_pgm}): The probabilistic greedy model, which is permutation invariant but computationally prohibitive. 
    
    - DeepSet (NoSetFn) \citep{zaheer2017deep}: The deepset architecture, satisfying permutation invariant, is the backbone of our models. Its adapted version: $2^V \rightarrow [0,1]^{|V|}$, which serves as the amortized networks in \ours, could work as a baseline since its output stands for the probability of which instance should be selected. We train it with cross entropy loss and sample the subset via $\operatorname{topN}$ rounding. The term ``NoSetFn'' is used to emphasize that this baseline does not learn a set function explicitly, although it can be adapted to our empirical studies.
    
\paragraph{Synthetic Experiments.}  \label{exp-synthetic}
We demonstrate the effectiveness of our models on learning set functions with two synthetic datasets: the two-moons dataset with additional noise of variance $\sigma^2 = 0.1$, and mixture of Gaussians $\frac{1}{2} \mathcal{N}( \boldsymbol{\mu}_0, \boldsymbol{\Sigma}) + \frac{1}{2} \mathcal{N}(\boldsymbol{\mu}_1, \boldsymbol{\Sigma})$, with $\boldsymbol{\mu}_0 = [\frac{1}{\sqrt{2}}, \frac{1}{\sqrt{2}}]^T$, $\boldsymbol{\mu}_1 = -\boldsymbol{\mu}_0$, $\boldsymbol{\Sigma} = \frac{1}{4}\mathbf{I}$. Take the Gaussian mixture as an example, the data generation procedure is as follow: i) select index: $b \sim Bernoulli(\frac{1}{2})$; ii) sample $10$ points from $\mathcal{N}( \boldsymbol{\mu}_b, \boldsymbol{\Sigma})$ to construct $S^*$; iii) sample $90$ points for $V\backslash S^*$ from $\mathcal{N}( \boldsymbol{\mu}_{1-b}, \boldsymbol{\Sigma})$. We collect $1,000$ samples  for training, validation, and test, respectively.

A qualitative result of the \ourscopula is shown in \cref{fig:qualitative_res_synthetic}, where the green dots represent correct model predictions, the red crosses are incorrect model predictions, and the yellow triangles represent the data points in the subset oracle $S^*$ that are missed by the model. One can see  that the most confusing points are located at the intersection of two components. We also illustrate the quantitative results in \cref{tab:sythetic_quantitative_res}. As expected, our methods  
achieve significantly better performance over other methods, with averaged $59.16\%$ and $100.07\%$ improvements compared to PGM on the Two-Moons and Gaussian-Mixture datasets, respectively.

\makeatletter
\newcommand\figcaption{\def\@captype{figure}\caption}
\newcommand\tabcaption{\def\@captype{table}\caption}
\makeatother
\begin{figure}[!t]
    \begin{minipage}[!b]{0.52\linewidth}
        \centering
		{\includegraphics[width=1.3in]{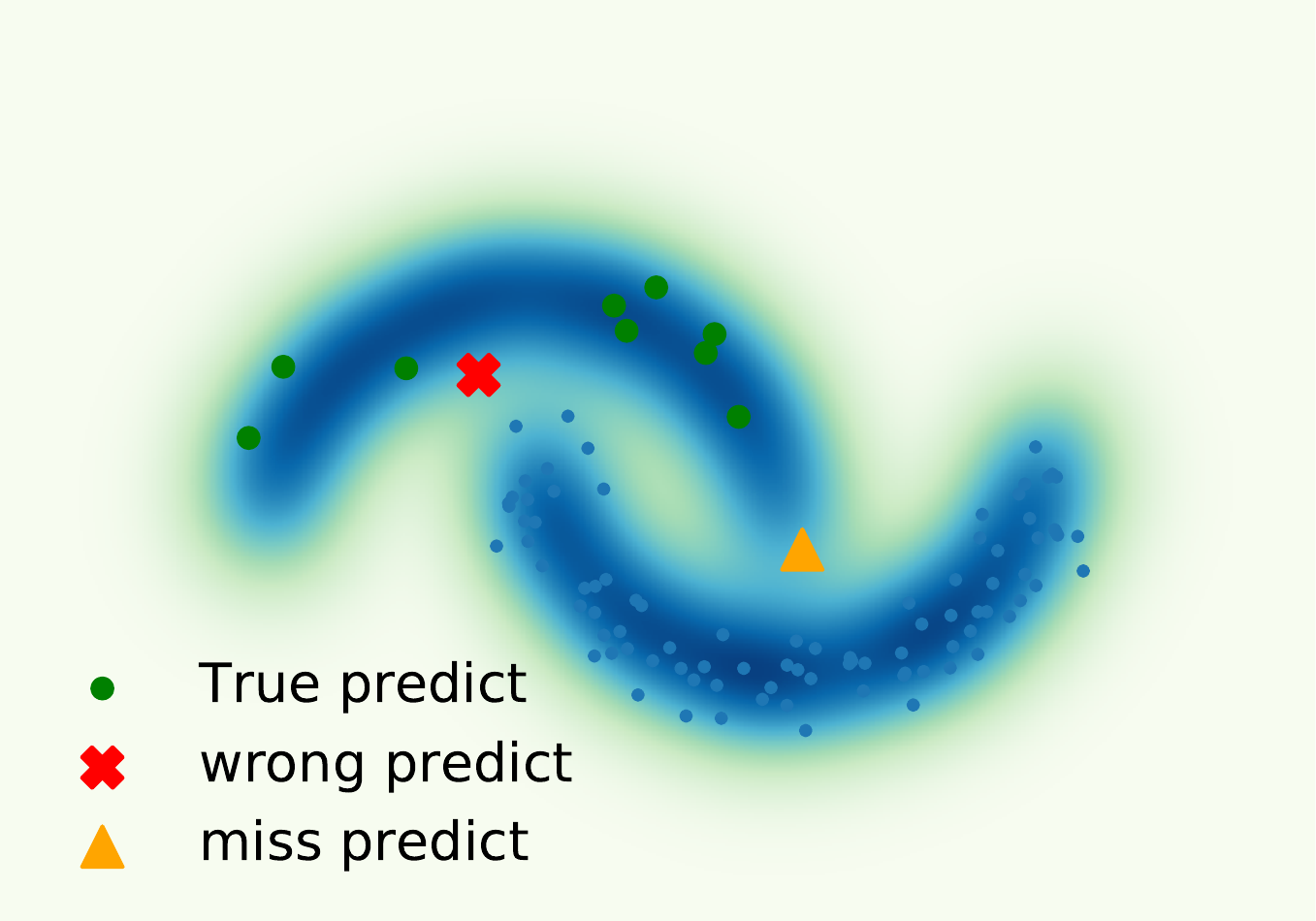}}
		{\includegraphics[width=1.3in]{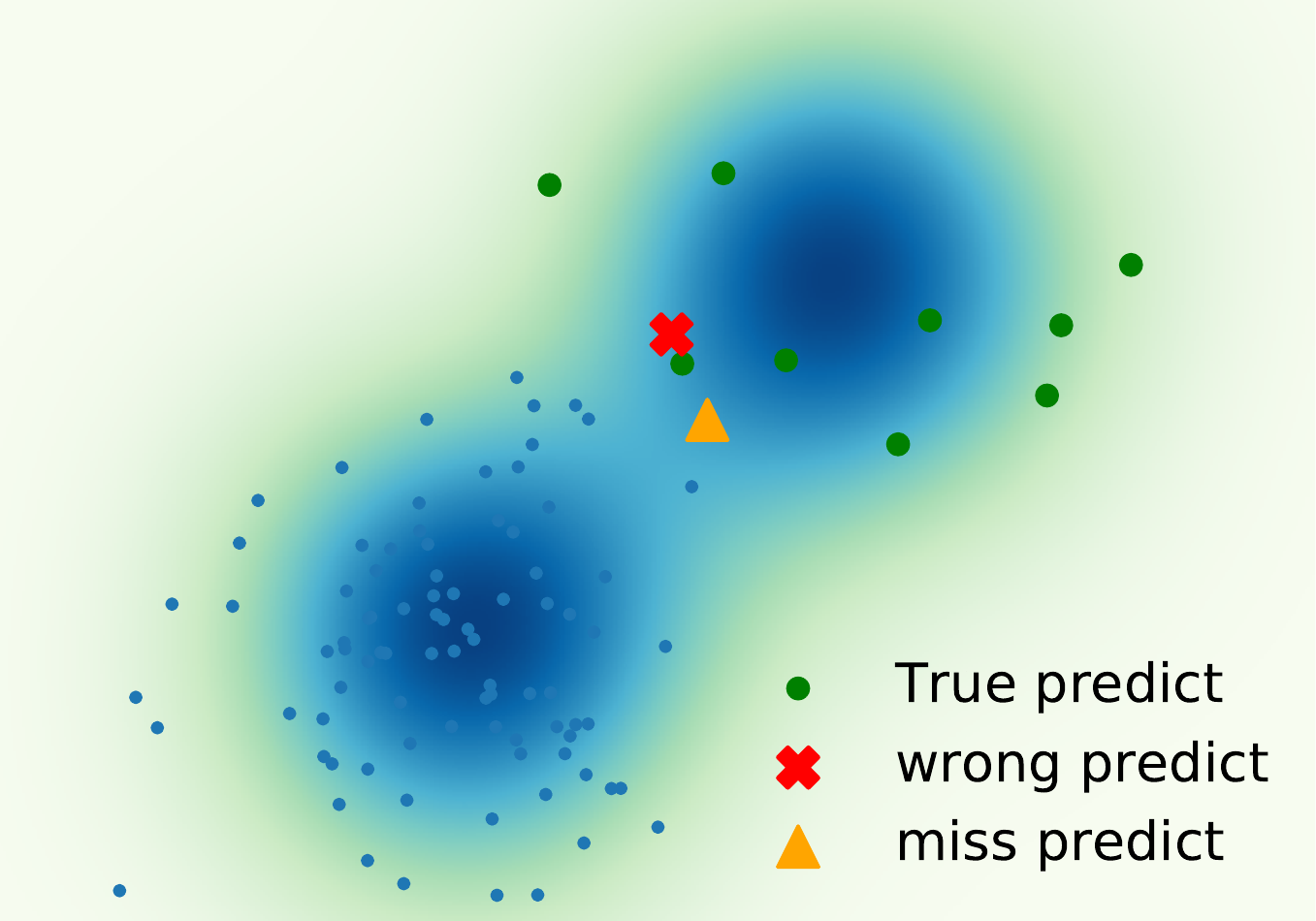}}
	    \figcaption{Visualization of the prediction of \ourscopula on the Two-Moons (left) and Gaussian-Mixture (right) datasets.}
	    \label{fig:qualitative_res_synthetic}
    \end{minipage}
    \begin{minipage}[!b]{0.48\linewidth}
        \centering
        \small
        \vspace{-4mm}
        \tabcaption{Results in the MJC metric on Two-Moons and Gaussian-Mixture datasets.}
        \label{tab:sythetic_quantitative_res}
        \vspace{4mm}
        \setlength{\tabcolsep}{.5mm}{
        \scalebox{0.9}{
        \begin{tabular}{@{}ccccc@{}}\toprule
        Method & Two Moons & Gaussian Mixture  \\
        \midrule
        Random & 0.055 & 0.055 \\
        PGM & 0.360 $\pm$ 0.020 & 0.438 $\pm$ 0.009 \\
        DeepSet (NoSetFn) & 0.472 $\pm$ 0.003 & 0.446 $\pm$ 0.002 \\
        \oursdiff (ours) & 0.584 $\pm$ 0.001 & 0.908 $\pm$ 0.002 \\
        \oursind (ours)  & 0.570 $\pm$ 0.003 & 0.907 $\pm$ 0.002 \\
        \ourscopula (ours)  & \textbf{0.587} $\pm$ \textbf{0.002} & \textbf{0.909} $\pm$ \textbf{0.002} \\
        \bottomrule
        \end{tabular}}}
    \end{minipage}
\end{figure}

\begin{table}[t]
\centering
\small
\caption{Product recommendation results on the Amazon dataset with different categories.}
\label{tab:product_recom}
\hspace*{-0.3cm}
\setlength{\tabcolsep}{1.mm}{
\scalebox{0.92}{
\begin{tabular}{@{}ccccccc@{}}\toprule
Categories & Random & PGM & DeepSet (NoSetFn) & \oursdiff (ours) & \oursind (ours) & \ourscopula (ours) \\
\midrule
Toys & 0.083 & 0.441 $\pm$ 0.004 & 0.429 $\pm$ 0.005 & 0.610 $\pm$ 0.010 & 0.650 $\pm$ 0.015 & \textbf{0.680} $\pm$ \textbf{0.020}\\
Furniture & 0.065 & 0.175 $\pm$ 0.007 & \textbf{0.176} $\pm$ \textbf{0.007} & 0.170 $\pm$ 0.010 & {0.170} $\pm$ {0.011} & 0.172 $\pm$ 0.009\\
Gear & 0.077 & 0.471 $\pm$ 0.004 & 0.381 $\pm$ 0.002 & 0.560 $\pm$ 0.020 & 0.610 $\pm$ 0.020 & \textbf{0.700} $\pm$ \textbf{0.020}\\
Carseats & 0.066 & \textbf{0.230} $\pm$ \textbf{0.010} & 0.210 $\pm$ 0.010 & 0.220 $\pm$ 0.010 & 0.214 $\pm$ 0.007 & 0.210 $\pm$ 0.010\\
Bath & 0.076 & 0.564 $\pm$ 0.008 & 0.424 $\pm$ 0.006 & 0.690 $\pm$ 0.006 & 0.650 $\pm$ 0.020 & \textbf{0.757} $\pm$ \textbf{0.009}\\
Health & 0.076 & 0.449 $\pm$ 0.002 & 0.448 $\pm$ 0.004 & 0.565 $\pm$ 0.009 & 0.630 $\pm$ 0.020 &\textbf{0.700} $\pm$ \textbf{0.020}\\
Diaper & 0.084 & 0.580 $\pm$ 0.009 & 0.457 $\pm$ 0.005 & 0.700 $\pm$ 0.010 & 0.730 $\pm$ 0.020 &\textbf{0.830} $\pm$ \textbf{0.010}\\
Bedding & 0.079 & 0.480 $\pm$ 0.006 & 0.482 $\pm$ 0.008 & 0.641 $\pm$ 0.009 & 0.630 $\pm$ 0.020 &\textbf{0.770} $\pm$ \textbf{0.010}\\
Safety & 0.065 & 0.250 $\pm$ 0.006 & 0.221 $\pm$ 0.004 & 0.200 $\pm$ 0.050 & 0.230 $\pm$ 0.030 &\textbf{0.250} $\pm$ \textbf{0.030}\\
Feeding & 0.093 & 0.560 $\pm$ 0.008 & 0.430 $\pm$ 0.002 & 0.750 $\pm$ 0.010 & 0.696 $\pm$ 0.006 &\textbf{0.810} $\pm$ \textbf{0.007}\\
Apparel & 0.090 & 0.533 $\pm$ 0.005 & 0.507 $\pm$ 0.004 & 0.670 $\pm$ 0.020 & 0.650 $\pm$ 0.020 &\textbf{0.750} $\pm$ \textbf{0.010}\\
Media & 0.094 & 0.441 $\pm$ 0.009 & 0.420 $\pm$ 0.010 & 0.510 $\pm$ 0.010 & 0.551 $\pm$ 0.007 &\textbf{0.570} $\pm$ \textbf{0.010}\\
\bottomrule
\end{tabular}}
}
\end{table}

\paragraph{Product Recommendation.}
In this experiment, we use the Amazon baby registry dataset \citep{gillenwater2014expectation}, which contains numerous subsets of products selected by different customers. Amazon characterizes each product in a baby registry as belonging to a specific category, such as ``toys'' and ``furniture''. Each product is characterized by a short textual description and we represent it as a $768$ dimensional vector using the pre-trained BERT model \citep{devlin2018bert}.

For each category, we generate samples $(V, S^*)$ as follows. Firstly, we filter out those subsets selected by customers whose size is equal to $1$ or larger than $30$. Then we split the remaining  subset collection $\mathcal{S}$ into training, validation and test folds with a $1:1:1$ ratio. Finally for each OS oracle $S^* \in \mathcal{S}$, we randomly sample additional $30 - |S^*|$ products from the same category to construct $V\backslash S^*$. In this way, we construct one data point $(V, S^*)$ for each customer, which reflects this real world scenario: $V$ contains 30 products displayed to the customer, and the customer is interested in  checking $|S^*|$ of them.  
Note that this curation process is different from that of \citep[Section 5.3]{tschiatschek2018differentiable}, which is deviated from the real world scenario (Detailed discussion in \cref{appendix_prod_rec}.).

The performance of all the models on different categories are shown in \cref{tab:product_recom}. Evidently, our models perform favorably to the baselines. Compared with PGM, which learns the set function via a probabilistic greedy algorithm, we can observe that our models, which model the the set functions with energy-based treatments, achieves better results on all settings. Although DeepSet is also permutation invariant, our model still outperforms it by a substantial margin, indicating the superiority of learning the set function explicitly.

\paragraph{Set Anomaly Detection.}
In this experiment, we evaluate our methods on two {image} datasets: the double MNIST \citep{mulitdigitmnist} and the CelebA \citep{liu2015faceattributes}. For each dataset, we randomly split the training, validation, and test set to the size of $10,000$, $1,000$, and $1,000$, respectively.

\textbf{Double MNIST:} The dataset consists of 1000 images for each digit ranging from $00$ to $99$. For each sample $(V, S^*)$, we randomly sample $n \in \{2,\dots,5\}$ images with the same digit to construct the OS oracle $S^*$, and then select $20 - |S^*|$ images with different digits to construct the set $V \backslash S^*$.
\textbf{CelebA:} The CelebA dataset contains $202,599$ images with $40$ attributes. We select two attributes at random and construct the set with the size of $8$. For each ground set $V$, we randomly select $n \in \{2,3\}$ images as the OS oracle $S^*$, in which neither of the two attributes is present. See \cref{fig:double_mnist} and \cref{fig:celeba} in \cref{appendix_set_anomaly_det} for illustrations of sampled data.

From \cref{tab:meta_anomaly}, we see that the variants of our model consistently outperform baseline methods strongly. Furthermore, we observe that by introducing the correlation to the variational distribution, significant performance gains can be obtained, demonstrating the benefits of relaxing the independent assumption by using Gaussian copula. Additional experiments on the other two datasets F-MNIST \citep{xiao2017fashion} and CIFAR-10 \citep{krizhevsky2009learning} are provided in \cref{appendix_sad_fmnist_cifar10}.

\paragraph{Compound Selection in AI-aided Drug Discovery.} \label{sec_compound_selection}

A critical step in drug discovery is to select  compounds with high biological activity \citep{wallach2015atomnet,li2021structure,2022arXiv220109637J}, diversity and satisfactory ADME (absorption, distribution, metabolism, and excretion) properties \citep{gimeno2019light}.  As a result, virtual screening is typically a hierarchical filtering process with several necessary filters, e.g., first choosing the highly active compounds, then selecting diverse subsets from them, and finally excluding compounds that are bad for ADME.
We finally arrive at a compound subset after a series of these steps. 
Given the OS supervision signals,  we can learn to conduct this complicated  selection process in an end to end manner. 
As a result, it will eliminate  the need for intermediate supervision signals, which can be very expensive or impossible to obtain due to pharmacy's personal protection policy. For example, measuring the bioactivity and ADME properties of a compound has to be done in wet labs, and pharmaceutical companies are reluctant to disclose the data. 
Here we simulate the OS oracle of compound selection by applying the \emph{two filters}:  high bioactivity and diversity filters,  based on the following two datasets.

\textbf{PDBBind \citep{liu2015pdb}:} This dataset consists of experimentally measured binding affinities for bio-molecular complexes. We construct our dataset using the ``refined'' subsets therein, which contains $179$ protein-ligand complexes.
\textbf{BindingDB\footnote{We take the curated one from  \href{https://tdcommons.ai/multi\_pred\_tasks/dti/}{https://tdcommons.ai/multi\_pred\_tasks/dti/}}:} It is a public database of measured binding affinities, which consists of $52,273$ drug-targets with small, drug-like molecules.
Instead of providing complexes, here only the  target amino acid sequence and compound SMILES string are provided.

We apply the same filtering process to construct samples $(V, S^*)$ for these two datasets. Specifically, we first randomly select a number of compounds to construct the ground set $V$, whose size is $30$ and $300$ for PDBBind and BindingDB, respectively. Then $\frac{1}{3}$ compounds with the highest bioactivity are filtered out, accompanied by a distance matrix measured by the corresponding fingerprint similarity of molecules. To ensure diversity, the OS oracle $S^*$ is generated by the centers of clusters which are presented by applying the affinity propagation algorithm. We finally obtain the training, validation, and test set with the size of 1,000, 100, and 100, respectively, for both two datasets.
Detailed description is provided in \cref{appendix_compount_selection}.

From \cref{tab:affinity_predict}, one can see that our methods magnificently outperform the random guess. This indicates that the proposed \ours framework has great potential for drug discovery to facilitate the virtual screening task by modeling the complicated hierarchical selection process.
Besides, improvements of \ours can be further observed by comparing with DeepSet, which simply equips the  deepset architecture with cross entropy loss, illustrating the superiority of explicit set function learning and energy-based modeling.
Although comparable results could be achieved by PGM with sequential modeling, which satisfies permutation invariance and differentiability, our models still outperform it. This is partially because our models additionally maintain the other three desiderata of learning set functions, {\it i.e.,} varying ground set, minimum prior, and scalability.
We also conduct a fairly simple task in \cref{appendix_exp_compount_selection_activity}, in which only the bioactivity filter is considered.
To simulate the full selection process, we leave it as important future work due to limited labels.

\begin{table}[!t]
\begin{minipage}[t]{0.48\linewidth}
\centering
\small
\caption{Set anomaly detection results.}
\label{tab:meta_anomaly}
\setlength{\tabcolsep}{1.5mm}{
\scalebox{0.9}{
\begin{tabular}{@{}ccccc@{}}\toprule
Method & Double MNIST & CelebA  \\
\midrule
Random & 0.082 & 0.219 \\
PGM & 0.300 $\pm$ 0.010 & 0.481 $\pm$ 0.006 \\
DeepSet (NoSetFn) & 0.111 $\pm$ 0.003 & 0.390 $\pm$ 0.010 \\
\oursdiff (ours) & \textbf{0.610} $\pm$ \textbf{0.010} & 0.546 $\pm$ 0.008 \\
\oursind (ours) & 0.410 $\pm$ 0.010 & 0.530 $\pm$ 0.010 \\
\ourscopula (ours) & 0.588 $\pm$ 0.007 & \textbf{0.555} $\pm$ \textbf{0.005} \\
\bottomrule
\end{tabular}}}
\end{minipage}
\begin{minipage}[t]{0.48\linewidth}
\centering
\small
\caption{Compound selection results.}
\label{tab:affinity_predict}
\setlength{\tabcolsep}{1.5mm}{
\scalebox{0.9}{
\begin{tabular}{@{}ccccc@{}}\toprule
Method & PDBBind & BindingDB  \\
\midrule
Random & 0.073 & 0.027 \\
PGM & 0.350 $\pm$ 0.009 & 0.176 $\pm$ 0.006\\
DeepSet (NoSetFn) & 0.319 $\pm$ 0.003 & 0.162 $\pm$ 0.007\\
\oursdiff (ours) & \textbf{0.360} $\pm$ \textbf{0.010} & 0.189 $\pm$ 0.002\\
\oursind (ours) & {0.355} $\pm$ {0.005} & \textbf{0.190} $\pm$ \textbf{0.003}\\
\ourscopula (ours) & 0.354 $\pm$ 0.008 & 0.188 $\pm$ 0.003\\
\bottomrule
\end{tabular}}}
\end{minipage}
\end{table}

\section{Discussion and Conclusion} \label{sec-conclusions}
We proposed a simple yet effective framework for set function learning under the OS oracle. By formulating the set probability with energy-based treatments, the resulting model enjoys the virtues of \emph{permutation invariance}, \emph{varying ground set}, and \emph{minimum prior}. A \emph{scalable} training and inference algorithm is further proposed by applying maximum log likelihood principle with the surrogate of mean-field inference. Real-world applications confirm the effectiveness of our approaches.

\textbf{Limitations \& Future Works.} The training objective in \eqref{cross_entropy} does not bound the log-likelihood of EBMs. A more principled discrete EBMs trainer is worth exploring. In addition, the proposed framework has the potential to facilitate learning to select subsets for other applications \citep{iyer2021generalized}, including active learning \citep{kothawade2021similar}, targeted selection of subsets, selection of subsets for robustness \citep{killamsetty2020glister}, and selection of subsets for fairness.  Though we consider learning generic neural  set functions in this work, it is beneficial to consider building useful priors into the neural set function architectures, such as set functions with the diminishing returns prior \citep{DBLP:journals/corr/BilmesB17} and the bounded curvature/submodularity ratio prior \citep{bian2017guarantees}.

{
\bibliography{main}
\bibliographystyle{neurips_2022}
}

\if 0
\section*{Checklist}

\begin{enumerate}

\item For all authors...
\begin{enumerate}
  \item Do the main claims made in the abstract and introduction accurately reflect the paper's contributions and scope?
    \answerYes{}
  \item Did you describe the limitations of your work?
    \answerYes{See \cref{sec-conclusions}.}
  \item Did you discuss any potential negative societal impacts of your work?
    \answerNo{That is not explicitly pertinent to any negative social impact.}
  \item Have you read the ethics review guidelines and ensured that your paper conforms to them?
    \answerYes{}
\end{enumerate}

\item If you are including theoretical results...
\begin{enumerate}
  \item Did you state the full set of assumptions of all theoretical results?
    \answerYes{}
        \item Did you include complete proofs of all theoretical results?
    \answerYes{See \cref{app-derivations}.}
\end{enumerate}

\item If you ran experiments...
\begin{enumerate}
  \item Did you include the code, data, and instructions needed to reproduce the main experimental results (either in the supplemental material or as a URL)?
    \answerYes{}
  \item Did you specify all the training details (e.g., data splits, hyperparameters, how they were chosen)?
    \answerYes{See \cref{appendix_exp_details}.}
        \item Did you report error bars (e.g., with respect to the random seed after running experiments multiple times)?
    \answerYes{See \cref{appendix_exp_details}.}
        \item Did you include the total amount of compute and the type of resources used (e.g., type of GPUs, internal cluster, or cloud provider)?
    \answerYes{See \cref{appendix_exp_details}.}
\end{enumerate}

\item If you are using existing assets (e.g., code, data, models) or curating/releasing new assets...
\begin{enumerate}
  \item If your work uses existing assets, did you cite the creators?
    \answerYes{See \cref{sec-experiments}}
  \item Did you mention the license of the assets?
    \answerYes{The assets we use are public.}
  \item Did you include any new assets either in the supplemental material or as a URL?
    \answerYes{Our code is available at a anonymous link.}
  \item Did you discuss whether and how consent was obtained from people whose data you're using/curating?
    \answerNA{All datasets we evaluate on are open source datasets.}
  \item Did you discuss whether the data you are using/curating contains personally identifiable information or offensive content?
    \answerNA{}
\end{enumerate}

\item If you used crowdsourcing or conducted research with human subjects...
\begin{enumerate}
  \item Did you include the full text of instructions given to participants and screenshots, if applicable?
    \answerNA{}
  \item Did you describe any potential participant risks, with links to Institutional Review Board (IRB) approvals, if applicable?
    \answerNA{}
  \item Did you include the estimated hourly wage paid to participants and the total amount spent on participant compensation?
    \answerNA{}
\end{enumerate}

\end{enumerate}
\fi 


\newpage 
\appendix

\begin{center}
\LARGE
\textbf{Appendix for ``Learning Neural Set Functions  Under the Optimal Subset Oracle''}
\end{center}

\etocdepthtag.toc{mtappendix}
\etocsettagdepth{mtchapter}{none}
\etocsettagdepth{mtappendix}{subsection}
{\small \tableofcontents}

\section{Details of the Probabilistic Greedy Model} \label{appendix_pgm}
The probabilistic greedy model (PGM) solves optimization \eqref{prob_set_func_learning} with a differentiable extension of greedy maximization algorithm \citep{tschiatschek2018differentiable}. Specifically, denote the first $j$ chosen elements as $S_j = \{s_1, \dots, s_j\} \subseteq V$, PGM samples the $(j+1)^{\mathrm{th}}$ element from the candidate set $V \backslash S_j$ with the probability proportional to $\mathrm{exp}(F_\theta (s_{j+1} + S_j )/\gamma)$, which raises the probability of the selected elements in the sequence $\pi = \{s_1, s_2, \dots, s_k\}$ as
\begin{align} \label{per_varaint_prob}
    p_\theta (\pi | V) = \prod\limits_{j=0}^{k-1} \frac{\mathrm{exp}(F_\theta (s_{j+1} + S_j) / \gamma)}{\sum_{s\in V\backslash S_j} \mathrm{exp}(F_\theta (s + S_j) / \gamma)},
\end{align}
where $\gamma$ is a temperature parameter, $S_0 = \emptyset$, and $s+S := S \cup \{s\}$. Note that, the computation of $p_\theta (\pi | V)$ depends on the order of sequence $\pi$, which would make the learned parameter $\theta$ sensitive to the sampling order. To alleviate this problem, \citet{tschiatschek2018differentiable} finally construct the set mass function by enumerating all possible permutations
\begin{align} \label{pgm_objective}
    p_\theta (S|V) =  \sum\limits_{\pi \in \Pi^S} p_\theta (\pi | V),
\end{align}
where $\Pi^S$ is the permutation space generated from $S$. After training, the OS oracle $S$ can be sampled via sequential decoupling $p (s_{i+1} | S_i) \propto \exp(F_{\theta} (s_{i+1} + S_i) / \gamma )$. However, maximizing the log likelihood of \eqref{pgm_objective} is prohibitively expensive and unscalable due to the exponential time complexity of enumerating all permutations. Although one can apply Monte Carlo approximation to avoid  that, {\it i.e.,} approximating $\log p_\theta  (S|V) =  \log \sum_{\pi \in \Pi^S} p_\theta (\pi | V)$ with $\log p_\theta (\pi | V), \pi \sim \Pi^S$, such a simple estimator is biased, resulting in a permutation variant model.

\section{Derivations} \label{app-derivations}
\subsection{Derivations of the Maximum Entropy Distribution} \label{proof_maximum_entropy}
The first step to solve problem \eqref{OS_learning_objective} is to construct a proper set mass function $p_\theta (S|V)$ monotonically growing with the utility function $F_\theta (S;V)$. There exits countless ways to construct such a probability mass function, such as $p_\theta (S|V) \propto F_\theta (S;V)$ and the set mass function defined in PGM, {\it i.e.,} \cref{pgm_objective}. Here, one would care about what the most appropriate set mass function should be? Generally we prefer the model to assume nothing about what is unknown. More formally, we should choose the most ``uniform" distribution, which maximizes the Shannon entropy $\mathbb{H}(p) = -\sum_{S \subseteq V} p(S)\log p(S)$. This principle is known as ``noninformative prior" \citep{jeffreys1946invariant}, which has been widely applied in many physical systems \citep{jaynes1957information,jaynes1957information2}.
It turns out that the energy-based model is the only distribution with maximum entropy. More specifically, the following theorem holds:
\begin{theorem}
\label{maximum_entropy_theorem}
Let $\mathcal{P}_\mu \!:=\! \{p(S)\!:\! \mathbb{E}_p [F (S)] \!=\! \mu\}$  be a set of distributions satisfying the expectation constraint $\mathbb{E}_p [F (S)] = \mu$, and $p_\lambda$ have density
\begin{align}
    p_\lambda (S) = \frac{\mathrm{exp}(\lambda F (S))}{Z}, \quad Z := \sum\limits_{S \subseteq V} \mathrm{exp}(\lambda F (S)). \nonumber
\end{align}
If $\mathbb{E}_{p_\lambda} [F (S)]=\mu$, then $p_\lambda$ maximizes the entropy $\mathbb{H}(p)$ over $\mathcal{P}_\mu$; moreover, the distribution $p_\lambda$ is unique.
\end{theorem}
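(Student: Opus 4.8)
The plan is to derive the theorem directly from the non-negativity of the Kullback--Leibler divergence (Gibbs' inequality), which delivers optimality and uniqueness together with essentially no computation; no Lagrange-multiplier or variational argument is needed.

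The starting observation is the log-density identity $\log p_\lambda(S) = \lambda F(S) - \log Z$, which holds for all $S \subseteq V$ and is well defined since $2^V$ is finite, so $Z$ is a finite positive number and $p_\lambda$ has full support. Now take any competitor $p \in \mathcal{P}_\mu$. Expanding the KL divergence and substituting this identity gives
\[
0 \le \mathbb{KL}(p \| p_\lambda) = -\mathbb{H}(p) - \sum_{S \subseteq V} p(S)\big(\lambda F(S) - \log Z\big) = -\mathbb{H}(p) - \lambda\, \mathbb{E}_p[F(S)] + \log Z .
\]
Because $p \in \mathcal{P}_\mu$, the moment constraint $\mathbb{E}_p[F(S)] = \mu$ applies, and rearranging yields the uniform upper bound $\mathbb{H}(p) \le \log Z - \lambda\mu$ for every $p \in \mathcal{P}_\mu$.

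Next I would check that this bound is attained at $p_\lambda$. Using the same log-density identity together with the hypothesis $\mathbb{E}_{p_\lambda}[F(S)] = \mu$ (that is, $p_\lambda \in \mathcal{P}_\mu$), one gets $\mathbb{H}(p_\lambda) = -\lambda\,\mathbb{E}_{p_\lambda}[F(S)] + \log Z = \log Z - \lambda\mu$. Hence $\mathbb{H}(p) \le \mathbb{H}(p_\lambda)$ for all $p \in \mathcal{P}_\mu$, i.e.\ $p_\lambda$ maximizes the entropy over $\mathcal{P}_\mu$. For uniqueness, I would appeal to the equality case of Gibbs' inequality: $\mathbb{KL}(p \| p_\lambda) = 0$ holds if and only if $p = p_\lambda$. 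Thus any $p \in \mathcal{P}_\mu$ achieving $\mathbb{H}(p) = \log Z - \lambda\mu$ forces $\mathbb{KL}(p\|p_\lambda) = 0$ and therefore $p = p_\lambda$.

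I do not expect a real obstacle in this argument; it is a textbook maximum-entropy computation. The only points requiring care are (i) invoking the moment constraint twice --- once for the arbitrary $p$ and once for $p_\lambda$ --- which is exactly where the hypothesis $\mathbb{E}_{p_\lambda}[F(S)]=\mu$ enters, and (ii) using the strict form of Gibbs' inequality (equivalently the strict convexity of $t \mapsto t\log t$) for the uniqueness claim, not merely its non-negativity. It is also worth stating explicitly that finiteness of the ground set guarantees $p_\lambda$ is a genuine probability mass function with full support, so that $\mathbb{KL}(p\|p_\lambda)$ is finite for every $p$ under consideration.
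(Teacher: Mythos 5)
Your proof is correct, and it takes a leaner route than the paper's. The paper proceeds in two stages: it first sets up the constrained entropy-maximization problem and runs a Lagrange-multiplier/stationarity argument (introducing multipliers for positivity, normalization, and the moment constraint, then solving the first-order condition) to \emph{derive} the exponential form of $p_\lambda$, and only afterwards invokes the decomposition $\mathbb{H}(p) = -\mathbb{KL}(p\,\|\,p_\lambda) + \mathbb{H}(p_\lambda)$ to establish uniqueness. You skip the variational stage entirely and let Gibbs' inequality do double duty: the identity $\log p_\lambda(S) = \lambda F(S) - \log Z$ plus the moment constraint (used once for the competitor $p$ and once for $p_\lambda$) gives the uniform bound $\mathbb{H}(p) \le \log Z - \lambda\mu = \mathbb{H}(p_\lambda)$, and the strict equality case of $\mathbb{KL}\ge 0$ gives uniqueness. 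In effect, your whole argument is the paper's uniqueness step promoted to a full proof of optimality; what it buys is brevity and rigor, since the paper's Lagrangian computation verifies stationarity (and appeals to convexity) rather than directly certifying a global maximum, whereas your bound is global by construction. What the paper's longer route buys is constructive insight: it shows \emph{why} the exponential family emerges as the answer rather than verifying a guessed form. Your care about the finite ground set (so $Z$ is finite and $p_\lambda$ has full support, making $\mathbb{KL}(p\,\|\,p_\lambda)$ finite) and about using the strict form of Gibbs' inequality for uniqueness are exactly the right points to flag.
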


\begin{proof}
The derivation below is adapted from \citet{jaynes1957information} in the context of set function learning, for completeness.
We rewrite the maximum entropy problem in the form of
\begin{align}
    &\operatorname{maximize}\  - \sum_{S \subseteq V} p(S) \log p(S) \nonumber \\
    &\operatorname{subject \ to}\  \sum_{S \subseteq V} p(S)F (S) = \mu, \quad \forall S \subseteq V p(S) \geq 0, \quad \sum_{S \subseteq V} p(S) = 1. \nonumber
\end{align}
Introducing Lagrange multipliers $\alpha(S) > 0$ for the constraint $p(S) > 0$, $\beta \in \mathbb{R}$ for the normalization constraint that $\sum_{S \subseteq V} p(S) = 1$, $\lambda$ for the constraint that $\mathbb{E}_p [F (S)] = \mu$, and , we obtain the following Lagrangian:
\begin{align}
    L(p,\alpha_0, \alpha_1, \beta) 
    &= \sum_{S \subseteq V} p(S)\log p(S) + \beta \left(  \sum_{S \subseteq V} p(S) -1 \right) + \nonumber \\
    &\qquad \lambda \left( \mu - \sum_{S \subseteq V} p(S)F (S) \right) - \sum_{S \subseteq V} \alpha(S) p(S).
\end{align}
Now we take derivatives and obtain
\begin{align}
    \frac{\partial}{\partial p(S)} L(p,\alpha, \beta, \lambda) = 1 + \log p(S) + \beta - \lambda F (S) - \alpha (S).
\end{align}
Since this function is convex in $p$, the minimizing $p$ can be find by setting this equal to zero
\begin{align}
    p(S) = \mathrm{exp}(\lambda F (S) - 1 - \beta + \alpha(S)).
\end{align}
Note that in this setting we always have $p(S) > 0$. By complementary slackness, the constraint $p(S) > 0$ is unnecessary and we have $\alpha(S) = 0$. To satisfy the constraint $\sum_{S \subseteq V} p(S) = 1$, we take $\beta = 1- + \log Z = -1 + \log \sum_{S \subseteq V} \mathrm{exp}(\lambda F (S))$. Then the optimal mass $p$ has the form
\begin{align}
    p_\lambda (S) = \frac{\exp (\lambda F (S))}{\sum_{S \subseteq V} \mathrm{exp}(\lambda F (S))}.
\end{align}
So we reach the form of $p(S)$ we would like to have.

Next we show the distribution $p_\lambda$ is unique. Assume there exists any other distribution $p \in \mathcal{P}_\mu$, such that $p = \argmax_p \mathbb{H}(p)$. In this case, we have
\begin{align}
    \mathbb{H}(p) 
    &= -\sum\limits_{S \subseteq V} p(S) \log p(S) =  -\sum\limits_{S \subseteq V} p(S) \log \frac{p(S)}{p_\lambda (S)} - \sum\limits_{S \subseteq V} p(S) \log p_\lambda (S) \nonumber \\
    &= -\mathbb{KL}(p || p_\lambda) - \sum\limits_{S \subseteq V} p(S) \left( \lambda F (S) - Z \right) \nonumber \\
    &= -\mathbb{KL}(p || p_\lambda) - \sum\limits_{S \subseteq V} p_\lambda (S) \left( \lambda F (S) - Z \right) \nonumber \\
    &= -\mathbb{KL}(p || p_\lambda) + \mathbb{H}(p_\lambda). \nonumber
\end{align}
As $\mathbb{KL}(p || p_\lambda) \geq 0$ unless $p = p_\lambda$, we have shown that $p_\lambda$ is the unique distribution maximizing the entropy, as desired.
\end{proof}

\textbf{Discussion.}
\cref{maximum_entropy_theorem} shows that EBM is the maximum entropy distribution, which verifies the assertion that energy-based treatments of set function enjoy the \emph{minimum prior} property. It should be noted that the model proposed by \citet{tschiatschek2018differentiable} violates this requirement. They used sequencial modeling to construct $p(S)$ (see \eqref{per_varaint_prob} and \eqref{pgm_objective}).  Although this approach simplifies the sampling process, it introduces undesirable inductive bias.

\subsection{Derivations of the Fixed Point Iteration} \label{proof_elbo}
In this section, we give the detailed derivation for the fixed point iteration (FPI) of MFVI:
\begin{align} \label{app-fpi-formula}
    {\psi}^{(k+1)}_i \leftarrow (1 + \mathrm{exp}(- \nabla_{\psi^{(k)}_i} f_{\mathrm{mt}}^{F_\theta} (\boldsymbol{\psi}^{(k)})))^{-1}.
\end{align}
First, recall that we want to maximize the ELBO:
\begin{align}
    \max_{\boldsymbol{\psi}} \underbrace{\sum\limits_{S \subseteq V} F_\theta (S) \prod_{i \in S} \psi_i \prod_{i \not\in S} (1 - \psi_i)}_{f_{\mathrm{mt}}^{F_\theta} (\boldsymbol{\psi}) } - \underbrace{\sum\limits_{i=1}^{|V|} \left[ \psi_i \log \psi_i + (1-\psi_i)\log (1-\psi_i)) \right]}_{- \mathbb{H}(q(S;\boldsymbol{\psi})) }.
\end{align}
The formula \eqref{app-fpi-formula} is obtained by setting the partial derivative {\it w.r.t.} coordinate $i$ of ELBO to be $0$:
\begin{align}
\nabla_{\psi_i} f_{\mathrm{mt}}^{F_\theta} (\boldsymbol{\psi}) + \nabla_{\psi_i} \mathbb{H}(q(S;\boldsymbol{\psi})) = \nabla_{\psi_i} f_{\mathrm{mt}}^{F_\theta} (\boldsymbol{\psi}) + \log \frac{1 - \psi_i}{\psi_i} = 0, \nonumber
\end{align}
which implies
\begin{align}
    \psi_i = (1 + \exp (-\nabla_{\psi_i} f_{\mathrm{mt}}^{F_\theta} (\boldsymbol{\psi}) ) )^{-1}. \nonumber
\end{align}
This is exactly the formula of FPI used in the mean-field variational inference algorithm. Note that the FPI actually corresponds to the gradient ascent with an adaptive step size vector $\boldsymbol{\alpha}$ as
\begin{align}
    \alpha_i = \frac{\sigma(\nabla_{\psi_i} f_{\text{mt}}^{F_\theta}(\boldsymbol{\psi})) - \psi_i }{\nabla_{\psi_i} f_{\text{mt}}^{F_\theta} (\boldsymbol{\psi}) +\log[ {(1-{\psi}_i)}/{{\psi}_i}]}, \nonumber
\end{align}
where $\sigma(x) = (1 + \exp(-x))^{-1}$ denotes the sigmoid function. To verify this, we have
\begin{align}
    {\psi}^{(k+1)}_i 
    &= {\psi}^{(k)}_i + \alpha_i^{(k)} \left( \nabla_{\psi_i^{(k)}} f_{\mathrm{mt}}^{F_\theta} (\boldsymbol{\psi}^{(k)}) + \nabla_{\psi_i^{(k)}} \mathbb{H}(q(S;\boldsymbol{\psi}^{(k)})) \right) \nonumber \\
    &= {\psi}^{(k)}_i + \frac{\sigma(\nabla_{\psi_i} f_{\text{mt}}^{F_\theta}(\boldsymbol{\psi}^{(k)})) - \psi_i^{(k)} }{\nabla_{\psi_i^{(k)}} f_{\text{mt}}^{F_\theta} (\boldsymbol{\psi}^{(k)}) +\log[ {(1-{\psi}_i^{(k)})}/{{\psi}_i^{(k)}}]} \left( \nabla_{\psi_i^{(k)}} f_{\text{mt}}^{F_\theta} (\boldsymbol{\psi}^{(k)}) +\log \frac{(1-{\psi}_i^{(k)})}{{\psi}_i^{(k)}} \right) \nonumber \\
    &= (1 + \exp (-\nabla_{\psi_i^{(k)}} f_{\mathrm{mt}}^{F_\theta} (\boldsymbol{\psi}^{(k)}) ) )^{-1}. \nonumber
\end{align}
The connection to gradient ascent further confirms the soundness of our FPI algorithm.

\subsection{Derivations of the Gradient of Multilinear Extension} \label{proof_gradient_mt}
In this section, we prove that the gradient of multilinear extension can be estimated using Monte Carlo sampling. Specifically we have
\begin{align} \label{appendix_gradient_mt}
    \nabla_{\psi_i} f_{\mathrm{mt}}^{F_\theta}
    &= \nabla_{\psi_i} \sum\limits_{S \subseteq V} F_\theta (S) \prod_{i \in S}\psi_i \prod_{i \not\in S}(1-\psi_i) \nonumber \\
    &= \mathbb{E}_{q(S;(\boldsymbol{\psi} | \psi_i \leftarrow 1))}[F_\theta (S)] - \mathbb{E}_{q(S;(\boldsymbol{\psi} | \psi_i \leftarrow 0))}[F_\theta (S)] \nonumber \\
    &= \sum\limits_{S \subseteq V, i \in S} F_\theta (S) \prod\limits_{j\in S\backslash\{i\}} \psi_j \prod\limits_{j^\prime \not\in S} (1 - \psi_{j^\prime}) - \sum\limits_{S \subseteq V\backslash \{i\}} F_\theta (S) \prod\limits_{j\in S} \psi_j \prod\limits_{j^\prime \in S, j^\prime \not= i} (1 - \psi_{j^\prime}) \nonumber \\
    &= \sum\limits_{S \subseteq V\backslash \{i\}} [F_\theta (S + i) - F_\theta (S)] \prod\limits_{j\in S} \psi_j \prod\limits_{j^\prime \in V\backslash S \backslash\{i\} } (1 - \psi_{j^\prime}) \nonumber \\
    &= \mathbb{E}_{q(S;(\boldsymbol{\psi} | \psi_i \leftarrow 0))} \left[ F_\theta (S + i) - F_\theta (S) \right]. 
\end{align}
\textbf{Discussion.} The Monte Carlo (MC) approximation of $\nabla_{\psi_i} f_{\mathrm{mt}}^{F_\theta}$ is unbiased. Thereby, although exactly calculating \eqref{appendix_gradient_mt} has exponential time complexity, we can apply MC sampling to approximate it in a polynomial time, resulting a scalable training algorithm. It is worth to note that the MC approximation used in PGM (see \eqref{pgm_objective}) is biased. That is they approximate $\log p_\theta  (S|V) = \log \sum_{\pi \in \Pi^S} p_\theta (\pi | V)$ with $\log p_\theta (\pi | V), \pi \sim \Pi^S$. Although such a biased approximation can be computed in polynomial time, they undesirably introduce permutation variance.

\section{Low-Rank Perturbation for the Covariance Matrix} \label{appendix_lower_rank_per}
In the construction of Gaussian copula $C_{\boldsymbol{\Sigma}}$, we require a positive semi-definite matrix $\boldsymbol{\Sigma} \in \mathbb{R}^{|V| \times |V|}$, whose elements are generally modeled as the output of neural networks. Thereby, if the size of ground size $V$ is large, the number of neural network outputs will be prohibitively large. Meanwhile, based on the definition of set, covariance matrix $\boldsymbol{\Sigma}$ is further required to satisfy \emph{permutation equivariance}. To remedy this issue, we propose to employ a more efficient strategy, namely \emph{Lower-Rank Perturbation}, which restricts the covariance matrix to the form
\begin{align} \label{lower-rank-perturb}
    \boldsymbol{\Sigma} = \boldsymbol{D} + \boldsymbol{P}\boldsymbol{P}^T,
\end{align}
where $\boldsymbol{D} \in \mathbb{R}_{+}^{|V| \times |V|}$ is a diagonal matrix with positive entries and $\boldsymbol{P} = [\boldsymbol{p}_1, \boldsymbol{p}_2, \dots, \boldsymbol{p}_v]$ is a lower-rank perturbation matrix with $\boldsymbol{p}_i \in \mathbb{R}^{|V|}$ and $v \ll |V|$. In this way, the number of neural network outputs can be dramatically reduced from $|V|^2$ to $v|V|$. Another benefit of constructing $\boldsymbol{\Sigma}$ in this way is that, it is convenient to employ the DeepSet architecture in \eqref{deepse_equivariance} to output $\boldsymbol{D}$ and $\boldsymbol{p}_i$ for $i=1,\dots,v$, such that they are \emph{permutation equivariant}, and the resulting covariance matrix $\boldsymbol{\Sigma} = \boldsymbol{D} + \boldsymbol{P}\boldsymbol{P}^T$ is also \emph{permutation equivariant}. Moreover, the lower-rank perturbation trick permits us to avoid using Cholesky decomposition to sample a Gaussian noise with covariance $\boldsymbol{\Sigma}$, which is prohibitively expensive. Specifically, the Gaussian noise $\boldsymbol{g} \sim \mathcal{N}(\boldsymbol{0},\boldsymbol{\Sigma})$ can be reparameterized as
\begin{align}
\boldsymbol{g} = \boldsymbol{D}^{1/2} \cdot \boldsymbol{\epsilon}_1 + \boldsymbol{P} \cdot \boldsymbol{\epsilon}_2,
\end{align}
where $\boldsymbol{\epsilon}_1 \sim \mathcal{N}(\boldsymbol{0}, \boldsymbol{I}_{|V|})$ and $\boldsymbol{\epsilon}_2 \sim \mathcal{N}(\boldsymbol{0}, \boldsymbol{I}_{v})$. In this way, the sampling complexity can be reduced from $\mathcal{O}(|V|^3)$ to $\mathcal{O}(v^2 |V|)$.

\section{Detailed  Pseudo Code of EquiVSet Algorithms} \label{appendix_pseudo_code_4_EquivSet_copula}
We provide the pseudo-code for \ours in \cref{alg:complete_equivset}.
The training procedure consists of two steps: i) train  $q_\phi$ with fixed $\theta$; ii) train $p_\theta$ under the guidance of $q_\phi$.
Specifically, to train $q_\phi$, we first fix the parameter $\theta$ of the set function and then optimize $\phi$ by maximizing the \textsc{ELBO} in \eqref{elbo}. To train $p_\theta$, we first initialize the variational parameter $\boldsymbol{\psi}$ via EquiNet and then run $K$ steps mean-field iteration to make $\boldsymbol{\psi}$ dependent with $\theta$. Finally, the parameter $\theta$ can be optimized by minimizing the cross entropy in \eqref{cross_entropy}. Note that, we set $K$ as $1$ in our experiments. 

\begin{algorithm}[H]
\caption{EquiVSet (complete version)}
\label{alg:complete_equivset}
\small
\textbf{Input}: $\{V_i, S^*_i\}_{i=1}^N$: training dataset; $\eta$: learning rate; $K:$ number of mean-field iteration step; $m:$ number of Monte Carlo approximations; $v$: rank of perturbation; $\tau$: temperature for Gumbel-Softmax \\
\textbf{Output}: Optimal parameters ($\theta, \phi$)
\begin{algorithmic}[1]
    \STATE $\theta, \phi \leftarrow$ Initialize parameter 
	\REPEAT
    	\STATE Sample training data point \\ $(V, S^*) \sim \{V_i, S^*_i\}_{i=1}^N$ 
    	\STATE Obtain variational parameter $\boldsymbol{\psi}$ via EquiNet \\ $\boldsymbol{\psi} \leftarrow \operatorname{EquiNet}(V;\phi)$
    	\STATE Sample $m$ subsets via $\operatorname{CopulaBernoulli}(V,v,\tau)$ or $\operatorname{IndBernoulli}(V,\tau)$ \tikzmark{top1}  \quad\  \tikzmark{right} \\ $S_n \sim q(S;\boldsymbol{\psi}), n = 1, 2, \dots, m$ 
    	\STATE Update the parameter $\phi$ by maximizing \textsc{ELBO} in \eqref{elbo}  \\ $\phi \leftarrow \phi + \eta \nabla_\phi \left( \frac{1}{m} \sum\limits_{n=1}^m F_\theta (S_n) - \sum\limits_{i = 1}^{|V|} [\psi_i \log \psi_i + (1 - \psi_i )\log (1 - \psi_i)] \right)$ \tikzmark{bottom1}
    	\STATE Initialize parameter $\boldsymbol{\psi}$ via EquiNet \\ $\boldsymbol{\psi}^{(0)} \leftarrow \operatorname{EquiNet(V;\operatorname{stop\_gradient}(\phi))}$
    	\FOR{$k \leftarrow 1, \dots, K$ \tikzmark{top2}} 
    	\FOR{$i \leftarrow 1, \dots, |V|$ in parallel} 
	        \STATE Sample $m$ subsets via the variational distribution\footnotemark \\ $S_n \sim q(S;\boldsymbol{\psi}^{(0)} | \psi_i^{(0)} \leftarrow 0), n = 1, 2, \dots, m$
	        \STATE Update the variational parameter $\boldsymbol{\psi}$ \\  $\psi^*_i \leftarrow \sigma \left( \frac{1}{m} \sum\limits_{n=1}^m \left[ F_\theta (S_n + i) - F_\theta (S_n ) \right] \right)$
    	\ENDFOR
    	\ENDFOR \tikzmark{bottom2}
    	\STATE Update the parameter $\theta$ by minimizing the cross entropy loss in \eqref{cross_entropy}  \tikzmark{top3} \\ $\theta \leftarrow \theta - \eta  \nabla_\theta \left( - \sum\limits_{i \in S^*} \log \psi_i^* - \sum\limits_{i \in V \backslash S^*} \log (1 - \psi_i^*) \right)$  \tikzmark{bottom3}
	\UNTIL convergence of parameters ($\boldsymbol{\theta}, \boldsymbol{\phi}$) 
\end{algorithmic}
\AddNote{2.5cm}{top1}{bottom1}{right}{\textbf{Optimize $\phi$}}
\AddNote{4.0cm}{top2}{bottom2}{right}{\textbf{Mean-field Iteration $\operatorname{MFVI}(\boldsymbol{\psi}^{(0)}, V, K)$}}
\AddNote{2.5cm}{top3}{bottom3}{right}{\textbf{Optimize $\theta$}}
\end{algorithm}
\footnotetext{Here we apply mean-field variational inference, which means the variational distribution $q$ is an independent Bernoulli distribution.}

\begin{algorithm}[H]
\caption{$\operatorname{IndBernoulli}(V, \tau)$}
\label{alg:indbernoulli}
\small
\textbf{Input}: $V$: ground set; $\tau$: temperature for Gumbel-Softmax \\
\textbf{Output}:  Sampled subset $\boldsymbol{s}$
\begin{algorithmic}[1]
    \STATE Obtain location parameter $\boldsymbol{\psi}$ via EquiNet: $\boldsymbol{\psi} \leftarrow \operatorname{EquiNet}(V;\phi)$
    \STATE Draw uniform noise: $u_i \sim \mathcal{U}({0}, 1), i = 1, \dots, |V|$
    \STATE Apply Gumbel-Softmax trick: $\Tilde{s}_i = \sigma \left( \frac{1}{\tau} \left(\log  \frac{\psi_i}{1 - \psi_i} + \log \frac{u_i}{1-u_i} \right) \right), i=1,\dots,|V|$
    \STATE Apply Straight-Through estimator: $\boldsymbol{s} = \operatorname{stop\_gradient}(\mathbb{I}(\Tilde{\boldsymbol{s}} \geq \boldsymbol{\epsilon}) - \Tilde{\boldsymbol{s}}) + \Tilde{\boldsymbol{s}}$, $\boldsymbol{\epsilon} \sim \mathcal{U}(\boldsymbol{0}, \boldsymbol{I})$
\end{algorithmic}
\end{algorithm}

\begin{algorithm}[H]
\caption{$\operatorname{CopulaBernoulli}(V,v, \tau)$}
\label{alg:copulabernoulli}
\small
\textbf{Input}: $V$: ground set; $v$: rank of perturbation; $\tau$: temperature for Gumbel-Softmax \\
\textbf{Output}:  Sampled subset $\boldsymbol{s}$
\begin{algorithmic}[1]
    \STATE Obtain location parameter $\boldsymbol{\psi}$ via EquiNet: $\boldsymbol{\psi} \leftarrow \operatorname{EquiNet}(V;\phi)$
    \STATE Draw Gaussian noise: \\ \COMMENT{{ \footnotesize In the following, $\boldsymbol{D}$ is a diagonal matrix and $\boldsymbol{P}$ is the lower-rank perturbation matrix. }} \\ $\boldsymbol{g} = \boldsymbol{D}^{1/2} \cdot \boldsymbol{\epsilon}_1 + \boldsymbol{P} \cdot \boldsymbol{\epsilon}_2$, $\boldsymbol{P} = [\boldsymbol{p}_1, \boldsymbol{p}_2, \dots, \boldsymbol{p}_v]$, $\boldsymbol{\epsilon}_1 \sim \mathcal{N}(\boldsymbol{0}, \boldsymbol{I}_{|V|})$ and $\boldsymbol{\epsilon}_2 \sim \mathcal{N}(\boldsymbol{0}, \boldsymbol{I}_{v})$
    \STATE Apply element-wise Gaussian CDF: $\boldsymbol{u} = \boldsymbol{\Phi}_{\operatorname{diag}(\boldsymbol{D} + \boldsymbol{P}\boldsymbol{P}^T)}(\boldsymbol{g})$
    \STATE Apply Gumbel-Softmax trick: $\Tilde{s}_i = \sigma \left( \frac{1}{\tau} \left( \log \frac{\psi_i}{1 - \psi_i} + \log \frac{u_i}{1-u_i} \right) \right), i=1,\dots,|V|$
    \STATE Apply Straight-Through estimator: $\boldsymbol{s} = \operatorname{stop\_gradient}(\mathbb{I}(\Tilde{\boldsymbol{s}} \geq \boldsymbol{\epsilon}) - \Tilde{\boldsymbol{s}}) + \Tilde{\boldsymbol{s}}$,  $\boldsymbol{\epsilon} \sim \mathcal{U}(\boldsymbol{0}, \boldsymbol{I})$
\end{algorithmic}
\end{algorithm}

\section{Experimental Details} \label{appendix_exp_details}
\subsection{The Architecture of EquiVSet}
In this section, we provide a detail architecture description of \ourscopula. \ourscopula consists of two different components that are implemented as neural networks: (i) the set function which is permutation invaraint and (ii) the recognition network which is permutation equivariant. We employ the DeepSet architecture to implement these two components, with the detailed architectures are given in \cref{tab:equivset_architecture}.

\newcommand{\relu}{\mathrm{ReLU}}
\newcommand{\sigmoid}{\mathrm{sigmoid}}
\newcommand{\softplus}{\mathrm{softplus}}
\newcommand{\tanhact}{\mathrm{tanh}}
\newcommand{\fc}{\mathrm{FC}}
\newcommand{\conv}{\mathrm{Conv}}
\newcommand{\sab}{\mathrm{SAB}}
\newcommand{\initlayer}{\mathrm{InitLayer}}
\newcommand{\reducesum}{\mathrm{SumPooling}}
\begin{table}[ht]
\small
\centering
\caption{Detailed architectures of  \ourscopula.}
\vspace{5pt}
\begin{tabular}{@{}ccc@{}}\toprule
\multicolumn{1}{c}{\textbf{Set Function}} & \multicolumn{2}{c}{\textbf{Recognition Network}}\\
\cmidrule(lr){1-1} \cmidrule(lr){2-3}
$\initlayer(S,256)$ & \multicolumn{2}{c}{$\initlayer(V,256)$} \\
$\reducesum$ & \multicolumn{2}{c}{$\fc(256, 500, \relu)$} \\
$\fc(256, 500, \relu)$ & \multicolumn{2}{c}{$\fc(500, 500, \relu)$} \\
$\fc(500, 500, \relu)$ & \multirow{2}*{$\boldsymbol{\psi} = \fc(500, 1, \sigmoid)$} & $\boldsymbol{D} = \operatorname{diag}(\fc(500, 1, \softplus))$ \\
$\fc(256, 1, -)$ &  & $\boldsymbol{P} = [\fc(500, 1, \tanhact)]^v$ \\
\bottomrule
\end{tabular}
\label{tab:equivset_architecture}
\end{table}

In \cref{tab:equivset_architecture}, $\initlayer(S,d)$ denotes the set transformation function, which encodes the set objects into vector representations. $\fc(d, h, f)$ denotes the fully-connected layer with activation function $f$. $\operatorname{diag}(\boldsymbol{v})$ is a diagonal matrix with the elements of diagonal being vector $\boldsymbol{v}$. $[\boldsymbol{p}]^v = [\boldsymbol{p}_1 , \dots, \boldsymbol{p}_v]$ denotes a matrix with $\boldsymbol{p}$ representing a column perturbation vector. 
Note that we also propose two variant methods, {\it i.e.,} \oursdiff and \oursind. For \oursdiff, we apply the same architecture of the the set function in \cref{tab:equivset_architecture}. We also exploit the same architecture for \oursind, but discarding the copula components, {\it i.e.,} $\boldsymbol{D}$ and $\boldsymbol{P}$.
In all experiments, we implement our models following the same architecture with the difference being that we apply various $\initlayer$ to different datasets. The architectures of $\initlayer$ for different datasets are depicted below. 

\textbf{Synthetic datasets.}
The synthetic datasets consist of the Tow-Moons and Gaussian-Mixture datasets. Each instance of the set is a two-dimensional vector, which represents the corresponding Cartesian coordinates. In this dataset, the $\initlayer$ is a one-layer feed-forward neural network $\fc(2, 256, -)$.

\textbf{Amazon Baby Registry.}
The Amazon baby registry dataset consists of a set of products that are characterized by a short textual description. We transform them into vector representations using the pre-trained BERT module \citep{devlin2018bert}. Thereby, each instance of the set is a $768$ dimensional feature vector. The $\initlayer$ is modelled as $\fc(768, 256, -)$.

\textbf{Double MNIST.} The double MNIST dataset consists of different digit images ranging from $00$ to $99$. Each image has the shape of $(64,64)$ and we reshape it into $(4096,)$. Therefore, the $\initlayer$ is designed as $\fc(4096, 256, -)$.

\textbf{CelebA.} The CelebA dataset contains $202,599$ number of face images. Each image is in the shape of $(3,64,64)$. We employ convolutional neural networks as $\initlayer$. Specifically, the architecture of $\initlayer$ is
\begin{align}
    \operatorname{ModuleList}([\conv(32,3,2,\relu), \conv(64,4,2,\relu), \nonumber \\ \conv(128,5,2,\relu), \operatorname{MaxPooling}, \fc(128, 256, -)]), \nonumber
\end{align}
where $\conv(d,k,s,f)$ is a convolutional layer with $d$ output channels, $k$ kernel size, $s$ stride size, and activation function $f$.

\textbf{PDBBind.}
The PDBBind database consists of experimentally measured binding affinities for biomolecular complexes \citep{liu2015pdb}. It provides detailed 3D Cartesian coordinates of both ligands and their target proteins derived from experimental ({\it e.g.}, X-ray crystallography) measurements. The
atomic convolutional network (ACNN) \citep{gomes2017atomic} provides meaningful vector features for complexes by constructing nearest neighbor graphs based on the 3D coordinates of atoms and predicting binding free energies. In this work, we apply the output of last second layer of the ACNN model followed by feed-forward neural networks to obtain the representations of complexes. More formally, the $\initlayer$ is defined as
\begin{align}
    \operatorname{ModuleList}([\mathrm{ACNN}[:-1], \fc(1922, 2048, \relu), \fc(2048, 256, -)]), \nonumber
\end{align}
where $\mathrm{ACNN}[:-1]$ denotes the ACNN module without the last prediction layer, whose output dimensionality is $1922$.

\textbf{BindingDB.}
The BindingBD dataset containts $52,273$ drug-target pairs. We exploit the DeepDTA model \citep{ozturk2018deepdta} to encode drug-target pairs as vector representations. Specifically, the DeepDTA model first represents the drug compound and target protein as sequences of one-hot vectors and encodes them as feature vectors using convolutional neural networks. The detailed architecture of $\initlayer$ used in this dataset is demonstrated in \cref{tab:bindingdb_initlayer_architecture}.
\begin{table}[ht]
\small
\centering
\caption{Detailed architectures of InitLayer in the BindingDB dataset.}
\vspace{5pt}
\begin{tabular}{@{}cc@{}}\toprule
\textbf{Drug} & \textbf{Target} \\
\cmidrule(lr){1-1} \cmidrule(lr){2-2}
$\conv(32,4,1,\relu)$ & $\conv(32,4,1,\relu)$ \\
$\conv(64,6,1,\relu)$ & $\conv(64,8,1,\relu)$ \\
$\conv(96,8,1,\relu)$ & $\conv(96,12,1,\relu)$ \\
$\mathrm{MaxPooling}$ & $\mathrm{MaxPooling}$ \\
$\fc(96, 256, \relu)$ & $\fc(96, 256, \relu)$ \\
\multicolumn{2}{c}{$\mathrm{Concat}$} \\
\multicolumn{2}{c}{$\fc(512, 256, -)$} \\
\bottomrule
\end{tabular}
\label{tab:bindingdb_initlayer_architecture}
\end{table}

\subsection{Implementation Details}
Here we provide a detailed description of the hyperparameters setup for our model \ours and its variants. \ours contains four important hyperprameters: the number of Monte Carlo sampling $m$ and mean-field iteration steps $K$ in \cref{alg:MFI}, and the rank of lower-rank perturbation $v$ in \eqref{lower-rank-perturb}. We set $m=5,v=5$ throughout the experiments. For the mean-field iteration steps $K$, we set it as $5$ for the variant model \oursdiff, and $1$ for \oursind and \ourscopula. It is noted that the hyperparameters above are empirically set, and we have detail sensitivity analysis in \cref{sensitive-analysis}. The proposed models are trained using the Adam optimizer \citep{kingma2014adam} with a fixed learning rate $1e-4$ and weight decay rate $1e-5$. We choose the batch size from $\{4,8,16,32,64,128\}$, since the model sizes for various datasets are different and we choose the largest batch size to enable it can be trained on a single Tesla V100-SXM2-32GB GPU.

We apply the early stopping strategy to train the models, including the baselines and our models. That is if the performances are not improved in continuous $6$ epochs, we early stop the training process. Each dataset is trained for maximum $100$ epochs. After each epoch, we validate the model and save the model with the best performance on the validation set. After training, we evaluate the performance of saved models on the test set. We repeat all experiments $5$ times with different random seeds and the average performance metrics and their standard deviations are reported as the final performances.

\subsection{Baselines}
Throughout the experiments, we compared our models with three conventional approaches: random guess, probabilistic greedy model (PGM) \citep{tschiatschek2018differentiable} and DeepSet \citep{zaheer2017deep}. Further descriptions of the benchmarks and implementation details are as follows.

    - Random: We report the expected value of the Jaccard coefficient (JC) of random guess. This baseline provides an estimate of how
    difficult the task is. Specifically, given a data point $(V, S^*)$, it can be computed as $\mathbb{E}(JC(V, S^*)) = \sum_{k=0}^{|S^*|} \frac{ \binom{|S^*|}{k} \binom{|V| - |S^*|}{|S^*| - k} }{\binom{|V|}{|S^*|}} \frac{k}{2|S^*| - k}$.
    
    -  PGM \citep{tschiatschek2018differentiable}: PGM is the most relevant method to the set functions learning under the OS oracle, that solves optimization \eqref{prob_set_func_learning} using greedy maximization algorithm with the virtues of differentiability and permutation invariance. We employ the same architecture defined in \cref{tab:equivset_architecture} to model the set function $F_\theta(S)$ in \eqref{per_varaint_prob}. The temperature parameter $\gamma$ is empirically set as $1$. We use Monte Carlo sampling to estimate \eqref{pgm_objective}. That is we randomly sample one permutation $\pi \sim \Pi^S$ and use $p_\theta (\pi | V)$ to approximate $p_\theta (S|V)$. The model is trained using the Adam optimizer, with batch size choosing from $\{4,8,16,32,64,128\}$, fixed learning rate $1e-4$, and fixed weight decay rate $1e-5$.
    
    -  DeepSet (NoSetFn) \citep{zaheer2017deep}: DeepSet is a neural-network-based architecture that satisfies permutation invariance and varying ground sets. Although the DeepSet architecture can be employed here to sample the optimal subset oracle, it does not learn the set functions explicitly. We exploit the same architecture of $F_\theta(S)$ in \cref{tab:equivset_architecture}, but drop the $\operatorname{SumPooling}$ operator to ensure the dimensionality of output is $|V|$. This baseline is trained by minimizing the objective in \eqref{cross_entropy} using the Adam optimizer with batch size choosing from $\{4,8,16,32,64,128\}$, fixed learning rate $1e-4$, and fixed weight decay rate $1e-5$. 

\subsection{Assumptions on the Underlying Data Generative Distribution of the OS Oracle}

In this section, we discuss the assumptions made about the data distribution for better understanding the set functions learning under optimal subset (OS) oracle.  Generally speaking, for any scenario with the output being a subset $S$ of the given ground set $V$ of the input, the proposed approach could be applied to predict the subset $S$ of the given ground set $V$. The only loose assumption is that the optimal subset oracle $S^*$ of a given ground set $V$ is generated by some underlying distribution formulated via a utility function that maximizes the utility value of OS oracle (see \eqref{prob_set_func_learning} in the main text). We further assume the utility function could be parameterized by a deep neural network, thanks to the universal approximation theorem \citep{leshno1993multilayer}.

This assumption is very weak and generally makes sense in practice. We also apply this assumption to the datasets used in the experiments. Specifically, in the product recommendation (\cref{appendix_prod_rec}), $V$ is the set of recommended products, and $S^*$ is the one the customer buys (or adds to the cart). Undoubtedly, the underlying generative distribution, or say the utility function is specified by the selection process of customers. In the set anomaly detection (\cref{appendix_set_anomaly_det}), given a ground set $V$, $S^*$ is generated as the one containing anomaly data points. Therefore, the utility function in this setting is formulated as the anomaly pattern. Moreover, in the compound selection \cref{appendix_compount_selection}, we applied high bioactivity and diversity filters to select compounds. In this case, the utility function is determined by the bioactivity and diversity of the group of compounds.

\subsection{Detailed Experimental Settings for Product Recommendation} \label{appendix_prod_rec}
\textbf{Detailed Descriptions of the Amazon Baby Registry Dataset.}
The Amazon baby registry data \citep{gillenwater2014expectation} consists of baby registry data collected from Amazon and is split into several datasets according to product categories, such as toys, furniture, etc.
For each category, which can be considered as the product database, Amazon provides multiple sets of products selected by different customers. Thereby, these subsets of products can be viewed as OS oracles. 
To ensure that  each ground set $V$ only contains one OS oracle $S^*$, we construct the sample $(V, S^*)$ as follows.
For each subset of products selected by an anonymous user, we filter it out if its size is equal to $1$ or larger than $30$. For each OS oracle $S^*$ in the remaining subsets, we randomly sample $30 - |S^*|$ products in the same category to construct $V \backslash S^*$.
We summarize the statistics of the categories in \cref{tab:statistic_amazon_product}.

\begin{table}[t]
\centering
\small
\caption{The statistics of Amazon product dataset. \#products: number of all products in the category. $|\mathcal{D}|$:  number of samples in the dataset. }
\label{tab:statistic_amazon_product}
\vspace{5pt}
\begin{tabular}{@{}cccccccc@{}}\toprule
Categories & \#products & $|\mathcal{D}|$ & $|V|$ & $\sum |S^*|$ & $\mathbb{E}[|S^*|]$ & $\min_{S^*} |S^*|$ & $\max_{S^*} |S^*|$\\
\midrule
Toys & 62 & 2,421  & 30 & 9,924 & 4.09 & 3 & 14 \\
Furniture & 32 & 280  & 30 & 892 & 3.18 & 3 & 6 \\
Gear & 100 & 4,277  & 30 & 16,288 & 3.80 & 3 & 10 \\
Carseats & 34 & 483  & 30 & 1,576 & 3.26 & 3 & 6 \\
Bath & 100 & 3,195  & 30 & 12,147 & 3.80 & 3 & 11 \\
Health & 62 & 2,995  & 30 & 11,053 & 3.69 & 3 & 9 \\
Diaper & 100 & 6,108  & 30 & 25,333 & 4.14 & 3 & 15 \\
Bedding & 100 & 4,524  & 30 & 17,509 & 3.87 & 3 & 12 \\
Safety & 36 & 267  & 30 & 846 & 3.16 & 3 & 5 \\
Feeding & 100 & 8,202  & 30 & 37,901 & 4.62 & 3 & 23 \\
Apparel & 100 & 4,675  & 30 & 21,176 & 4.52 & 3 & 21 \\
Media & 58 & 1,485  & 30 & 6,723 & 4.52 & 3 & 19 \\
\bottomrule
\end{tabular}
\end{table}

\textbf{Comparing with the Setting of \cite[Section 5.3]{tschiatschek2018differentiable}.}
In \citep[Section 5.3]{tschiatschek2018differentiable}, \citet{tschiatschek2018differentiable} consider an alternative setting which is different from ours. Specifically, they construct the ground set $V$ as all the products in a category, and view the selected subsets of  all the customers as the corresponding optimal subsets. That is why they have the data points  in the form of $\{V, (S^*_1, \dots, S^*_N)\}$. This is a bit 
problematic since it is deviated from the real world scenario:  naturally the chosen subset $S^*_i$ shall depend on both $V$ and the $i$-th customer's personal preference. However, the customer is fully anonymized, so no information can be extracted from this dataset.

In order to be aligned with the real world scenario, we curate the dataset in the following way, in order to make data samples in the OS supervision oracle with the data in the form of $\{V_i, S^*_i\}$.   

For each category, we generate samples $(V, S^*)$ as follows. Firstly, we filter out those subsets selected by customers whose size is equal to $1$ or larger than $30$. Then we split the remaining  subset collection $\mathcal{S}$ into training, validation and test folds with a $1:1:1$ ratio. Finally for each OS oracle $S^* \in \mathcal{S}$, we randomly sample additional $30 - |S^*|$ products from the same category to construct $V\backslash S^*$. 

In this way, we construct one data point $(V, S^*)$ for each customer, which reflects this real world scenario: $V$ contains 30 products displayed to the customer, and the customer is interested in  checking $|S^*|$ of them.  
This is also consistent with real world recommender system, as users can only browse a small number of products at a time since the screen size of the device is limited, and the user has limited attention.

\subsection{Detailed Experimental Settings for Set Anomaly Detection} \label{appendix_set_anomaly_det}
In this experiment, we evaluate our methods on two real-world datasets:

\textbf{Double MNIST:} The dataset consists of 1000 images for each digit ranging from $00$ to $99$. For each sample $(V, S^*)$, we randomly sample $n \in \{2,\dots,5\}$ images with the same digit to construct the OS oracle $S^*$, and then select $20 - |S^*|$ images with different digits to construct the set $V \backslash S^*$. An example is shown in \cref{fig:double_mnist}.

\textbf{CelebA:} The CelebA dataset contains $202,599$ images with $40$ attributes. As shown in \cref{fig:celeba}, we select two attributes at random and construct the set with the size of $8$. For each ground set $V$, we randomly select $n \in \{2,3\}$ images as the OS oracle $S^*$, in which neither of the two attributes is present.
In this way, we arrive at  train, val, test datasets with 10,000, 1000, 1000 samples respectively. 

\begin{figure}[t]
\centering
\includegraphics[width=4.0in]{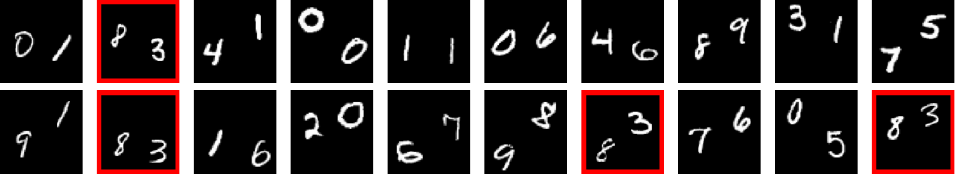}
\caption{A sampled data for the Double MNIST dataset, which consists of $|S^*|$ images with the same digit (red box, $83$ in this case) and $20 - |S^*|$ images with different digits.}
\label{fig:double_mnist}
\end{figure}

\begin{figure}[t]
\centering
\includegraphics[width=4.0in]{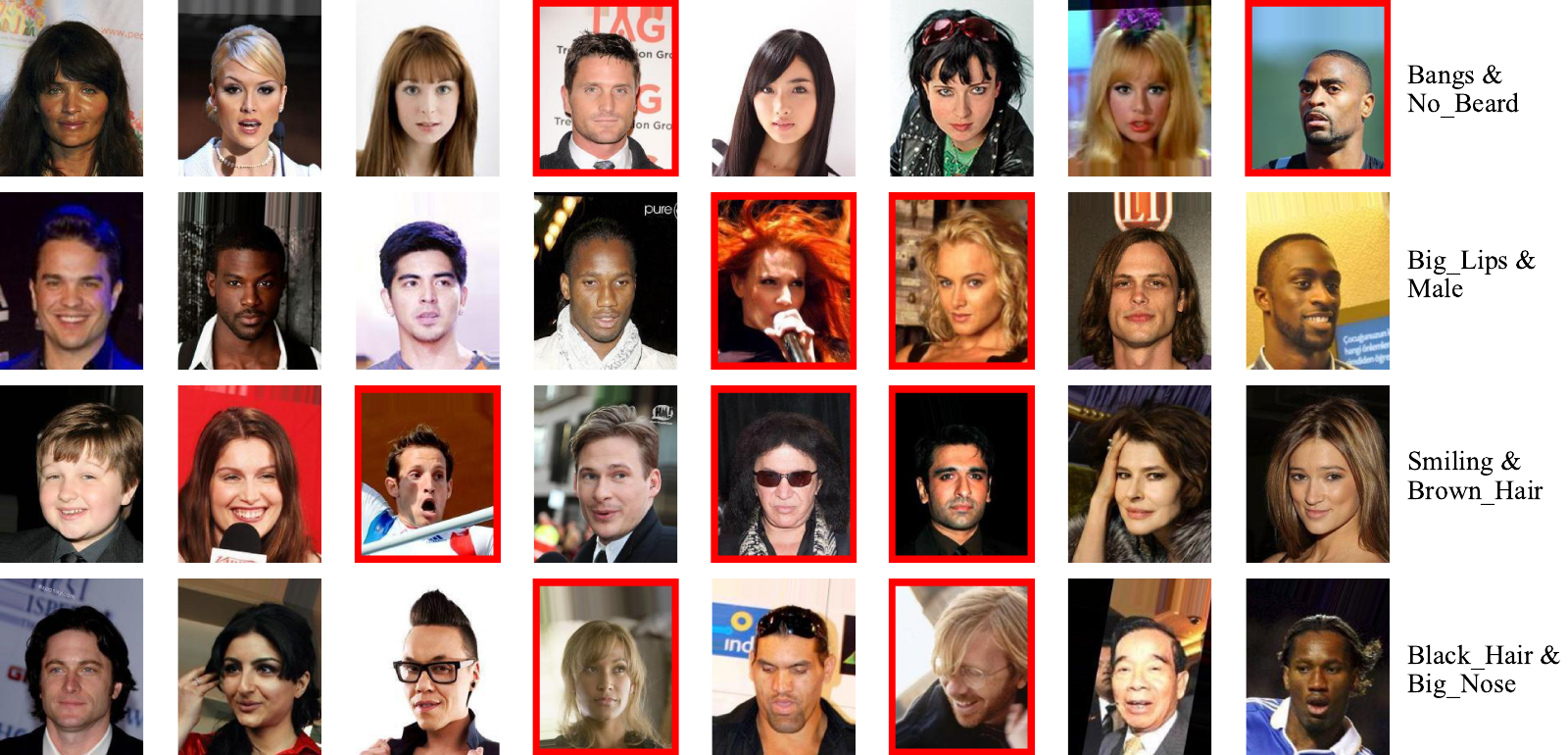}
\caption{Sampled data points for the CelebA dataset. Each row is a sample, consisting of $|S^*|$ anomalies (red box) and $8-|S^*|$ normal images. In each sample, a normal image has two attributes (rightmost column) while anomalies do not have neither of them.}
\label{fig:celeba}
\end{figure}

\subsection{Detailed Experimental Settings for Compound Selection} \label{appendix_compount_selection}
\cref{alg:compound_selection} shows the corresponding data generation process of simulating the OS oracle for compound selection. In this algorithm, $\operatorname{random\_choose}(\mathcal{C},n)$ means randomly choosing $n$ compounds from the database $\mathcal{C}$ ({\it i.e.,} PDBBind or BindingDB), and $\operatorname{topK\_bioactivity}(V,m)$ represents selecting the top-$m$ compounds with highest biological activity from the ground set $V$. These two operators combine together to form the bioactivity filter, in which we set $(n, m)$ as $(30, 10)$, and $(300, 100)$ for PDBBind and BindingDB, respectively.
To further apply the diversity filter, we use the RDKit\footnote{\url{https://github.com/rdkit/rdkit}} tools to compute the similarity between each molecule pair based on their topological fingerprints. This operator corresponds to the line $3$ of \cref{alg:compound_selection}, in which $\operatorname{cal\_fingerprint\_similarity}(S)$ returns the similarity matrix $\boldsymbol{M} \in \mathbb{R}^{|S|\times |S|}$ of the set of compounds $S$.
Since rows (or columns) of the similarity matrix can be regarded as the features of the corresponding molecules, the molecules are clustered based on these similarity features by applying the affinity propagation algorithm. 
The OS oracle $S^*$ is finally represented by the center of each cluster.
Note that, each compound consists of two small molecules, {\it i.e.,} the protein-ligand molecules in PDBBind, and the drug-target molecules in BindingDB. We use the protein and drug molecules to compute the fingerprint similarity for PDBBind and BindingDB, respectively.

\begin{algorithm}[!t]
\small
\textbf{Input}: $\mathcal{C}$: compound database; $n$: size of ground set; $m$: number of the most active compounds \\
\textbf{Output}: Data point $ (V, S^*)$
\begin{algorithmic}[1]
    \STATE Randomly select $n$ compounds to construct the ground set \tikzmark{top1} \qquad \tikzmark{right} \\ $V \leftarrow \operatorname{random\_choose}(\mathcal{C}, n)$
    \STATE Filter out $m$ compounds with the highest bioactivity \\ $S \leftarrow \operatorname{topK\_bioactivity}(V, m)$ \tikzmark{bottom1}
    \STATE Calculate the similarity matrix \tikzmark{top2} \\ $\boldsymbol{M} \leftarrow \operatorname{cal\_fingerprint\_similarity}(S)$
    \STATE Apply the affinity propagation algorithm \\ $\text{af} \leftarrow \operatorname{affinity\_propogation}(\boldsymbol{M})$
    \STATE Assign the OS oracle as cluster centers \\ $S^* \leftarrow \operatorname{af.cluster\_centers\_indices}$ \tikzmark{bottom2}  \vspace{-3.mm} 
\end{algorithmic}
\AddNote{3.cm}{top1}{bottom1}{right}{\textbf{bioactivity filter}}
\AddNote{3.cm}{top2}{bottom2}{right}{\textbf{diversity filter}}
\caption{OS Oracle Generation Algorithm for Compound Selection Task}
\label{alg:compound_selection}
\end{algorithm}

\section{Additional Experiments}

\subsection{Experiments on Varying Ground Set} \label{exp-varying-ground-set}
Thanks to the virtues of DeepSet, our models are able to process input sets of variable sizes, which is termed as \emph{varying ground set} property. To examine the impact of ground set sizes, we care about the following two questions: i) how well the model performs on different sizes of ground set during the test time; and ii) how well does the model train on ground sets of different sizes? To answer these two questions, we conduct experiments on the synthetic datasets using the proposed model $\ours_\texttt{copula}$.

\paragraph{Set Size Transferability Analysis}
We first experiment to understand the pattern of set size transferability. In this experiment, we train the model using fixed sizes of the ground set but test the trained model on different sizes. We present two scenarios: train on a small size but test on a large one, and train on a large size but test on a small one. For the former one, we fix the size of OS oracle $S^*$ to be $10$, and train the model with ground set $V$ of size $100$. After training, we test it using varying sizes of ground set in the range of $\{200, 400, 600, 800, 1000\}$. For the latter one, we fix the size of OS oracle $S^*$ to be $10$, and train the model with ground set $V$ of size $1000$. After training, we test it using varying sizes of ground set in the range of $\{100, 200, 400, 600, 800\}$. The former and latter experiments are conducted on the Two-Moons and Gaussian-Mixture datasets, respectively, with the results shown in \cref{fig:synthetic_size_tran}. As can be seen, the performance would be slightly reduced if tested on a different size. Moreover, increasing the difference would enlarge the reduction.
\begin{figure}[!t]
	\centering
		\begin{minipage}[t]{0.95\linewidth}
			\centering
			\includegraphics[width=2.0in]{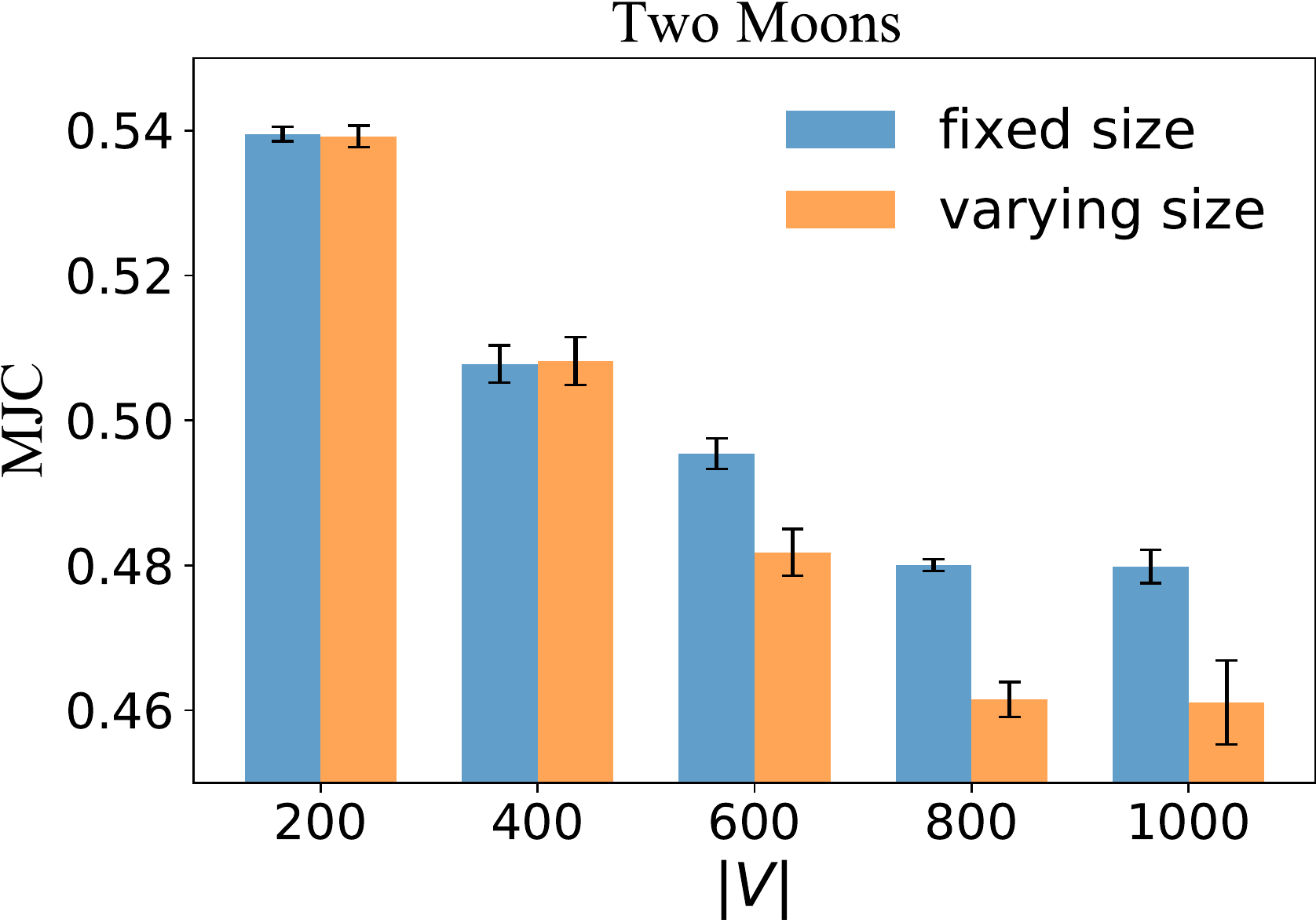}
			\includegraphics[width=2.0in]{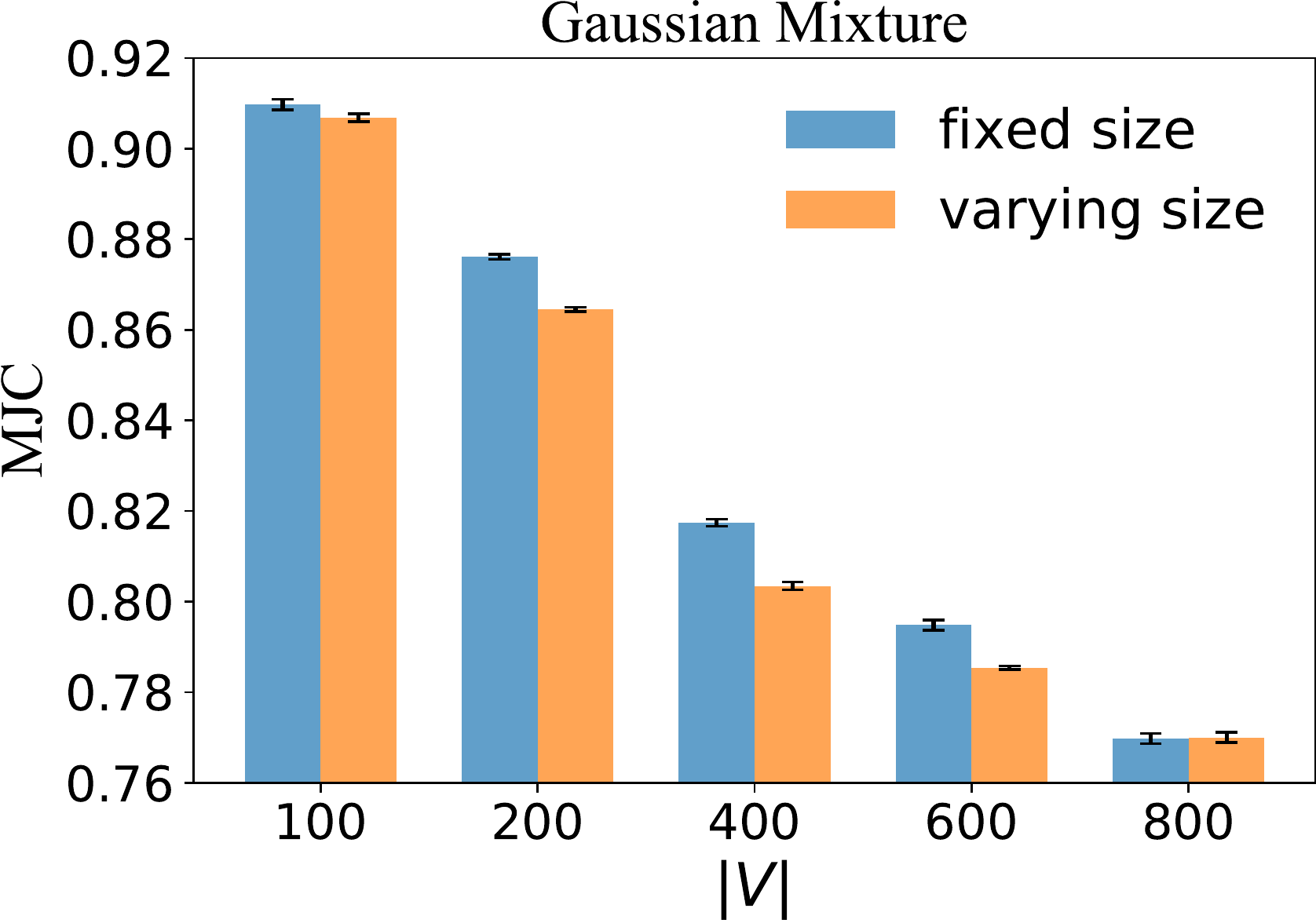}
	        \caption{ Synthetic results of $\ours_\texttt{copula}$ for set size transferability analysis, in which the blue bars represent the performances of using the same sizes of ground set during the training and test time, while the yellow bars mean using different sizes of ground set during the test time. Detailed descriptions are given in the main text.}
	        \label{fig:synthetic_size_tran}
		\end{minipage}
\end{figure}

\paragraph{Selection Ratio Analysis}
To answer the second question, we fix the size of OS oracle $S^*$ to be $10$, and experiment with different selection ratios $\frac{|S^*|}{|V|}$ in the range of $\{0.5, 0.2, 0.1, 0.05, 0.01\}$. Unlike the set size transferability analysis, in this experiment, the selection ratios are the same during training and testing. \cref{fig:synthetic_diff_set_size} shows the performance of different ratios on two synthetic datasets. We observe that increasing the ratio would deteriorate the model performance. This phenomenon makes intuitive sense, since sampling subset from a large collection is more difficult. Moreover, the model performs worst when the ratio is equal to $0.5$. This is partly because $S^*$ and $V \backslash S^*$ are randomly sampled from one of two components. When $|S^*|$ = $|V \backslash S^*|$, the model struggles to identify the optimal subset. 
\begin{figure}[!t]
	\centering
		\begin{minipage}[t]{0.95\linewidth}
			\centering
			\includegraphics[width=2.0in]{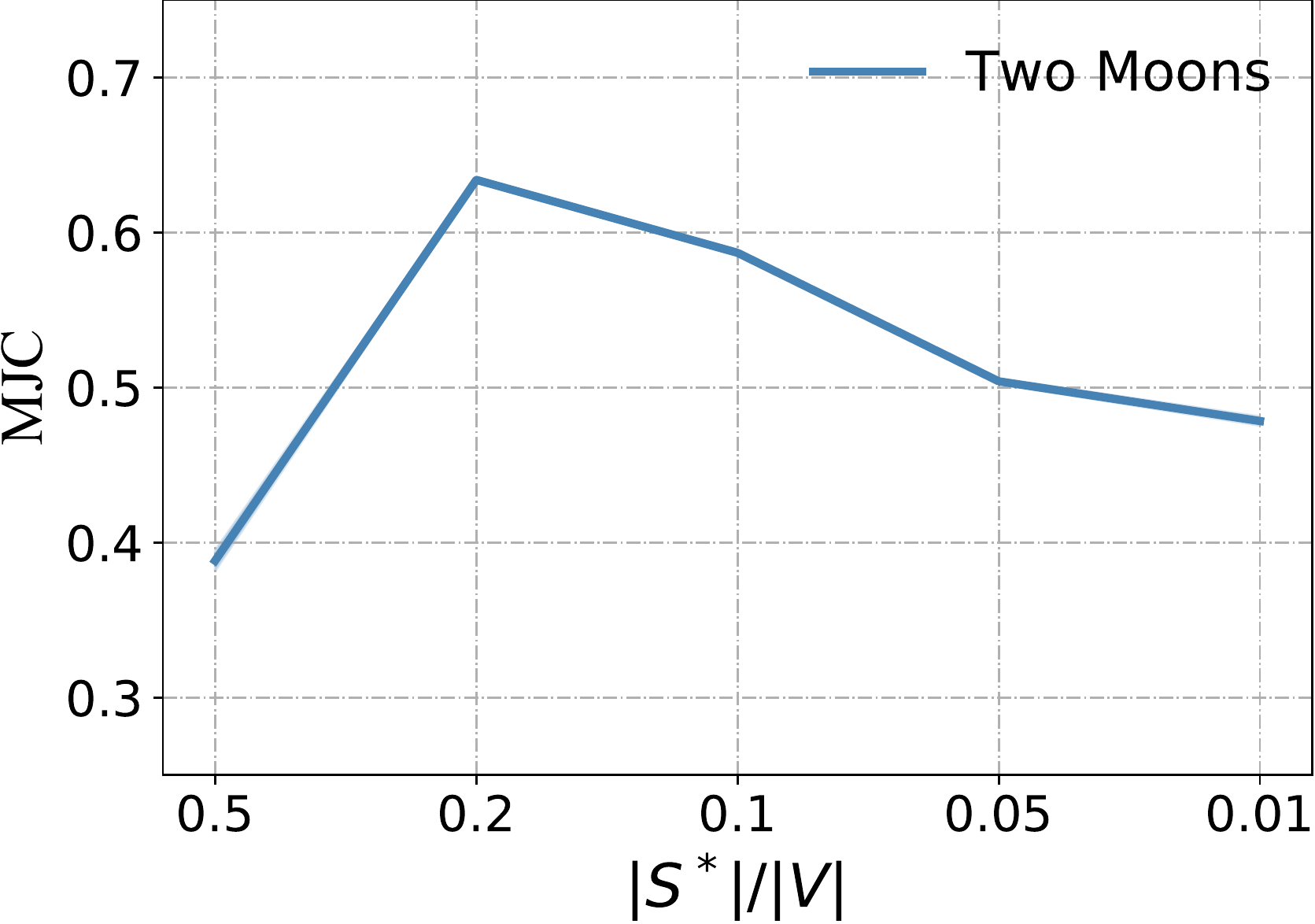}
			\includegraphics[width=2.0in]{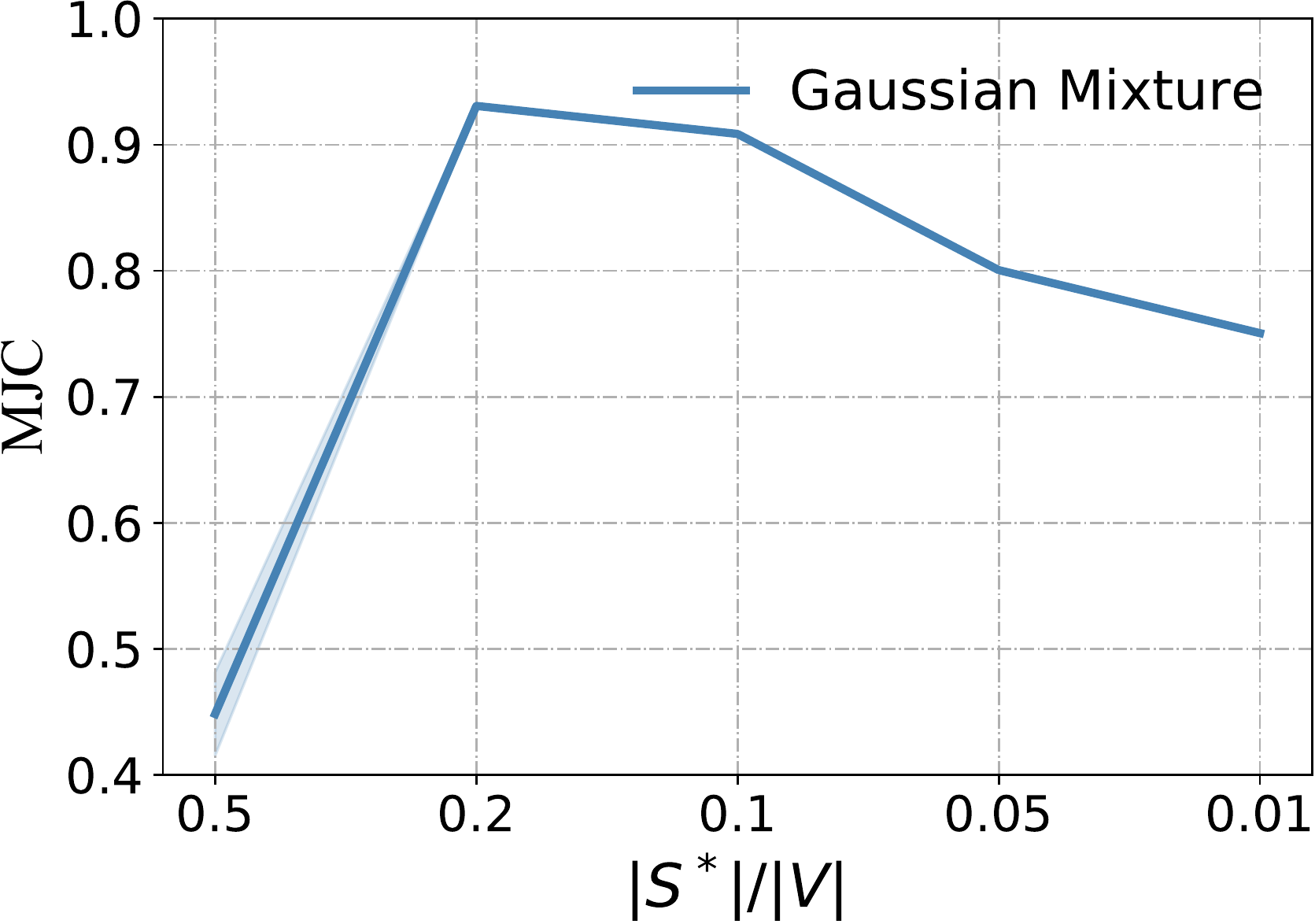}
	        \caption{ Synthetic results of \ourscopula with varying selection ratios.}
	        \label{fig:synthetic_diff_set_size}
		\end{minipage}
\end{figure}

\subsection{Comparisons with Set Transformer} \label{appendix_comp_set_trans}

Set Transformer \citep{lee2019set}, which satisfies permutation invariant, is a well-known architecture used to model interactions among elements in the input set. Similar to DeepSet, Set Transformer could be adapted to serve as a baseline. Specifically, the architecture of the SetTransformer (NoSetFn) baseline is
\begin{align}
     \operatorname{ModuleList}( \initlayer(V, 256), \sab(256, 500, 2, -), \sab(500, 1, 2, \sigmoid)), \nonumber
\end{align}
where $\sab(d, h, m, f)$\footnote{We take the implementation of SAB from \url{https://github.com/juho-lee/set_transformer}.} denotes the set attention block \citep{lee2019set} with $d$ dimensional set input, $h$ dimensional set output, $m$ multi-head attentions, and activation function $f$. We train the adapted Set Transformer model: $2^V \rightarrow [0,1]^{|V|}$ with cross entropy loss and sample the subset via the topN rounding. It is noteworthy that, like DeepSet (NoSetFn), SetTransformer (NoSetFn) does not learn a set function explicitly, although it can be adapted as a baseline and can be viewed as merely modelling the amortized network in our \ours framework.

For fair comparison, we also replace the DeepSet backbone with Set Transformer in \ours. Specifically, the $\initlayer$ in \cref{tab:equivset_architecture} is replaced with 
\begin{align}
     \operatorname{ModuleList}( \initlayer(S, 256), \sab(256, 500, 2, -), \sab(500, 256, 2, -)). \nonumber
\end{align}
Experiments are conducted on product recommendations, with the results shown in \cref{tab:product_recom_settrans}. It shows that the proposed approaches with the Set Transformer backbone outperform the Set Transformer (NoSetFn) comprehensively. One could also compare the results of \cref{tab:product_recom} in the paper. It can be seen that the proposed \ours (with DeepSet backbone) also performs better than the Set Transformer baseline. Moreover, \ours (with DeepSet backbone) outperforms \ours (with Set Transformer backbone) consistently, indicating that \ours has great potential to be improved with more advanced architecture.

\begin{table}[t]
\centering
\small
\caption{Product recommendation results on Set Transformer baselines and backbones.}
\label{tab:product_recom_settrans}
\hspace*{-0.3cm}
\setlength{\tabcolsep}{2.mm}{
\begin{tabular}{@{}ccccccc@{}}\toprule
Categories & Set Transformer (NoSetFn) & \oursdiff (ours) & \oursind (ours) & \ourscopula (ours) \\
\midrule
Toys & 0.640 $\pm$ 0.030 & 0.690 $\pm$ 0.030 & 0.680 $\pm$ 0.020 & \textbf{0.717} $\pm$ \textbf{0.006}\\
Furniture & \textbf{0.175} $\pm$ \textbf{0.008} & 0.170 $\pm$ 0.020 & {0.159} $\pm$ {0.006} & 0.166 $\pm$ 0.007\\
Gear & 0.639 $\pm$ 0.006 & 0.750 $\pm$ 0.030 & 0.690 $\pm$ 0.020 & \textbf{0.700} $\pm$ \textbf{0.010}\\
Carseats  & 0.219 $\pm$ 0.005 & \textbf{0.219} $\pm$ \textbf{0.006} & 0.219 $\pm$ 0.009 & 0.216 $\pm$ 0.008\\
Bath & 0.725 $\pm$ 0.005 & 0.800 $\pm$ 0.020 & 0.800 $\pm$ 0.010 & \textbf{0.810} $\pm$ \textbf{0.010}\\
Health & 0.680 $\pm$ 0.010 & 0.750 $\pm$ 0.020 & 0.750 $\pm$ 0.020 &\textbf{0.760} $\pm$ \textbf{0.020}\\
Diaper & 0.789 $\pm$ 0.005 & 0.871 $\pm$ 0.009 & 0.870 $\pm$ 0.010 &\textbf{0.886} $\pm$ \textbf{0.009}\\
Bedding & 0.760 $\pm$ 0.020 & 0.859 $\pm$ 0.008 & 0.860 $\pm$ 0.020 &\textbf{0.860} $\pm$ \textbf{0.007}\\
Safety & 0.257 $\pm$ 0.005 & 0.240 $\pm$ 0.006 & 0.240 $\pm$ 0.010 &\textbf{0.260} $\pm$ \textbf{0.030}\\
Feeding & 0.783 $\pm$ 0.006 & \textbf{0.886} $\pm$ \textbf{0.004} & 0.881 $\pm$ 0.010 &{0.878} $\pm$ 0.009\\
Apparel & 0.680 $\pm$ 0.020 & 0.760 $\pm$ 0.010 & 0.550 $\pm$ 0.010 &\textbf{0.770} $\pm$ \textbf{0.010}\\
Media & 0.540 $\pm$ 0.020 & 0.615 $\pm$ 0.008 & 0.610 $\pm$ 0.010 &\textbf{0.620} $\pm$ \textbf{0.009}\\
\bottomrule
\end{tabular}
\vspace{-2mm}
}
\end{table}

\subsection{Experiments on Set Anomaly Detection with F-MNIST and CIFAR-10} \label{appendix_sad_fmnist_cifar10}

In this experiment, we further perform set anomaly detection on the other two datasets: F-MNIST \citep{xiao2017fashion} and CIFAR-10 \citep{krizhevsky2009learning}. Both two datasets contain images with 10 different labels. For each dataset, we randomly sample $n \in \{2,3\}$ images as the OS oracle $S^*$, and then select $8 - |S^*|$ images with different labels to construct the set $V\backslash S^*$. We finally obtain the training, validation, and test set with the size of $10,000, 1,000, 1,000$, respectively, for both two datasets. Illustrations of sampled data are shown in \cref{fig:fmnist-cifar10}.

The results are shown in \cref{fig:fmnist-cifar10}. We see that the variants of our model consistently outperform baseline methods strongly. Moreover, \oursdiff seems to perform better than \oursind and \ourscopula in set anomaly detection (similar results can be found in \cref{tab:meta_anomaly}). However, this is not a consistent phenomenon. It seems that in most scenarios, e.g., product recommendation, compound selection, and synthetic dataset, \ours performs better than \oursdiff.

\begin{figure}[!t]
    \begin{minipage}[!b]{0.48\linewidth}
        \centering
		\includegraphics[width=2.6in]{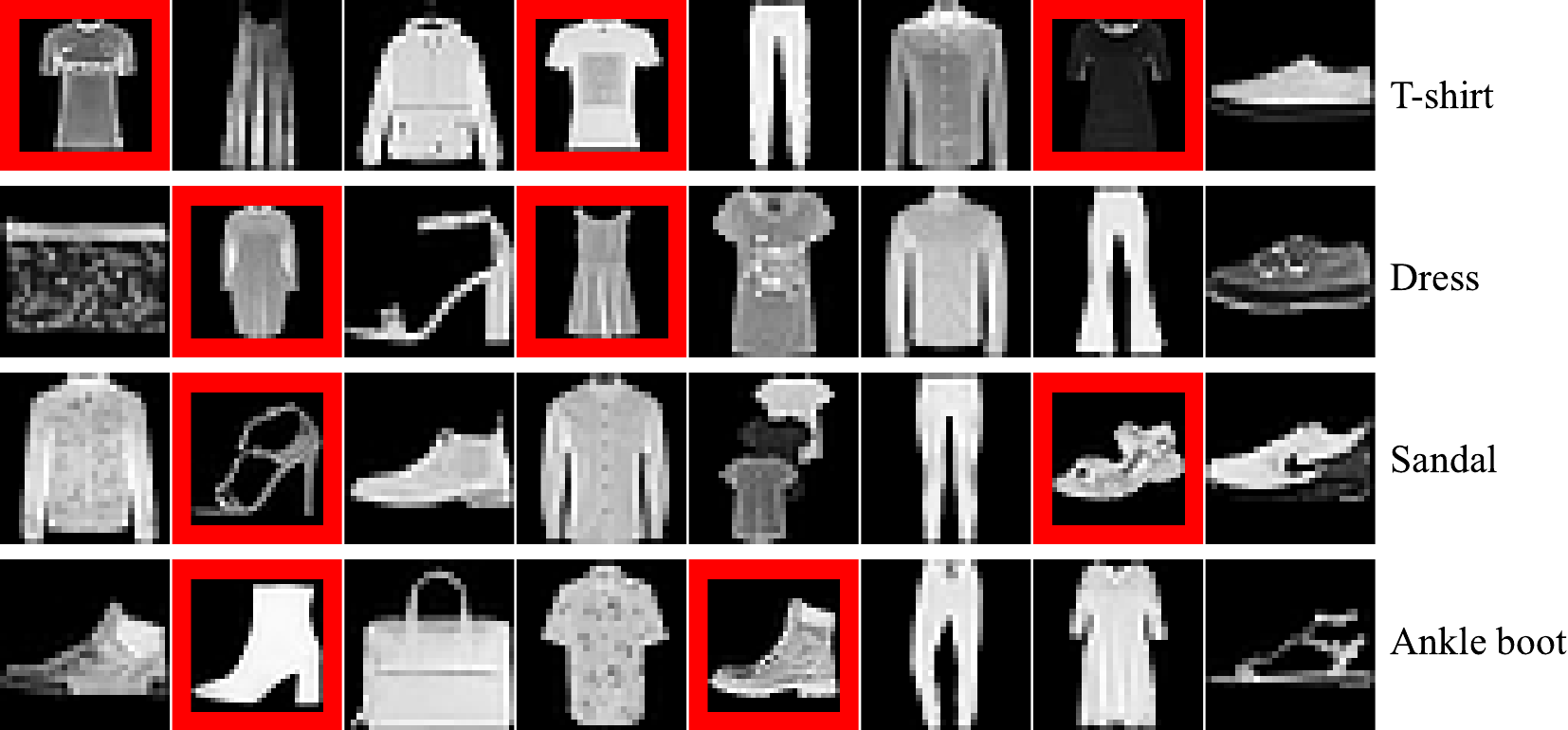}
    \end{minipage}
    \begin{minipage}[!b]{0.48\linewidth}
        \centering
		\includegraphics[width=2.6in]{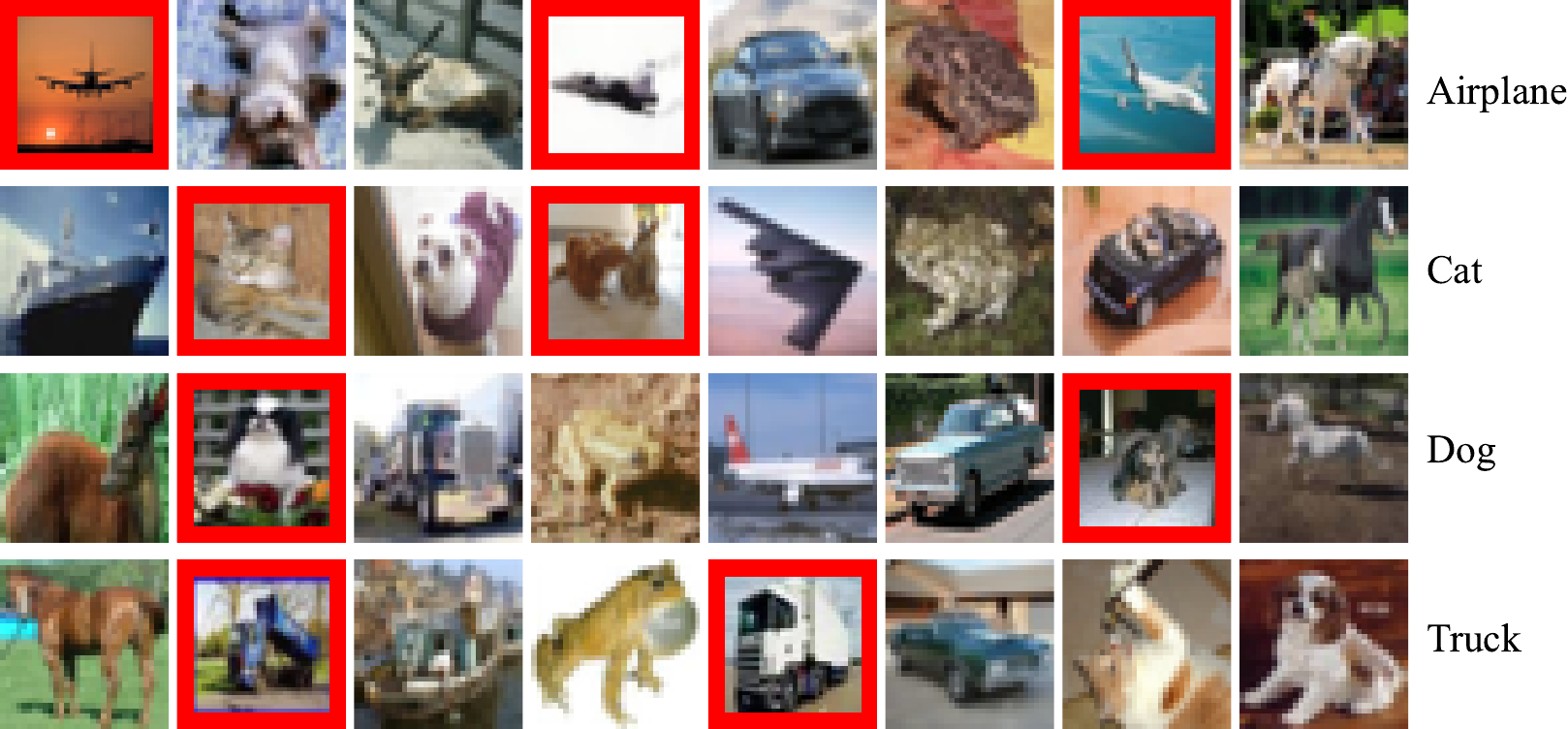}
    \end{minipage}
    \caption{Sampled data points for the F-MNIST (left) and CIFAR-10 (right) datasets. Each row is a sample, containing of $|S^*|$ images (red box) with the same label (rightmost column)  and $8-|S^*|$ images with different labels.}
	\label{fig:fmnist-cifar10}
\end{figure}

\subsection{Experiments on Compound Selection with Only the  Bioactivity Filter} \label{appendix_exp_compount_selection_activity}
To further evaluate the potential of \ours for drug discovery, we consider an alternative setting here. 
In contrast to the task in \cref{sec_compound_selection}, which aims at selecting the most active compounds while preserving diversity, the task defined here only focuses on selecting the compounds with the highest bioactivity, which results a relatively simple selection process. 
The following is a detailed description.

\textbf{PDBBind:} To construct a data point $(V, S^*)$, we randomly sample $30$ complexes as the ground set $V$ from the PDBBind database, and  $S^*$ is generated by the five most active complexes in $V$. Finally, we obtain the training, validation, and test set with the size of $1,000$, $100$, $100$, respectively. \textbf{BindingDB:} We construct the ground set $V$ by randomly sampling $300$ drug-targets from the BindingDB database and generate $S^*$ with the $15$ most active drug-target pairs. We finally obtain the training, validation, and test set with the size of $10,00$, $1,00$, and $1,00$, respectively.

\cref{tab:appebdix_drug_ai2} shows that our methods outperform the baselines. Meanwhile, the baselines also show satisfactory results.  
That is because identifying the most active compounds is a relatively  simple task, especially for the PDBBind dataset with complex structures. 
More specifically, the model could predict the activity value of complexes precisely without considering the interactions between elements in the set, since the structure of complexes has provided sufficient information for this task.
It is worth noting that the models in this task perform better than that in \cref{sec_compound_selection} partly because a one-layer filter (i.e. bioactivity) represents an easier way to replicate the OS oracle than a two-layer filter (i.e. bioactivity and diversity).
Nevertheless, both experimental results in \cref{sec_compound_selection} and here demonstrate the effectiveness of \ours for facilitating the complicated compound selection process.

\begin{table}[!t]
\begin{minipage}[t]{0.48\linewidth}
\centering
\small
\caption{Set anomaly detection results on the F-MNIST and CIFAR-10.}
\label{tab:appendix_meta_anomaly}
\setlength{\tabcolsep}{1.5mm}{
\scalebox{0.9}{
\begin{tabular}{@{}ccccc@{}}\toprule
Method & F-MNIST & CIFAR-10  \\
\midrule
Random & 0.193 & 0.193 \\
PGM & 0.540 $\pm$ 0.020 & 0.450 $\pm$ 0.020 \\
DeepSet (NoSetFn) & 0.490 $\pm$ 0.020 & 0.316 $\pm$ 0.008 \\
\oursdiff (ours) & \textbf{0.700} $\pm$ \textbf{0.020} & \textbf{0.710} $\pm$ \textbf{0.010} \\
\oursind (ours) & 0.590 $\pm$ 0.010 & 0.570 $\pm$ 0.020 \\
\ourscopula (ours) & 0.650 $\pm$ 0.010 & 0.600 $\pm$ 0.010 \\
\bottomrule
\end{tabular}}}
\end{minipage}
\begin{minipage}[t]{0.48\linewidth}
\centering
\small
\caption{Compound selection results with only the bioactivity filter.}
\vspace{-1mm}
\label{tab:appebdix_drug_ai2}
\setlength{\tabcolsep}{1.5mm}{
\scalebox{0.9}{
\begin{tabular}{@{}ccccc@{}}\toprule
Method & PDBBind & BindingDB  \\
\midrule
Random & 0.099 & 0.009 \\
PGM & 0.910 $\pm$ 0.010 & 0.690 $\pm$ 0.020\\
DeepSet (NoSetFn) & 0.910 $\pm$ 0.010 & 0.680 $\pm$ 0.010\\
\oursdiff (ours) & 0.920 $\pm$ 0.010 & 0.690 $\pm$ 0.020\\
\oursind (ours) & {0.930} $\pm$ {0.010} & 0.697 $\pm$ 0.006\\
\ourscopula (ours) & \textbf{0.931} $\pm$ \textbf{0.008} & \textbf{0.700} $\pm$ \textbf{0.008}\\
\bottomrule
\end{tabular}}}
\end{minipage}
\vspace{-2mm}
\end{table}

\subsection{Sensitivity Analysis of Hyperparameters} \label{sensitive-analysis}
The proposed model \ourscopula has four important hyperparameters: the number of Monte Carlo sampling $m$ and mean-field iteration step $K$ in \cref{alg:MFI}, the rank of lower-rank perturbation $v$ in \eqref{lower-rank-perturb}, and the temperature $\tau$ of Gumbel-Softmax in \cref{alg:indbernoulli,alg:copulabernoulli}. In this section, we discuss the impact of these hyperparameters through a sensitivity analysis on the Amazon product datasets.

\paragraph{Impact of the Mean Field Iteration Step}
Since iteration step $K$ controls the convergence of mean-field iterative algorithms, this hyperparameter is highly relevant to the final performance of \ourscopula. We experiment with different $K$ on the Amazon product dataset. The results are shown in the first row of \cref{fig:sensitive_analysis}. We notice that increasing K would degenerate the model's performance. This seems to be embarrassingly surprising at first glance, since a large stride $K$ encourages convergence with guarantee, resulting in a more robust training process. It is worth to be noted that in this method, we apply an amortized variational distribution to initialize the parameters for mean-field iterative algorithms. Since the amortized variational distribution is modeled with Gaussian copula, it can effectively capture the correlation among elements in the set, such that obtaining a better local optimal. However, if the iterative step $K$ is large, the model inclines to converge to the local optimal that is the same as the original mean-field iteration. As a result, the benefit of correlation-aware inference provided by the Gaussian copula would be diminished. This explains why the iterative step $K$ cannot be set too large.

\paragraph{Impact of the Number of MC Sampling}
The number of Monte Carlo (MC) sampling $m$ plays an important role in the proposed method. It is widely known that increasing number of samples would reduce the variance of MC sampling. Therefore, using larger $m$ would result in a better approximation of the gradient of multilinear extension $\nabla_{\psi} f_{\mathrm{mt}}^{F_\theta}$ and thus better performance. This hypothesis is validated by the empirical results show in the second row of \cref{fig:sensitive_analysis}. It can be seen that as the sample number increases, the performance rises steadily at first and then gradually converges into a certain level. Undoubtedly, a large number would increase the computational complexity. In this regard, we uniformly set it as $5$ in all experiments.

\paragraph{Impact of the Lower-rank Perturbation}
Lower-rank perturbed covariance matrix enables the proposed method to model the correlation information of elements in the set. To investigate its impacts, we evaluate the performance of \ourscopula under different values of rank $v$. The results are demonstrated in the third row of \cref{fig:sensitive_analysis}. Notably, the proposed model with $v=0$ is equivalent to \oursind. 
It can be seen that as the number of ranks increases, the performances also increase, indicating the hypothesis that employing the variational distribution with correlations can increase the model's representational capacity and thereby results in a better approximation in turn.  It is worth noting that the most significant performance improvement is observed between the models with $v = 0$ and $v = 1$, and then as the value of $v$ continues to increase, the improvement becomes relatively small. This indicates that it is feasible to set the $v$ to a relatively small value to save computational resources while retaining competitive performance.

\paragraph{Impact of the Temperature Parameter of Gumbel-Softmax}
The temperature parameter $\tau$ controls the trade-off between accuracy and variance of the approximation. With lower temperatures ($\tau \rightarrow 0$), the samples become more discrete but have a high variance of gradients. Alternatively, high temperatures ($\tau \rightarrow \infty$) result in smooth variables while enjoying a low variance of gradients. Fortunately, the experimental results in the last row of \cref{fig:sensitive_analysis} show that our model is quite robust with varying temperature values. It can be seen that the performance of models drops when $\tau = 1$, but the variance of performances is mild. We set $\tau = 0.1$ in the experiments.

\begin{figure}[!t]
	\centering
		\begin{minipage}[t]{0.95\linewidth}
			\centering
			\includegraphics[width=1.15in]{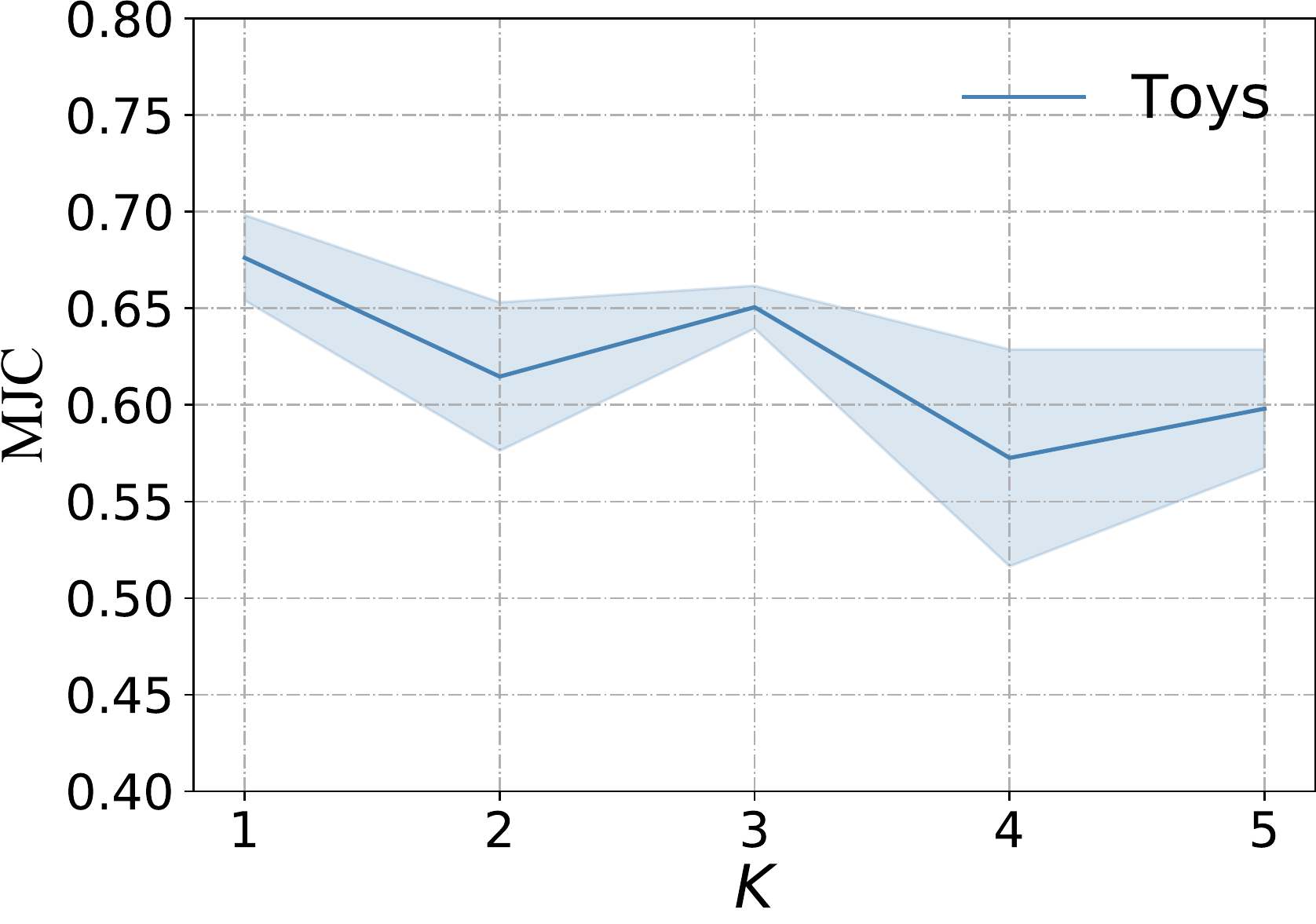}
			\includegraphics[width=1.15in]{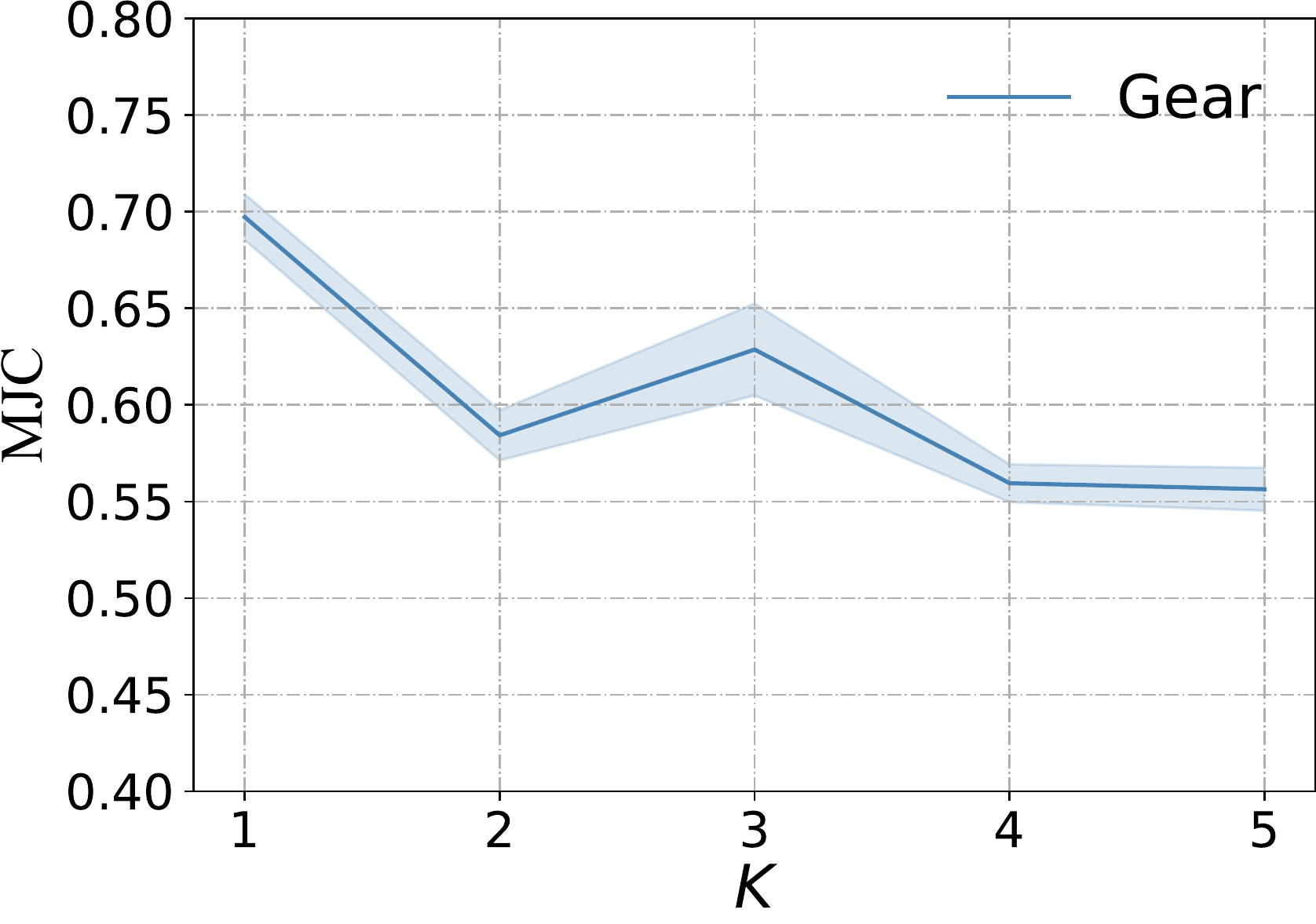}
			\includegraphics[width=1.15in]{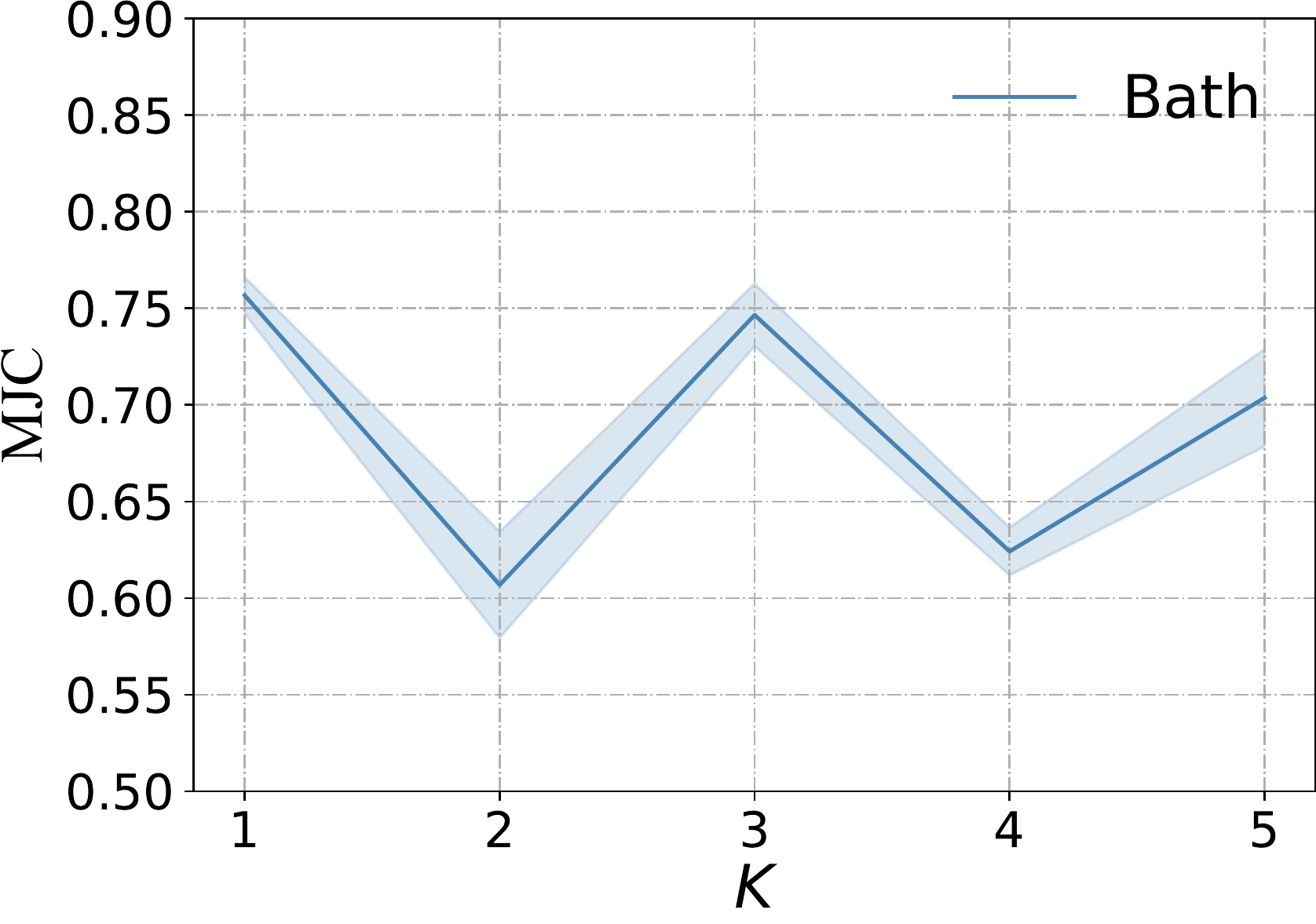}
			\includegraphics[width=1.15in]{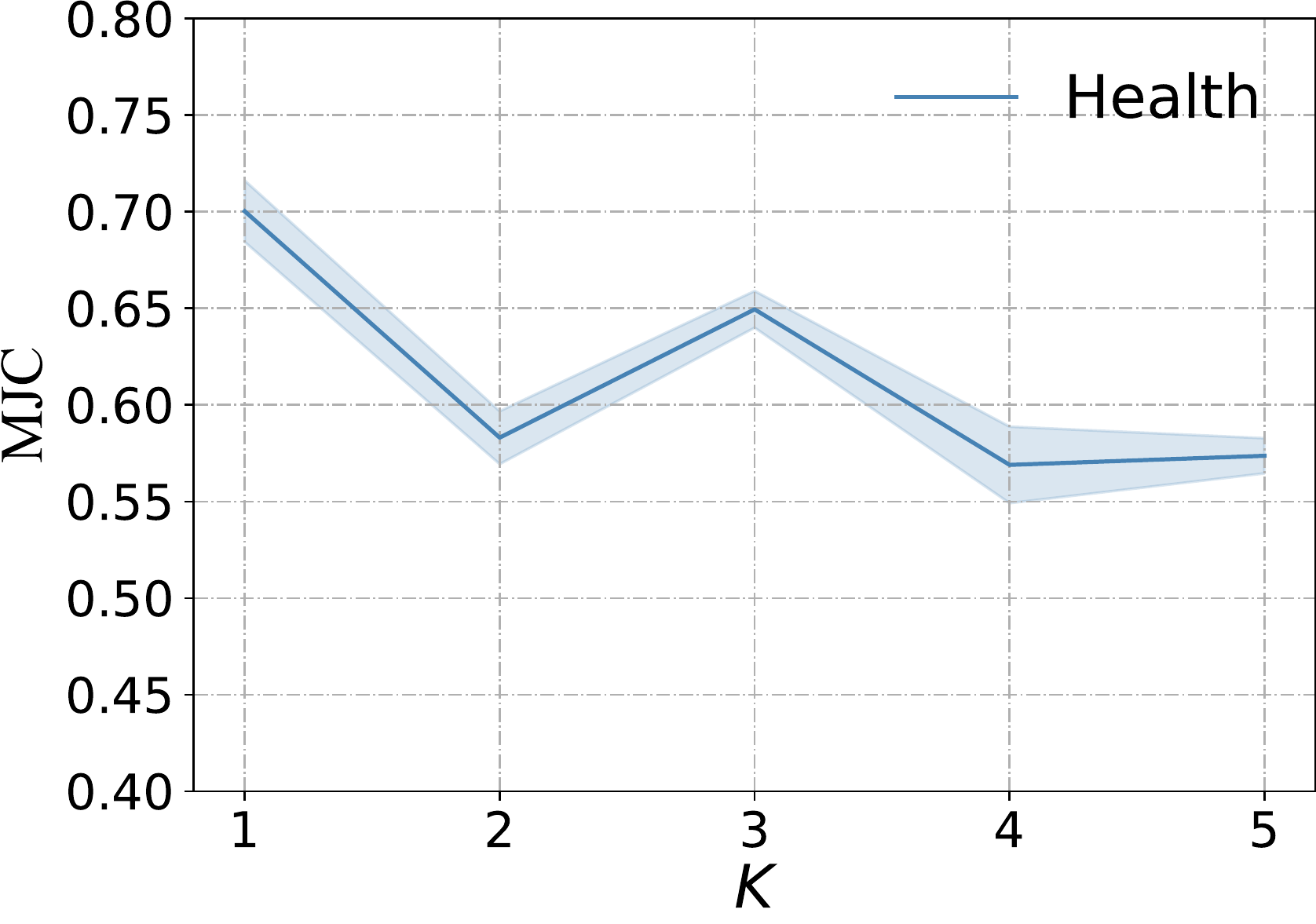}
			\includegraphics[width=1.15in]{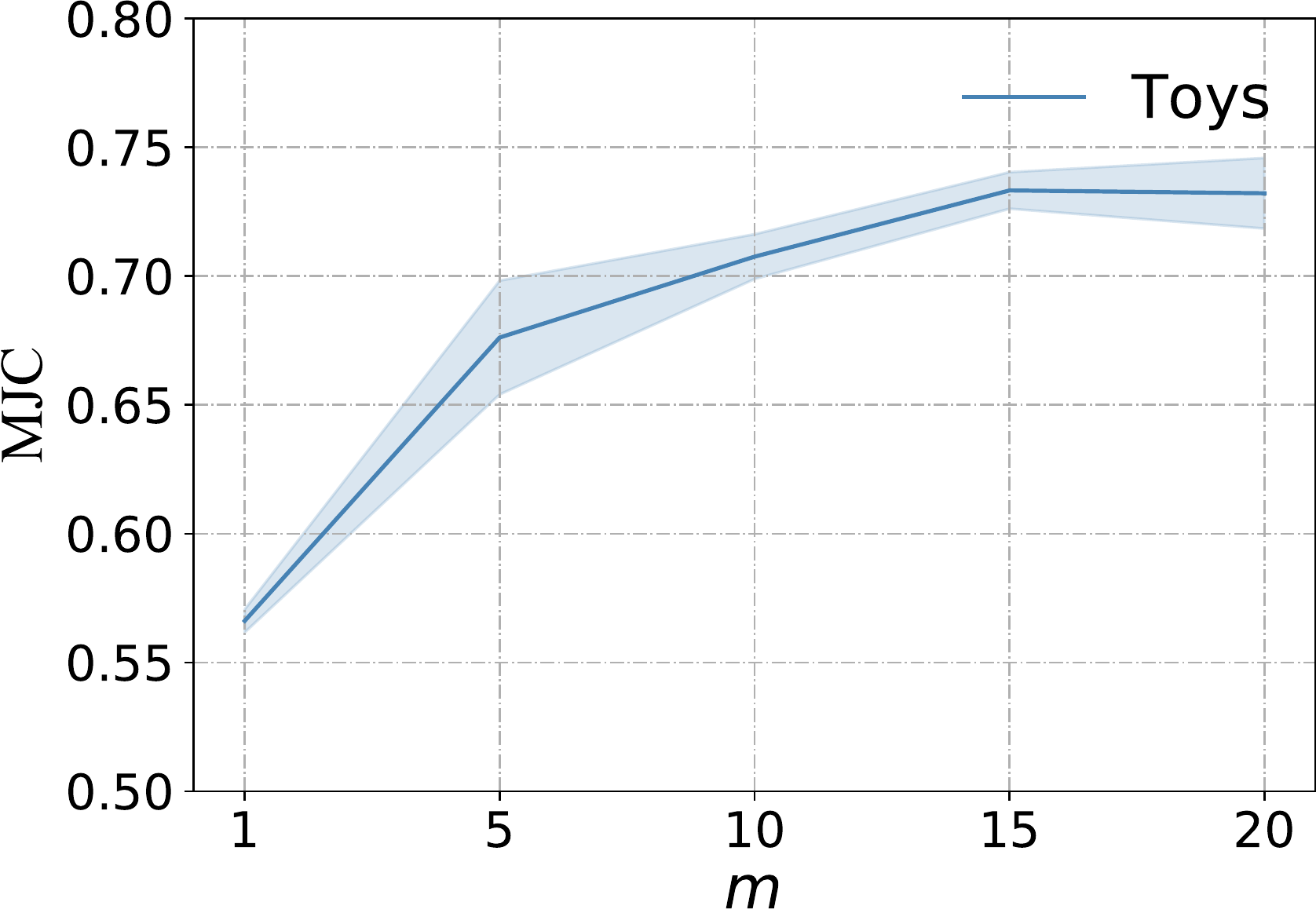}
			\includegraphics[width=1.15in]{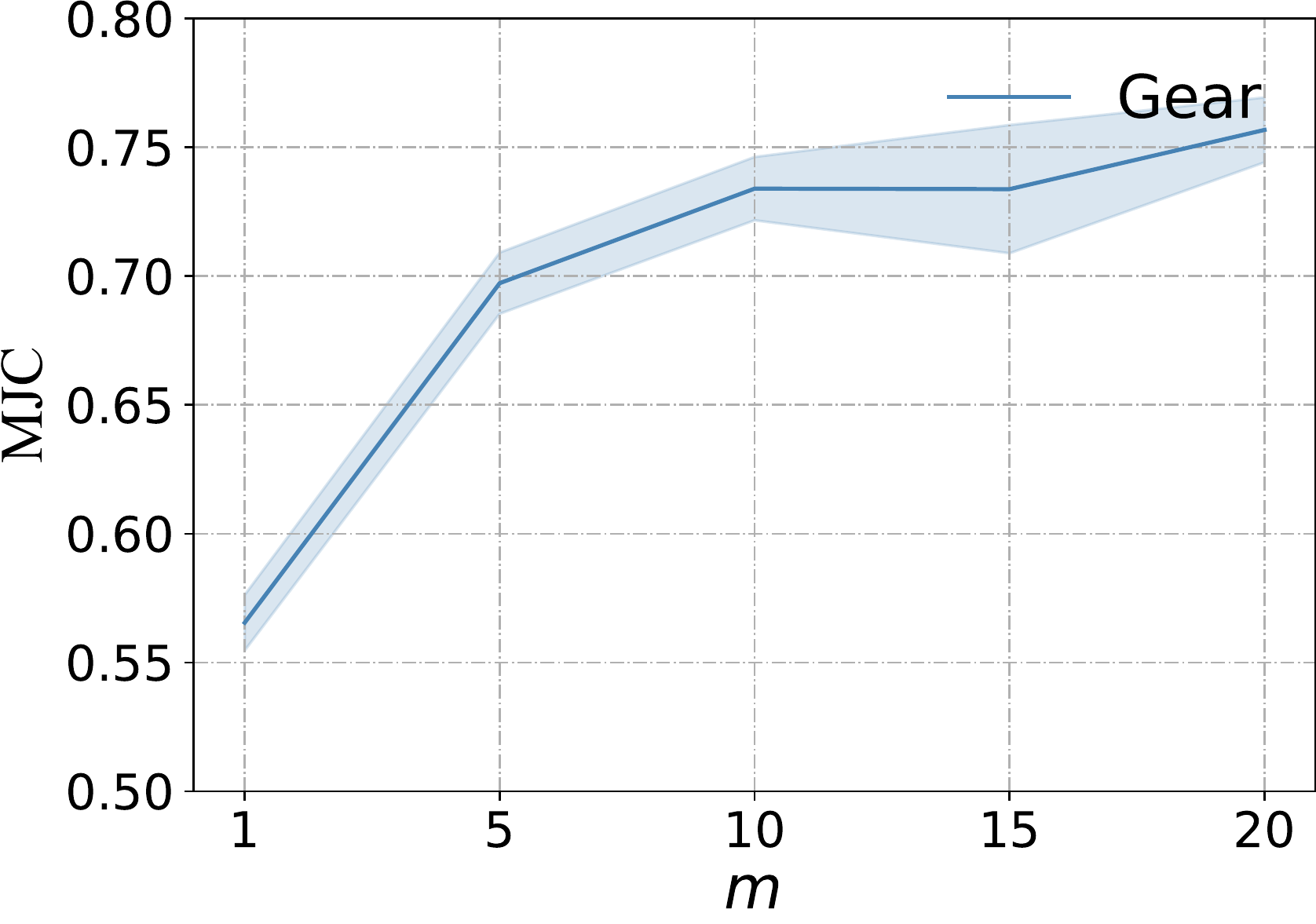}
			\includegraphics[width=1.15in]{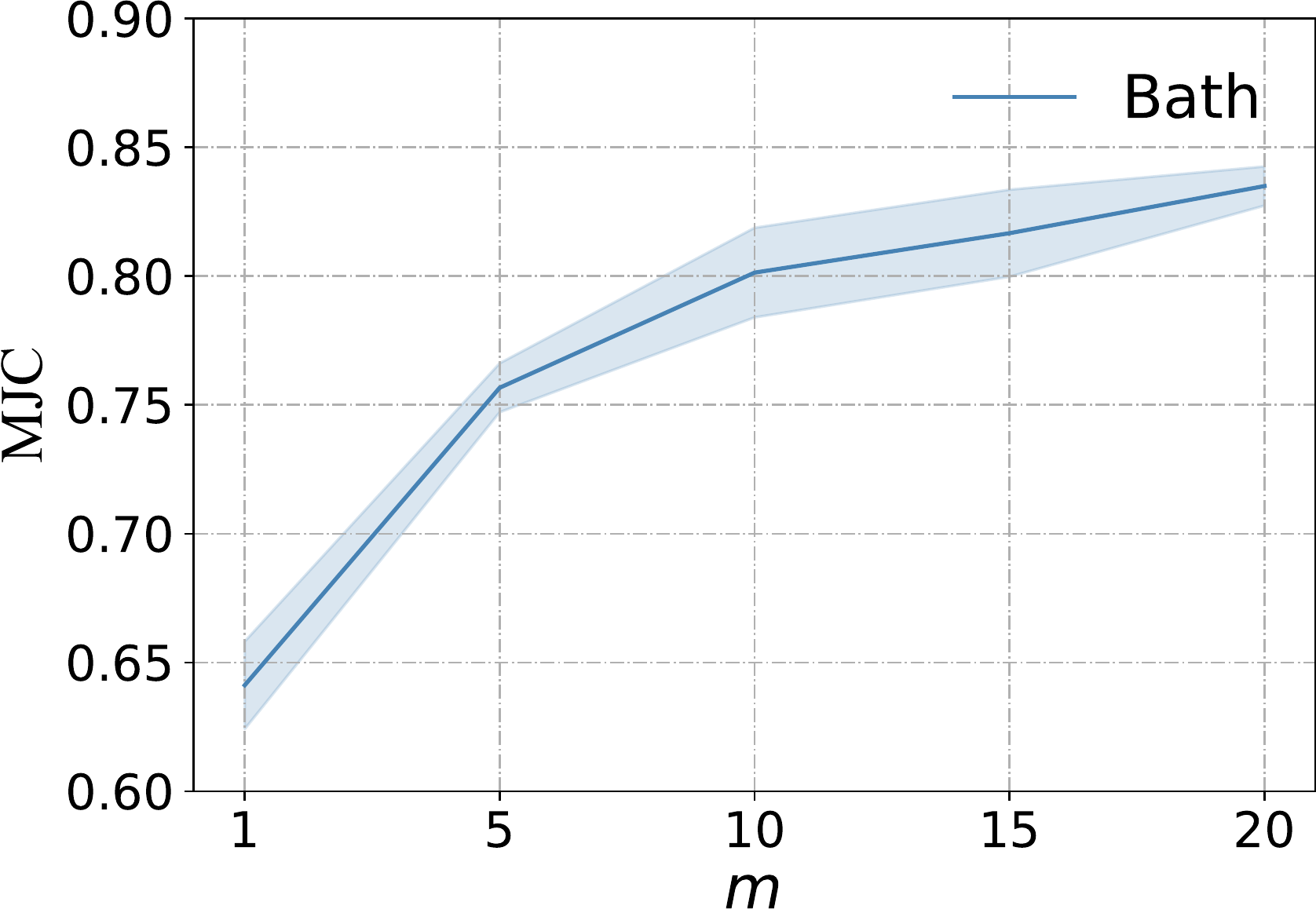}
			\includegraphics[width=1.15in]{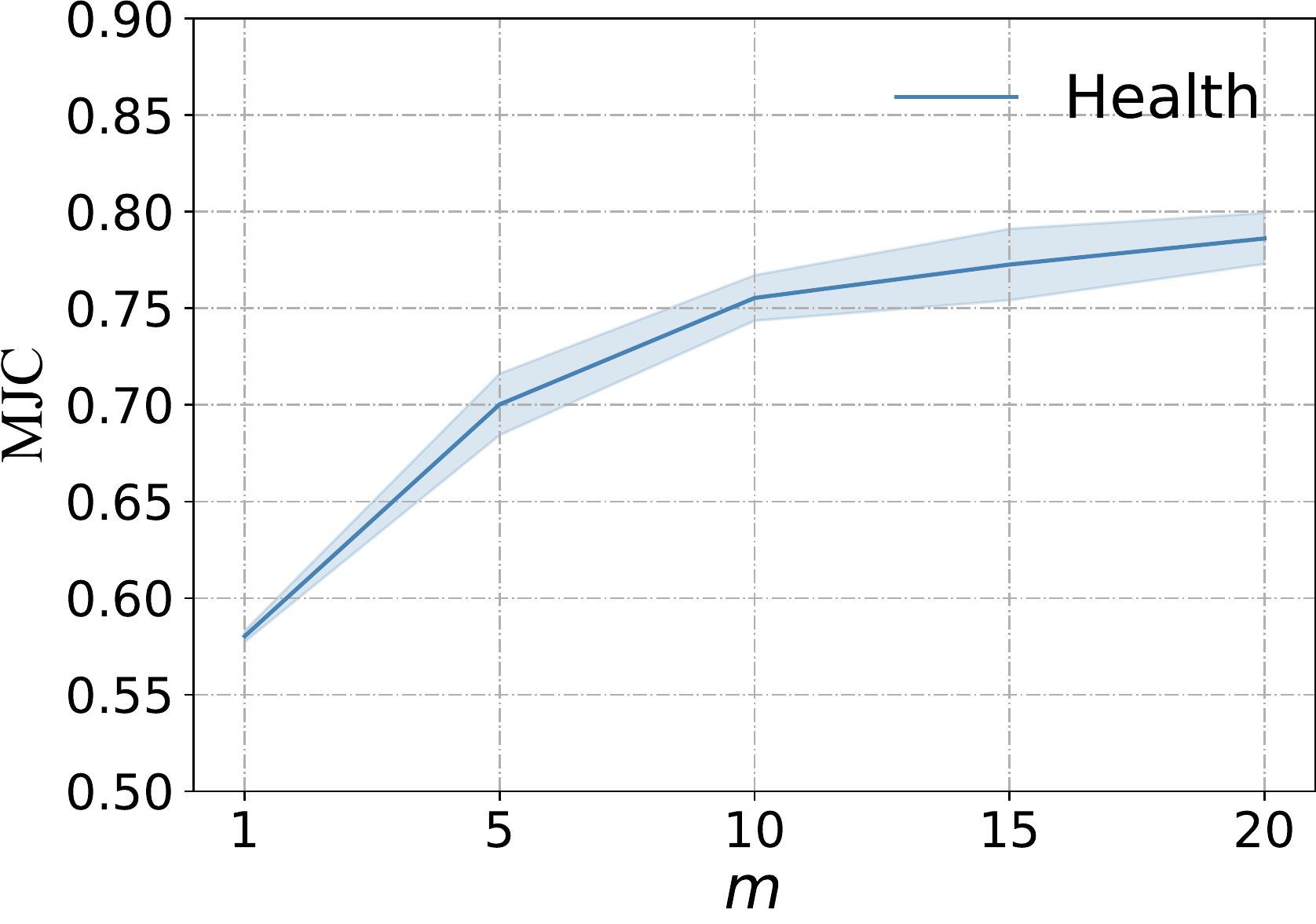}
			\includegraphics[width=1.15in]{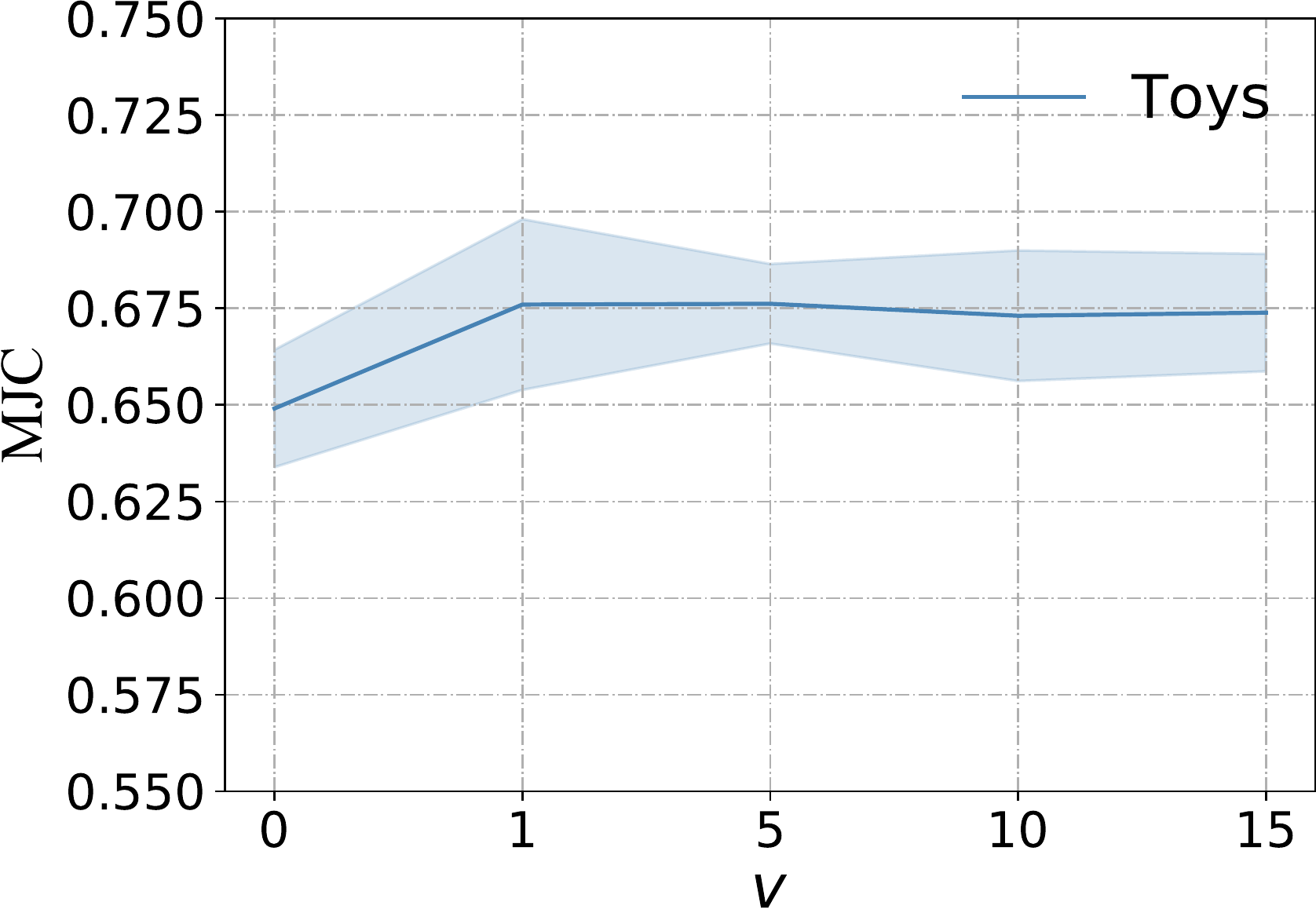}
			\includegraphics[width=1.15in]{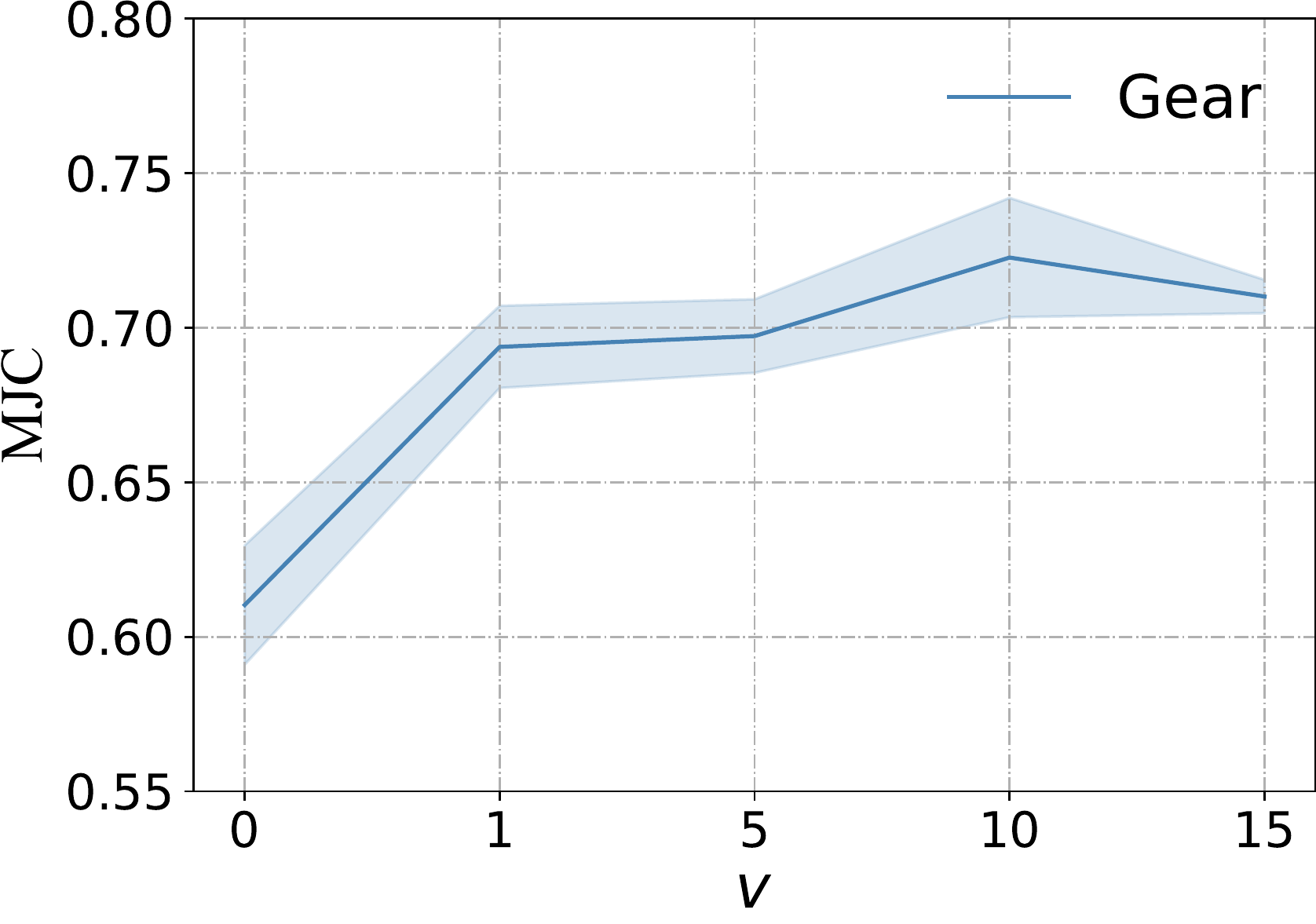}
			\includegraphics[width=1.15in]{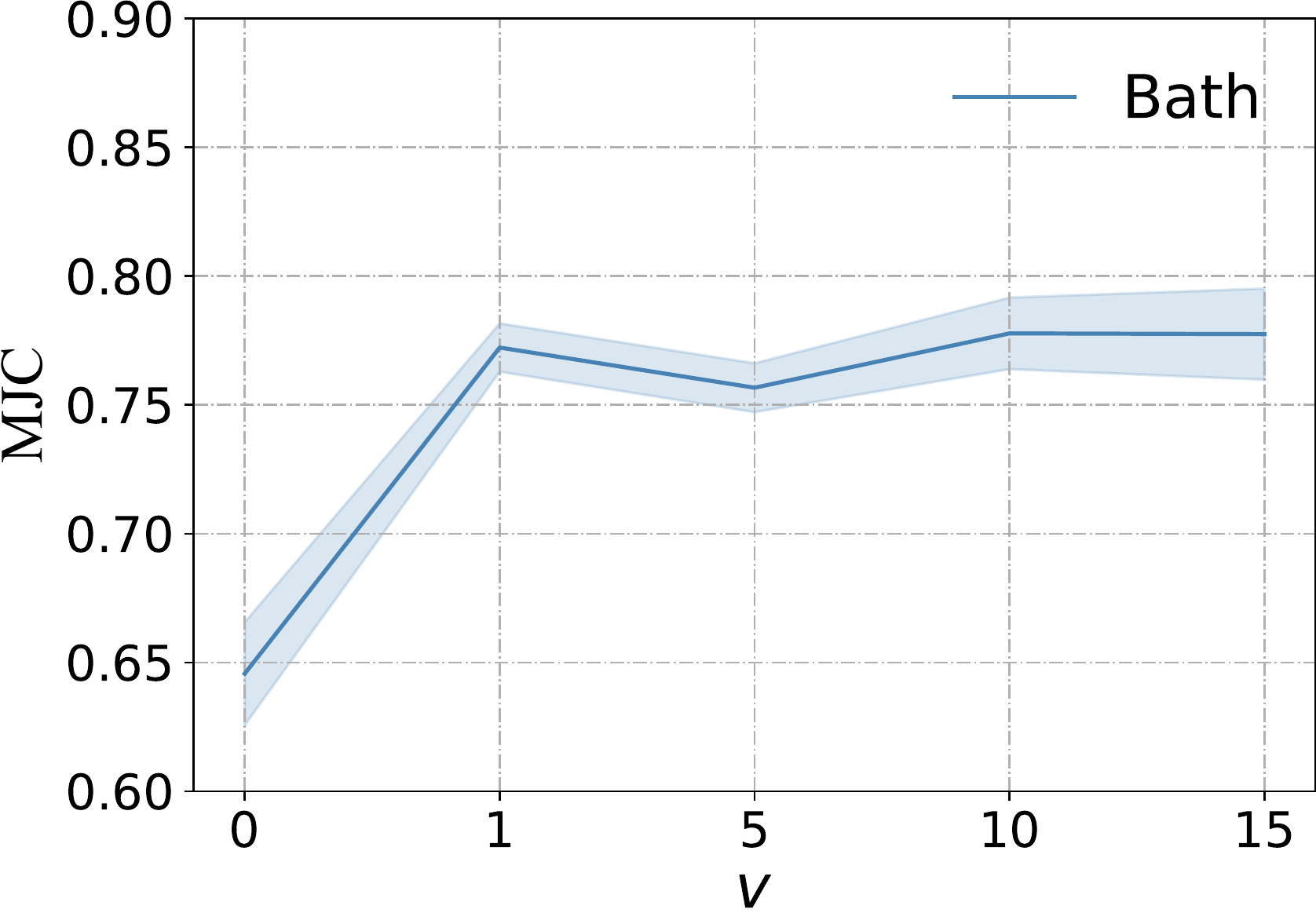}
			\includegraphics[width=1.15in]{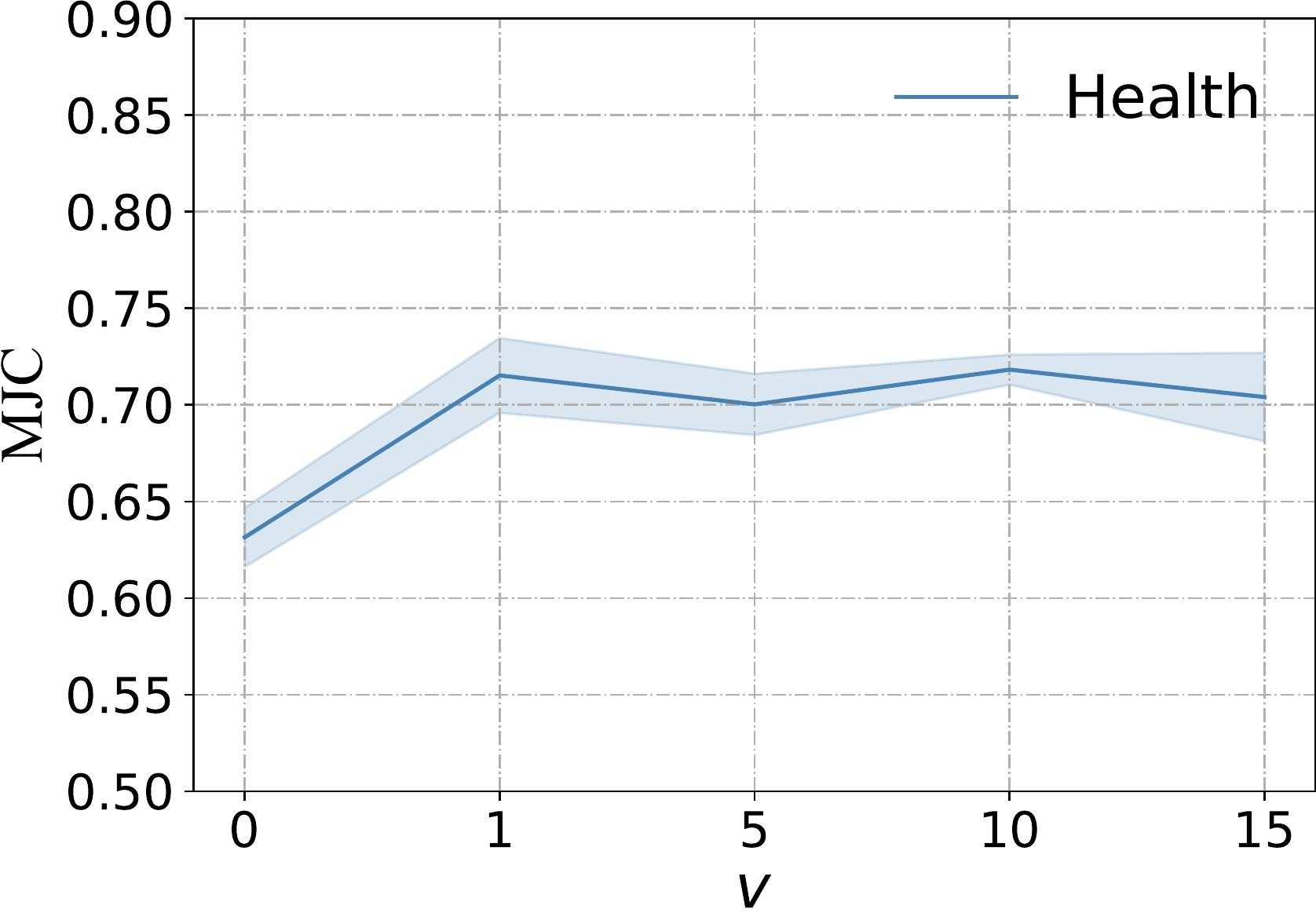}
			\includegraphics[width=1.15in]{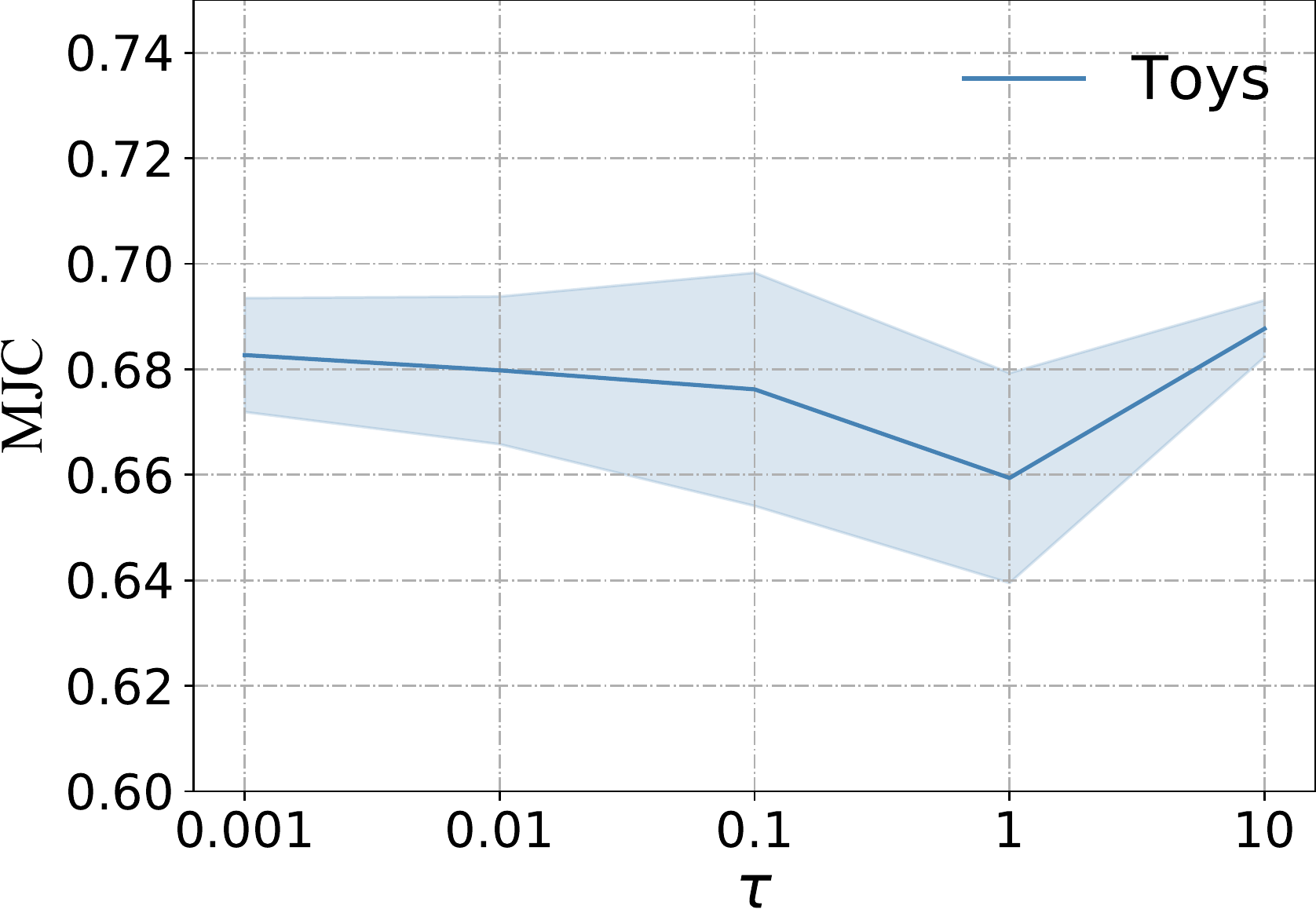}
			\includegraphics[width=1.15in]{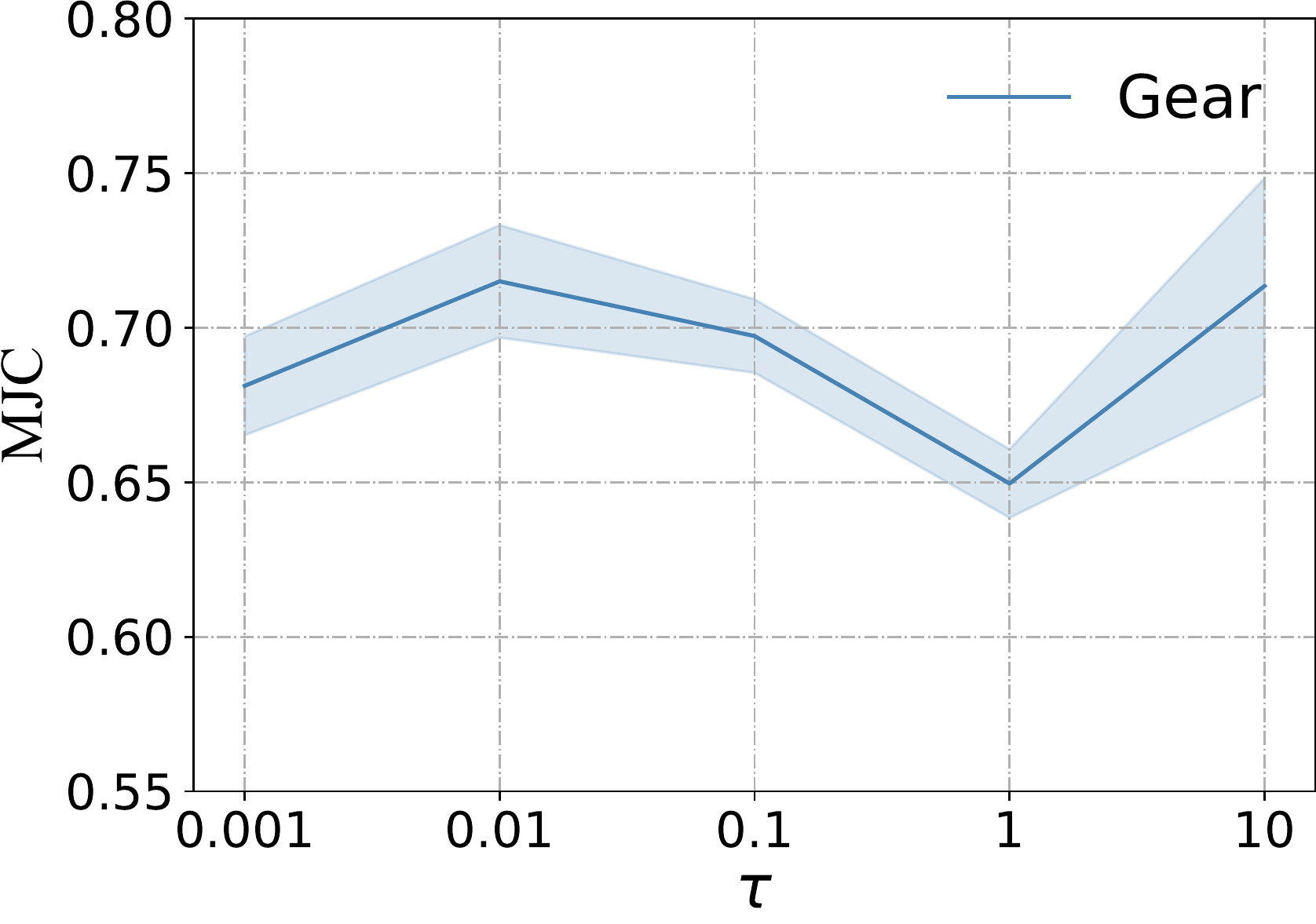}
			\includegraphics[width=1.15in]{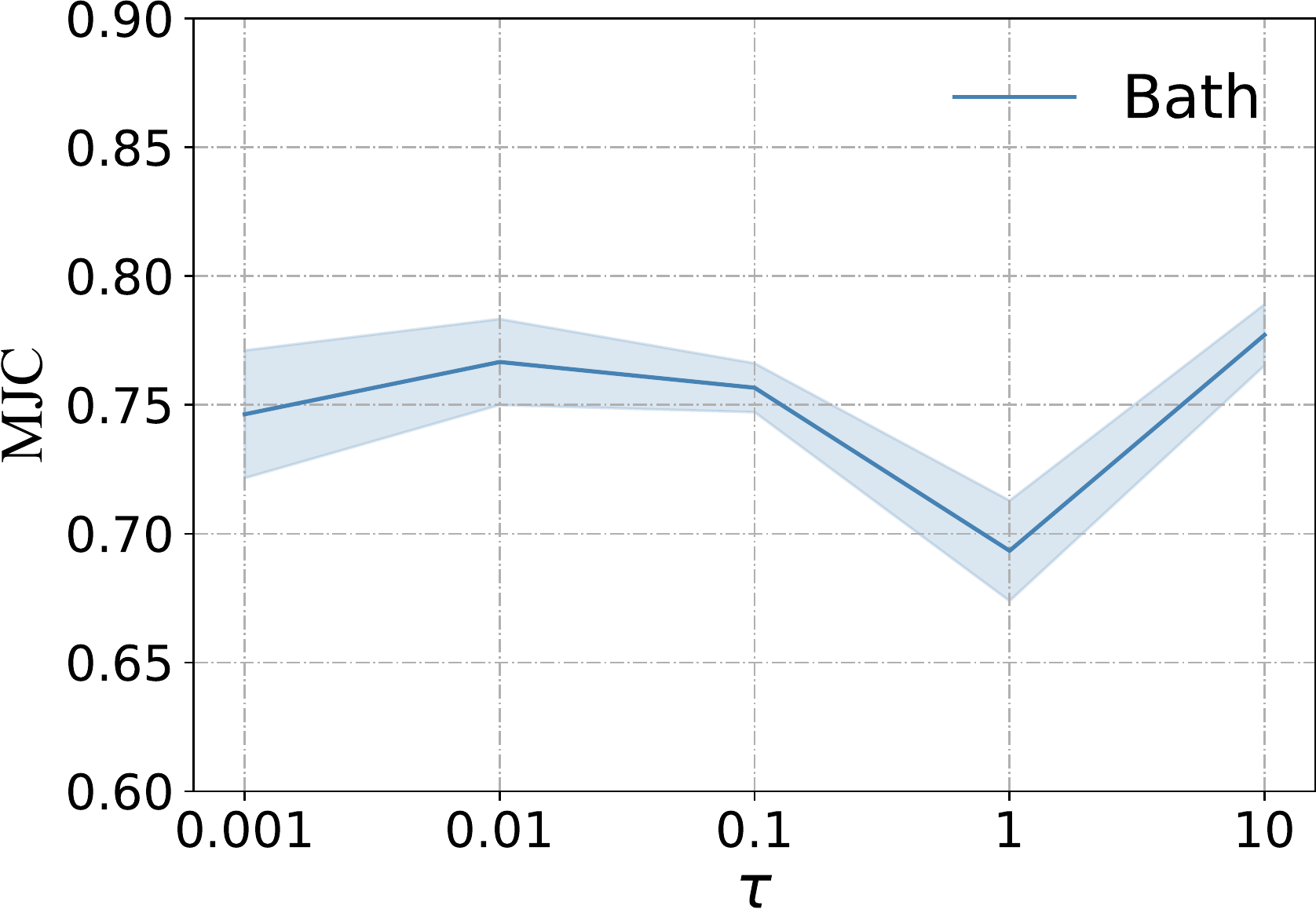}
			\includegraphics[width=1.15in]{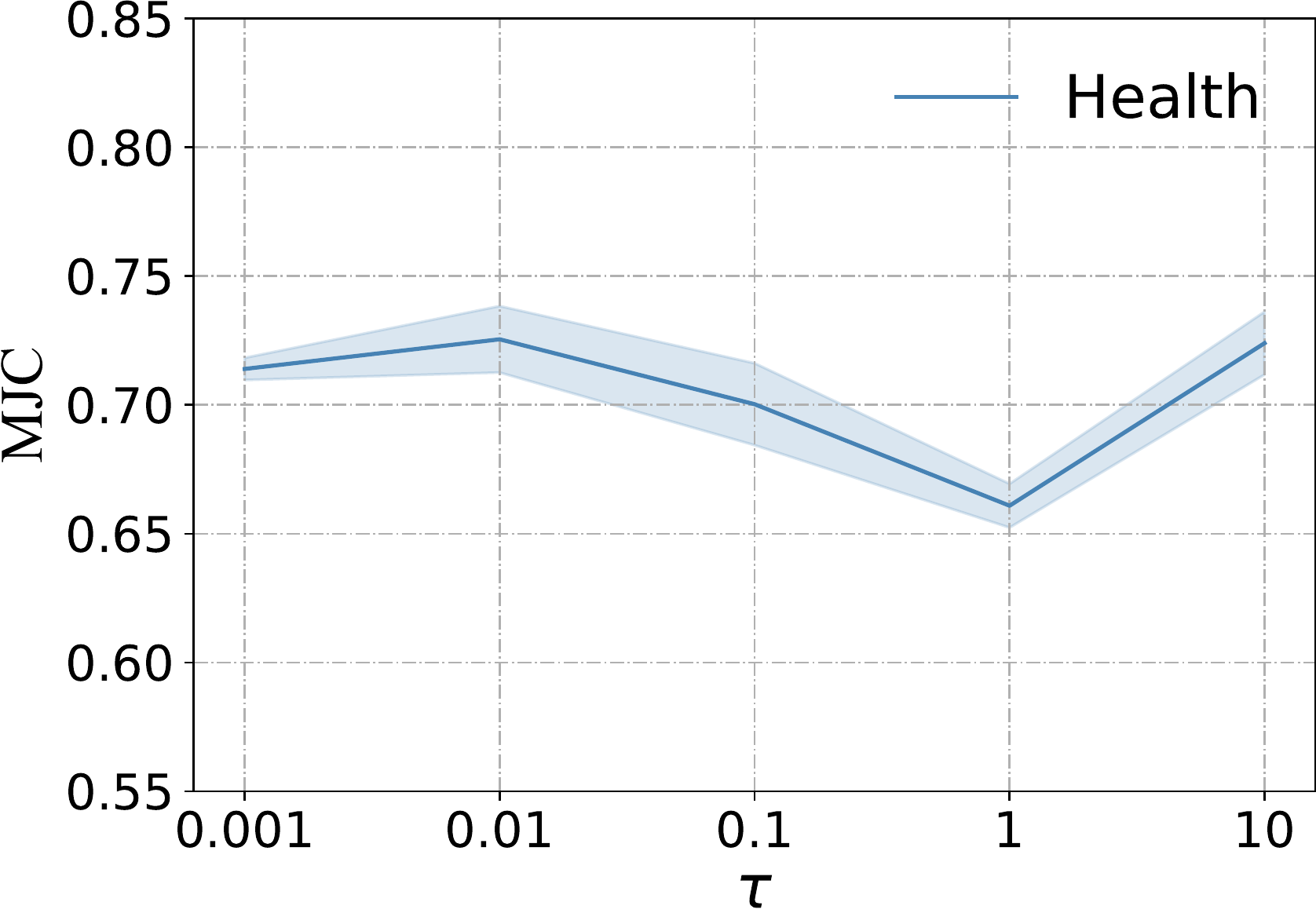}
	        \caption{ Sensitivity analysis of performance of \ourscopula under different hyperparameters (from top to bottom: the number of mean field iteration step $K$, the number of MC sampling $m$, the rank of lower-rank perturbation $v$, and the temperature of Gumbel-Softmax trick $\tau$).}
	        \label{fig:sensitive_analysis}
		\end{minipage}
\end{figure}

\end{document}